\documentclass{article}
\usepackage{iclr2022_conference,times}
\iclrfinalcopy
\usepackage[utf8]{inputenc} 
\usepackage[T1]{fontenc}    
\definecolor{link-c}{RGB}{128,128,128}
\usepackage[colorlinks,citecolor=blue,urlcolor=link-c,linkcolor=blue, bookmarks=false]{hyperref}       
\usepackage{url}            
\usepackage{booktabs}       
\usepackage{amsfonts}       
\usepackage{nicefrac}       
\usepackage{microtype}      

\usepackage{graphicx}
\usepackage{sidecap}
\usepackage{wrapfig}
\usepackage{tikz-cd}
\usepackage{amssymb}
\usepackage{amsmath}
\usepackage{amsthm}
\usepackage{soul}
\usepackage{tikz}
\usepackage{enumitem}
\usepackage{algorithm}
\usepackage{algorithmic}
\usepackage{caption}
\usepackage{subcaption}
\usepackage{siunitx}
\usepackage{comment}
\usepackage{todonotes}
\usepackage{comment}
\usepackage{multirow}
\usepackage{makecell}

\usepackage{pgf, tikz}
\usetikzlibrary{positioning}
\usetikzlibrary{arrows, automata}
\usepackage{caption}
\usepackage{subcaption}
\usetikzlibrary{shapes,snakes}
\usepackage{wrapfig}

\DeclareMathOperator*{\argmax}{arg\,max}

\usepackage{listings}
\lstset{%
	basicstyle={\footnotesize\ttfamily},
	numbers=left,numberstyle=\footnotesize,xleftmargin=2em,
	aboveskip=0pt,belowskip=0pt,%
	showstringspaces=false,tabsize=2,breaklines=true}
\floatstyle{ruled}
\newfloat{listing}{tb}{lst}{}
\floatname{listing}{Listing}

\usetikzlibrary{fit,positioning}

\newcommand\independent{\protect\mathpalette{\protect\independenT}{\perp}}
\def\independenT#1#2{\mathrel{\rlap{$#1#2$}\mkern2mu{#1#2}}}

\newcommand{\dependent}{ \not\!\perp\!\!\!\perp}

\newcommand{\thetaSp}{\boldsymbol{\theta}^{min}_k}

\newcommand{\Vector}[1]{\mathbf{#1}}

\newcommand{\degree}{^\circ}

\newcommand{\veryshortarrow}[1][3pt]{\mathrel{%
   \hbox{\rule[\dimexpr\fontdimen22\textfont2-.2pt\relax]{#1}{.4pt}}%
   \mkern-4mu\hbox{\usefont{U}{lasy}{m}{n}\symbol{41}}}}

\makeatletter

\setbox0\hbox{$\xdef\scriptratio{\strip@pt\dimexpr
    \numexpr(\sf@size*65536)/\f@size sp}$}
    
\newcommand{\scriptveryshortarrow}[1][3pt]{{%
    \hbox{\rule[\scriptratio\dimexpr\fontdimen22\textfont2-.2pt\relax]
               {\scriptratio\dimexpr#1\relax}{\scriptratio\dimexpr.4pt\relax}}%
   \mkern-4mu\hbox{\let\f@size\sf@size\usefont{U}{lasy}{m}{n}\symbol{41}}}}

\newtheorem{theorem}{Theorem}

\newtheorem{lemma}{Lemma}

\newtheorem{definition}{Definition}
\newtheorem{proposition}{Proposition}

\usepackage{enumitem}
\setitemize{noitemsep,topsep=0pt,parsep=1pt,partopsep=1pt}

\title{AdaRL: What, Where, and How to Adapt in \\ Transfer Reinforcement Learning}

\author{%
  Biwei Huang \\
  Carnegie Mellon University\\
  \texttt{biweih@andrew.cmu.edu} \\
  \And
  Fan Feng \\
  City University of Hong Kong\\
  \texttt{ffeng1017@gmail.com}\\
  \And
  Chaochao Lu\\
  University of Cambridge, Max Planck Institute for Intelligent Systems\\
  \texttt{cl641@cam.ac.uk}\\
  \And
  Sara Magliacane\\
  University of Amsterdam,
  MIT-IBM Watson AI Lab\\
  \texttt{sara.magliacane@gmail.com}
  \And
  Kun Zhang\\
   Carnegie Mellon University \\
   \texttt{kunz1@cmu.edu}
}

\begin{document}

\maketitle
\begin{abstract}
One practical challenge in reinforcement learning (RL) is how to make quick adaptations when faced with new environments. In this paper, we propose a principled framework for adaptive RL, called \textit{AdaRL}, that adapts reliably and efficiently to changes across domains with a few samples from the target domain, even in partially observable environments. Specifically, we leverage a parsimonious graphical representation that characterizes structural relationships over variables in the RL system. Such graphical representations provide a compact way to encode what and where the changes across domains are, and furthermore inform us with a minimal set of changes that one has to consider for the purpose of policy adaptation. We show that by explicitly leveraging this compact representation to encode changes, we can efficiently adapt the policy to the target domain, in which only a few samples are needed and further policy optimization is avoided. We illustrate the efficacy of AdaRL through a series of experiments that vary factors in the observation, transition and reward functions for Cartpole and Atari games~\footnote{Code link: \href{https://github.com/Adaptive-RL/AdaRL-code}{https://github.com/Adaptive-RL/AdaRL-code}}.


\end{abstract}

\section{Introduction and Related Work}
\label{Sec: Introduction}

Over the last decades, reinforcement learning (RL) \citep{sutton.barto:reinforcement} has been successful in many tasks \citep{mnih-atari-2013,Silver2016MasteringTG}. Most of these early successes focus on a fixed task in a fixed environment. However, in real applications we often have changing environments, and it has been demonstrated that the optimal policy learned in a specific domain may not be generalized to other domains \citep{TransRL_survey_09}.  
In contrast, humans are usually good at transferring acquired knowledge to new environments and tasks both efficiently and effectively \citep{PearlMackenzie18}, thanks to the ability to understand the environments. Generally speaking, to achieve reliable, low-cost, and interpretable transfer, it is essential to understand the underlying process---which decision-making factors have changes, where the changes are, and how they change, instead of transferring blindly (e.g., transferring the distribution of high-dimensional images directly). 

There are roughly two research lines in transfer RL \citep{TransRL_survey_09, TransRL_survey_20}: (1) finding policies that are robust to environment variations, and (2) adapting policies from the source domain to the target domain as efficiently as possible. For the first line, the focus is on learning policies that are robust to environment variations, e.g., by maximizing a risk-sensitive objective over a distribution of environments \citep{CVaR_17} or by extracting a set of invariant states \citep{TRL_invariant20, iclr2021AmyZhang, tomar2021modelinvariant}. A more recent method encodes task-relevant invariances by putting behaviorally equivalent states together, which helps better generalization \citep{PSM}. On the other hand, with the increase of the number of domains, the common part may get even smaller, running counter to the intention of collecting more information with more domains.  Moreover, focusing only on the invariant part and disregarding domain-specific information may not be optimal; for instance, in the context of domain adaptation, it has been demonstrated that the variable part also contains information helpful to improve prediction accuracy \citep{DA_Zhang20}. 


In this paper, we propose a method along the second line, adapting source policies to the target. Approaches along this line adapt knowledge from source domains and reuse it in the target domain to improve data efficiency, i.e., in order for the agent to require fewer explorations to learn the target policy. For example, an agent could use importance reweighting on samples $\langle s, a, r, s' \rangle$ from sources \citep{TRL_IS18, TRL_IS19} or start from the optimal source policy to initialize a learner in the target domain, as a near-optimal initializer \citep{TRL_07, TRL_10}. Another widely-used technique is finetuning: a model is pretrained on a source domain and the output layers are finetuned via backpropagation in the target domain \citep{Finetuning_Hinton06, Finetuning_Mesnil12}. 
PNNs \citep{rusu2016progressive}, instead, retain a pool of pretrained models and learn lateral connections from them to extract useful features for a new task. 
Moreover, a set of approaches focus on sim2real transfer by adapting the parameters \citep{Yu2017, Peng20}.   
However, many of these approaches still require a large amount of explorations and optimization in the target domain. 

\begin{figure}
\centering
\includegraphics[width=.95\textwidth]{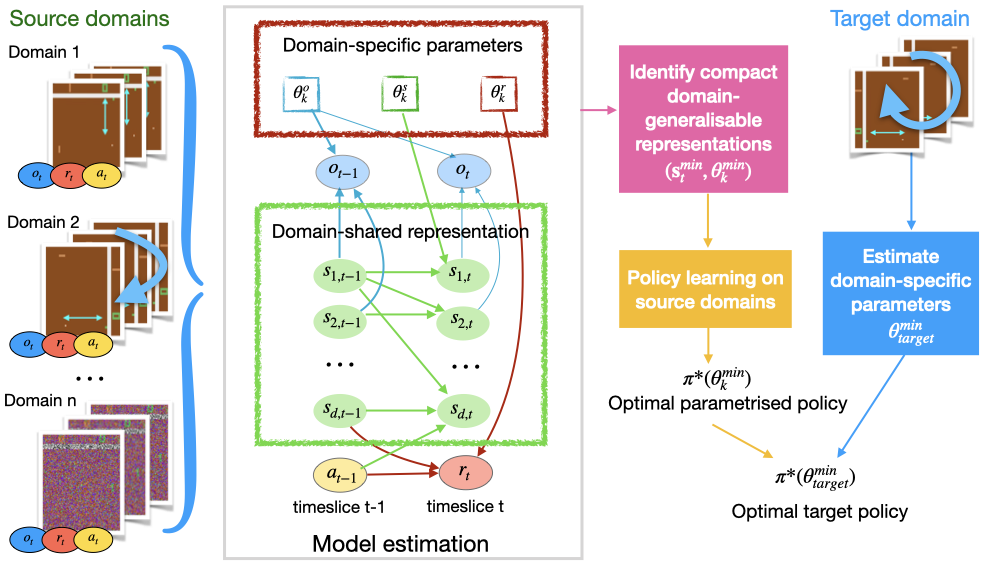}
\caption{The overall AdaRL framework. We learn a Dynamic Bayesian Network (DBN) over the observations, latent states, reward, actions and domain-specific change factors that is shared across the domains.
We then characterize a minimal set of representations that suffice for policy transfer, so that
we can quickly adapt the optimal source policy with only a few samples from the target domain.
}\label{fig:diagram}
\end{figure}

Recently, meta-RL approches such as MAML \citep{MAML_17}, PEARL \citep{PEARL_ICML19}, CAVIA \citep{CAVIA_19}, Meta-Q learning \citep{MetaQ_20}, and others  \citep{GMPS_19, metaRL_Nagabandi18,RL2_16} have been successfully applied to learn an inductive bias that accelerates the learning of a new task by training on a large number of tasks. 
Some of these methods (e.g., CAVIA and PEARL), as well as some prior work (e.g., HiMDPs \citep{doshi2016hidden}) and recent follow-ups \citep{zhang2021learning}, have a similar motivation to our work: in a new environment not all parameters need to be updated, so we can force the model to only adapt a set of \emph{context parameters}.  
However, these methods mostly focus on MDPs (except the Block MDP assumption in \citet{zhang2021learning}) and model all changes as a black-box, which may be less efficient for adaptation, as opposed to a factorized representation of change factors.

Considering these limitations, we propose AdaRL, a transfer RL approach that achieves low-cost, reliable, and interpretable transfer for partially observable Markov decision processes (POMDPs), with MDPs as a special case. 
In contrast to state-of-the-art approaches, we learn a parsimonious graphical representation that is able to characterize structural relationships among different dimensions of states, change factors, the perception, the reward variable, and the action variable. It allows us to model changes in transition, observation, and reward functions in a component-wise way.
This representation is related to Factored MDPs \citep{DBLP:conf/ijcai/KearnsK99,FactoredMDP_2000,factorMDP_07} and Factored POMDPs \citep{factoredPOMDP}, but augmented with change factors that represent a low-dimensional embedding of the changes across domains. Our main motivation is that distribution shifts are usually 
localized -- they are often due to the changes of only a few variables in the generative processes, so we can just adapt the distribution of a small portion of variables \citep{CDNOD_Huang20, Representation_21} and, furthermore, factorized according to the graph structure, each distribution module can be adapted separately \citep{survey_Scholkopf20, DA_Zhang20}. 

In Fig.~\ref{fig:diagram} we give a motivating example and a general description of AdaRL. In this example, we consider learning policies for Pong \citep{ALE_2013} that can easily generalize to different rotations $\omega$ and to images corrupted with white noise. Specifically, given data from $n$ source domains with different rotations and noise variances, we learn a parsimonious latent state representation shared by all domains, denoted by $\Vector{s}_t$, and characterize the changes across domains by a two-dimensional factor $\boldsymbol{\theta}_k$. 
We identify a set of minimal sufficient representations  $(\Vector{s}^{min}_t, \boldsymbol{\theta}^{min}_k)$ for policy transfer. For instance, here only the rotation factor $\omega$ needs adapting (i.e., $\boldsymbol{\theta}^{min}_k = \omega_k$), since the noise factor does not affect the optimal policy. Similarly, as we will show formally in the rest of the paper, not all components $s_{i,t}$ of the state vector $\Vector{s_t}$ are necessary for policy transfer. 
For example, $s_{2,t} \not\in \Vector{s}^{min}_t$, since it never affects the future reward. We learn an optimal policy $\pi^* (\cdot | \boldsymbol{\theta}^{min}_k)$ on source domains. In the target domain, we only need a few samples to quickly estimate the value of the low-dimensional $\boldsymbol{\theta}^{min}_{\text{target}}$, and then we can apply $\pi^* (\cdot | \boldsymbol{\theta}^{min}_{target})$ directly. 
Our main contributions are summarized below:
\begin{itemize} [topsep=2pt,leftmargin=15pt]
    \item We assume a generative environment model, which explicitly takes into account the structural relationships among variables in the RL system. Such graphical representations provide a compact way to encode what and where the changes across domains are.
    \item Based on this model, we characterize a minimal set of representations that suffice for policy learning across domains, including the domain-specific change factors and domain-shared state representations. With this characterization, we  adapt the policy with only a few target samples and without policy optimization in the target domain, achieving low-cost and reliable policy transfer.
    \item By leveraging a compact way to encode the changes, we also benefit from multi-task learning in model estimation. In particular, we propose the \underline{M}ult\underline{i}-model \underline{S}tructured \underline{S}equential \underline{V}ariational \underline{A}uto-\underline{E}ncoder (MiSS-VAE) for reliable model estimation in general cases. 
\end{itemize}

\section{A Compact Representation of Environmental Shifts}
\label{sec:representation}

Suppose there are $n$ source domains and $n'$ target domains. In each source domain, we observe sequences $\{\langle o_t, a_t, r_t  \rangle\}_{t=1}^T$, where $o_t \in \mathcal{O}$ are the perceived signals at time $t$ (e.g., images), $a_t \in \mathcal{A}$ is the executed action, and $r_t \in \mathcal{R}$ is the reward signal. We denote the underlying latent states by $\Vector{s}_t = (s_{1,t},\cdots,s_{d,t})^\top$, where $d$ is the dimensionality of latent states. 
We assume that the generative process of the environment in the $k$-th domain (with $k=1, \dots n+n'$) can be described in terms of the transition function for each dimension of $\Vector{s}$ and the observation and reward functions as
\begin{equation}
\left \{
 \begin{array}{lll}
    s_{i,t} &=& f_i(\Vector{c}^{\Vector{s} \veryshortarrow \Vector{s}}_i \odot \Vector{s}_{t-1}, c^{a \veryshortarrow \Vector{s}}_i \cdot a_{t-1}, \Vector{c}^{\theta_{k} \veryshortarrow \Vector{s}}_i \odot \boldsymbol{\theta}_k^{\mathbf{s}}, \epsilon^s_{i,t}),\  \mathrm{ for } \ i = 1,\cdots, d,\\
    o_t &=& g(\Vector{c}^{\Vector{s} \veryshortarrow o} \odot \Vector{s}_t, c^{\theta_{k} \veryshortarrow o} \cdot \theta_{k}^o, \epsilon^o_t), \\
    r_{t} &=& h(\Vector{c}^{\Vector{s} \veryshortarrow r} \odot \Vector{s}_{t-1}, c^{a \veryshortarrow r} \cdot a_{t-1}, c^{\theta_{k} \veryshortarrow r} \cdot \theta_{k}^r, \epsilon^r_{t}), \\
 \end{array}\right.
 \label{Eq: model2}
\end{equation}
where $\odot$ denotes the element-wise product,  the $\epsilon^s_{i,t},\epsilon^o_t, \epsilon^r_{t}$ terms are i.i.d. random noises.  As explained below, $\Vector{c}^{\cdot \veryshortarrow \cdot}$ are \emph{masks} 
(binary vectors or scalars that represent structural relationships from one variable to the other), and $\boldsymbol{\theta_{k}} = (\Vector{\theta}_k^{\mathbf{s}}, \theta_{k}^o$, $\theta_{k}^r)$ are the change factors that have a constant value in each domain, 
but vary across domains in the transition, observation, and reward function, respectively. 
The latent states $\Vector{s}_{t+1}$ form an MDP: given $\Vector{s}_{t}$ and $a_{t}$, $\Vector{s}_{t+1}$ is independent of previous states and actions. 
The perceived signals $o_t$ are generated from the underlying states $\Vector{s}_t$.
The actions $a_{t}$ directly influence the latent states $\Vector{s}_{t+1}$, instead of the observed signals $o_t$, and the reward is determined by the latent states and the action. 
Eq.~\ref{Eq: model2} can also represent MDPs as a special case if states $\Vector{s_t}$ are directly observed, in which case the observation function of $o_t$ is not needed.

\textbf{Structural relationships and graphs. }Often the action variable $a_{t-1}$ does not influence every dimension of $\Vector{s}_{t}$, and similarly, the reward $r_t$ may not be influenced by every dimension of $\Vector{s}_{t-1}$. Furthermore, there are structural relationships between different dimensions of $\Vector{s}_{t-1}$ and $\Vector{s}_{t}$. To characterize these constraints, we explicitly take into account the graph structure $\mathcal{G}$ over the variables in the system characterized by a Dynamic Bayesian Network \citep{Murphy_DBN} and encode the edges with masks $\Vector{c}^{\cdot \veryshortarrow \cdot}$. In the first equation in Eq.~\ref{Eq: model2} the transition function for the state component $s_{i}$, where the $j$th entry of $\Vector{c}^{\Vector{s} \veryshortarrow \Vector{s}}_i \in \{0,1\}^d$ is 1 if and only if $s_{j,t}$ influences $s_{i, t+1}$ (graphically represented by an edge),  while $c^{a \veryshortarrow \Vector{s}}_i \in \{0,1\}$ is 1 if and only if the action $a_t$ has any effect on $s_{i, t+1}$. Similarly, the binary vector $\Vector{c}^{\theta_{k} \veryshortarrow \Vector{s}}_i \in  \{0,1\}^p$ encodes which components of the change factor $\boldsymbol{\theta}_k^{\mathbf{s}}= (\theta_{1, k}^s, \dots, \theta_{p, k}^s)^\top$ affect $s_{i, t+1}$. The masks in the observation function $g$ and reward function $h$ have similar functions. The masks and the parameters of the functions $f$, $g$, and $h$, are invariant; all changes are encoded in $\boldsymbol{\theta}_k$. 
For simplicity of notation, we collect all the transition mask vectors in the matrices $\Vector{C}^{\Vector{s} \veryshortarrow \Vector{s}} := [\Vector{c}^{\Vector{s} \veryshortarrow \Vector{s}}_i]_{i=1}^d$ and $\Vector{C}^{\theta_{k} \veryshortarrow \Vector{s}} := [\Vector{c}^{\theta_{k} \veryshortarrow \Vector{s}}_i]_{i=1}^d$ and the scalars in the vector
$\Vector{c}^{a \veryshortarrow \Vector{s}} := [c^{a \veryshortarrow \Vector{s}}_i]_{i=1}^d$.

\textbf{Characterization of change factors in a compact way. } In practical scenarios, the environment model may change across domains. Moreover, it is often the case that given a high-dimensional input, only a few factors may change, which is known as \textit{minimal change principle} \citep{Ghassami_NIPS18} or \textit{sparse mechanism shift assumption} \citep{Representation_21}. In such a case, instead of learning the distribution shift over the high-dimensional input, thanks to the parsimonious graphical representation, we introduce a low-dimensional vector $\boldsymbol{ \theta}_{k}$ to characterize the domain-specific information in a compact way \citep{DA_Zhang20}. Specifically, $\theta_{k}^o$, $\theta_{k}^r$, and $\boldsymbol{\theta}_k^{\mathbf{s}}$ capture the change factors in the observation function, reward function, and transition dynamics, respectively; each of them can be multi-dimensional and that they are constant within each domain. 
In general, $\boldsymbol{\theta}_k$ can capture both the changes in the influencing strength and those in the graph structure, e.g., some edges may appear only in some domains. Since we assume that the structural relationships in Eq.~\ref{Eq: model2} are invariant across domains, this means that the masks $\Vector{c}^{\cdot \veryshortarrow \cdot}$ have to encode an edge even if it presents only in one domain, and furthermore, since $\boldsymbol{\theta}_k$ encodes the changes, it can switch the edge off in other domains. 
Fig.~\ref{fig:diagram} shows an example of the graphical representation of the (estimated) environment model. 
Specifically, in this example, $\theta_{k}^s$ only influences $s_{1,t}$, $a_{t-1}$ does not have an edge to $s_{1,t}$, and among the states, only $s_{d,t-1}$ has an edge to $r_t$. In this example, we consider the case when the control signals are random, so there is no edge between $\Vector{s}_t$ and $a_t$. 

\section{What, Where, and How to Adapt in RL}
We first assume that the environment model in Eq.~\ref{Eq: model2} is known (we will explain how to learn it in Sec.~\ref{sec:model_est}), and characterize which changes have an effect on the policy transfer to the target domain. 
In Eq.~{\ref{Eq: model2}}, we allow the model to change across domains, including all involved functions, and we leverage $\boldsymbol{\theta}_k$ to capture the changes in a compact way. 
The varying model implies that the optimal policy function may also vary across domains.
How can we then characterize the changes in the optimal policy function in a compact way, as we did in the model? Interestingly, we find that the change factor $\boldsymbol{\theta}_k$ and the latent state $\Vector{s}_t$ are sufficient for policy learning, but not every dimension of $\boldsymbol{\theta}_k$ or $\Vector{s}_t$ is necessary, since they may not ever have an effect on the reward, even in future steps. 
We first give the definitions of \textit{compact domain-shared representations} and \textit{compact domain-specific representations}, according to the graph structure, and we further show that they are the minimal set of dimensions that suffice for policy learning across domains (proof in Appendix).

\begin{definition}
Given the graphical representation of an environment model $\mathcal{G}$ that is encoded in the binary masks $\Vector{c}^{\cdot \veryshortarrow \cdot}$, we define recursively the representations that affect the reward in the future as:
\begin{itemize}[leftmargin=*,align=left,itemsep=0pt,topsep=0pt]
    \item \emph{compact domain-shared representations $\Vector{s}_t^{min}$:} the latent state components $s_{i,t} \in \Vector{s}_t$ that either
    \begin{itemize}
        \item have an edge to the reward in the next time-step $r_{t+1}$, i.e., $\Vector{c}^{\Vector{s} \veryshortarrow r}_i =1$, or
        \item have an edge to another state component in the next time-step $s_{j,t+1}$, i.e., $\Vector{c}^{\Vector{s} \veryshortarrow \Vector{s}}_{j,i}=1$, such that the same component at time $t$ is a compact domain-shared representation, i.e., $s_{j,t} \in \Vector{s}_t^{min}$;
    \end{itemize}  
     \item \emph{compact domain-specific representations $\boldsymbol{\theta}_k^{min}$:} the latent change factors $\theta_{i,k} \in \boldsymbol{\theta}_k$ that either:
         \begin{itemize}
        \item have an edge to the reward in the next time-step $r_{t+1}$, i.e., $\theta_{i,k} = \theta_k^r$ and $c^{\theta_{k} \veryshortarrow r}=1$, or
        \item have an edge to a state component $s_{j,t} \in \Vector{s}_t^{min}$, i.e., $\Vector{c}^{\Vector{\theta}_k \veryshortarrow s}_{j,i} =1$.
    \end{itemize}  
      \end{itemize}
\end{definition}

\begin{proposition}
  Under the assumption that the graph $\mathcal{G}$ is Markov and faithful to the measured data, the union of {compact domain-specific} $\boldsymbol{\theta}_k^{min}$ and {compact shared representations} $\textbf{s}_t^{min}$ are the minimal and sufficient dimensions for policy learning across domains. 
  \label{prop:generalizable}
\end{proposition}
For the example in Fig.~\ref{fig:diagram}, $\Vector{s}_t^{min} = (s_{1,t}, s_{d,t})$ and $\thetaSp = \{\theta_k^{s}, \theta_k^{r }\}$. 
Note that $\theta_k^o$ is never in $\thetaSp$, and thus if only the observation function changes, the optimal policy function $\pi_k^{*}$ remains the same across domains. For example in Cartpole a change of color does not affect the optimal policy. 
Moreover, if $c^{\theta_{k} \veryshortarrow r}=1$, then $\theta^r_k \in \thetaSp$, which is the case for multi-task learning. 

\subsection{Simultaneous Estimation of Domain-Varying Models}
\label{sec:model_est}
In this section, we give the estimation procedure of the environment model in Eq.~\ref{Eq: model2} from observed sequences $\{\{\langle \mathbf{y}_{t,k}, a_{t,k}  \rangle\}_{t=1}^T \}_{k=1}^n$ from each source domain $k$, where $\mathbf{y}_{t,k} = (o_{t,k}^{\top},r_{t,k}^{\top})^{\top}$ are the observations and reward at time $t$ in domain $k$.
Instead of estimating the model in each domain separately, we estimate models from different domains  simultaneously, by exploiting commonalities across domains while at the same time preserving specific information for each domain. In particular, we propose the Multi-model Structured Sequential Variational Auto-Encoder \textbf{(MiSS-VAE)}, which contains the following three essential components. 
    (1) \emph{"Sequential VAE" component} handles the sequential data, with the underlying latent states satisfying an MDP. It is implemented by adding an LSTM \citep{LSTM_97} to encode the sequential information in the encoder to learn the inference model $q_{\phi}(\Vector{s}_{t,k} | \Vector{s}_{t-1,k}, \mathbf{y}_{1:t,k},a_{1:t-1,k}; \boldsymbol{\theta}_k)$. 
    (2) \emph{"Multi-model" component} handles models from different domains at the same time, using the domain index $k$ as an input and learning the domain-specific factors $\boldsymbol{\theta}_k$.
    (3) \emph{"Structured" component}: exploits the structural information that is explicitly encoded with the binary masks, i.e., $\Vector{c}^{\cdot \veryshortarrow \cdot}$ in Eq.~\ref{Eq: model2}. Here, the joint distribution of latent states are factorized according to structures, instead of being marginally independent as in traditional VAEs \citep{VAE_13}. Fig. \ref{Fig: architecture-diagram} gives the diagram of neural network architecture in model training. 
Let $\mathbf{y}_{1:T}^{1:n} = \{\{\mathbf{y}_{t,k}\}_{t=1}^T\}_{k=1}^n$. By taking into account the above three components, we maximize the following objective function $\mathcal{L}$:
\begin{equation*}
\mathcal{L}(\mathbf{y}_{1:T}^{1:n}; (\beta_1, \beta_2, \phi,\gamma, \Vector{c}^{\cdot} ) = \mathcal{L}^{\text{rec}}
(\mathbf{y}_{1:T}^{1:n}; (\beta_1, \phi, \Vector{c}^{\cdot}))  
+ \mathcal{L}^{\text{pred}}
(\mathbf{y}_{1:T}^{1:n}; (\beta_2, \phi))   
- \mathcal{L}^{\text{KL}}
(\mathbf{y}_{1:T}^{1:n}; (\phi, \gamma, \Vector{c}^{\cdot})) 
- \mathcal{L}^{\text{reg}}.
\end{equation*}
In particular,  $\mathcal{L}^{\text{rec}}$ is the reconstruction loss for both observed images and rewards, to learn the observation and the reward function, respectively. We also consider the one-step prediction loss $\mathcal{L}^{\text{pred}}$.
\begin{equation*}
\setlength{\abovedisplayskip}{8pt}
\setlength{\belowdisplayskip}{8pt}
\begin{array}{ll}
\mathcal{L}^{\text{rec}}
 = 
 \resizebox{.90\hsize}{!}{
 $\sum \limits_{k=1}^n \sum \limits_{t=1}^{T-2} \mathbb{E}_{\mathbf{s}_{t, k} \sim q_{\phi}(\cdot | \boldsymbol{\theta}_k)} \{ \log p_{\beta_1}( o_{t,k} |\Vector{s}_{t,k};\theta_{k}^o, c^{\theta_k \veryshortarrow o}, \Vector{c}^{\Vector{s} \veryshortarrow o}) 
 + \log p_{\beta_1}(r_{t+1,k} |\Vector{s}_{t,k}, a_{t,k};\theta_{k}^r, c^{\theta_k \veryshortarrow r}, \Vector{c}^{\Vector{s} \veryshortarrow r},c^{a \veryshortarrow r}) \},$ }  \\
\mathcal{L}^{\text{pred}} 
=   
 \resizebox{.80\hsize}{!}{
 $\sum \limits_{k=1}^n \sum \limits_{t=1}^{T-2} \mathbb{E}_{\mathbf{s}_{t, k} \sim q_{\phi}(\cdot | \boldsymbol{\theta}_k)} \{\log p_{\beta_2}(o_{t+1, k}|
    \Vector{s}_{t, k},\theta_{k}^o,\theta_{k}^s) + \log p_{\beta_2}(r_{t+2, k}|\Vector{s}_{t, k}, a_{t+1, k};\theta_{k}^r,\theta_{k}^s) \}$,
    }
\end{array}
\end{equation*}
where $p_{\beta_1}$ and $p_{\beta_2}$ denote the generative models with parameters $\beta_1$ and $\beta_2$, respectively, that are shared across domains, and $q_{\phi}$ the inference model with shared parameters $\phi$.
We also use the following KL-divergence loss to constrain the latent space:
\begin{equation*}
\setlength{\abovedisplayskip}{8pt}
\setlength{\belowdisplayskip}{8pt}
\mathcal{L}^{\text{KL}} = 
\resizebox{.93\hsize}{!}{$\lambda_0 \sum \limits_{k=1}^n \sum \limits_{t=2}^T  \text{KL} \big(q_{\phi}(\Vector{s}_{t,k} | \Vector{s}_{t-1,k}, \mathbf{y}_{1:t,k},a_{1:t-1,k}; \boldsymbol{\theta}_k) \Vert p_{\gamma}(\Vector{s}_{t,k}|\Vector{s}_{t-1,k},a_{t-1,k};\theta_{k}^s, \Vector{C}^{\Vector{s} \veryshortarrow \Vector{s}}, \Vector{c}^{a \veryshortarrow \Vector{s}}, \Vector{C}^{\theta_{k} \veryshortarrow \Vector{s}})\big),$}
\end{equation*}
where we explicitly model the transition dynamics $p_{\gamma}$ with the parameters $\gamma$ shared across domains; this is essential for establishing a Markov chain in latent space and learning a representation for long-term predictions. Moreover, the KL loss helps to constrain the latent space to (1) ensure that the disentanglement between the inferred latent factors $q(s_{i,t}|\cdot)$ and $q(s_{j,t}|\cdot)$ for $i \neq j$, since we do not consider the instantaneous connections among state dimensions, and (2) ensure that the latent representations $\mathbf{s}_t$ are maximally compressive about the observed high-dimensional data. 
Furthermore, according to the edge-minimality property \citep{Minimality_Zhang} and the minimal change principle \citep{Ghassami_NIPS18}, we add sparsity constraints on structural matrices and on the change of domain-specific factors across domains, respectively, to achieve better identifiability:
\begin{equation*}
\setlength{\abovedisplayskip}{8pt}
\setlength{\belowdisplayskip}{8pt}
 \mathcal{L}^{\text{reg}} =
\resizebox{.93\hsize}{!}{
$\lambda_1 \| \Vector{c}^{\Vector{s} \veryshortarrow o} \|_1 + \lambda_2 \| \Vector{c}^{\Vector{s} \veryshortarrow r} \|_1 + \lambda_3 \|c^{a \veryshortarrow r} \|_1 + \lambda_4 \| \Vector{C}^{\Vector{s} \veryshortarrow \Vector{s}} \|_1 + \lambda_5 \| \Vector{c}^{a \veryshortarrow \Vector{s}} \|_1 + \lambda_6 \| \Vector{C}^{\theta_{k} \veryshortarrow \Vector{s}} \|_1 \\
+ \lambda_7 \sum 
   \limits_{1 \leq j,k \leq n}|\boldsymbol{\theta}_{j} - \boldsymbol{\theta}_{k}|$}.
\end{equation*}
Note that besides the shared parameters $\{\beta_1, \beta_2, \phi, \gamma\}$, the structural relationships (encoded in binary masks $\Vector{c}$) are also involved in the shared parameters. Each factor in $p_{\phi}$, $p_{\beta_i}$, and $p_{\gamma}$ is modeled with a mixture of Gaussians, because with a suitable number of Gaussians, it can approximate a wide class of continuous distributions. 
Moreover, in model estimation, the domain-specific factors $\boldsymbol{\theta}_k = \{\theta_{k}^o,\theta_{k}^s,\theta_{k}^r \}$ are treated as parameters; they are constant within the same domain, but may differ in different domains.
We explicitly consider $\boldsymbol{\theta}_{k}$ not only in the generative models $p_{\beta_i}$ and $p_{\gamma}$, but also in the inference model $q_{\phi}$. In this way, except for $\boldsymbol{\theta}_{k}$, all other parameters in MiSS-VAE are shared across domains, so that all we need to update in the target domain is the low-dimensional $\boldsymbol{\theta}_{k}$, which greatly improves the sample efficiency and the statistical efficiency in the target domain.

\begin{figure}[hpt!]
    \centering
    \includegraphics[width=0.8\textwidth]{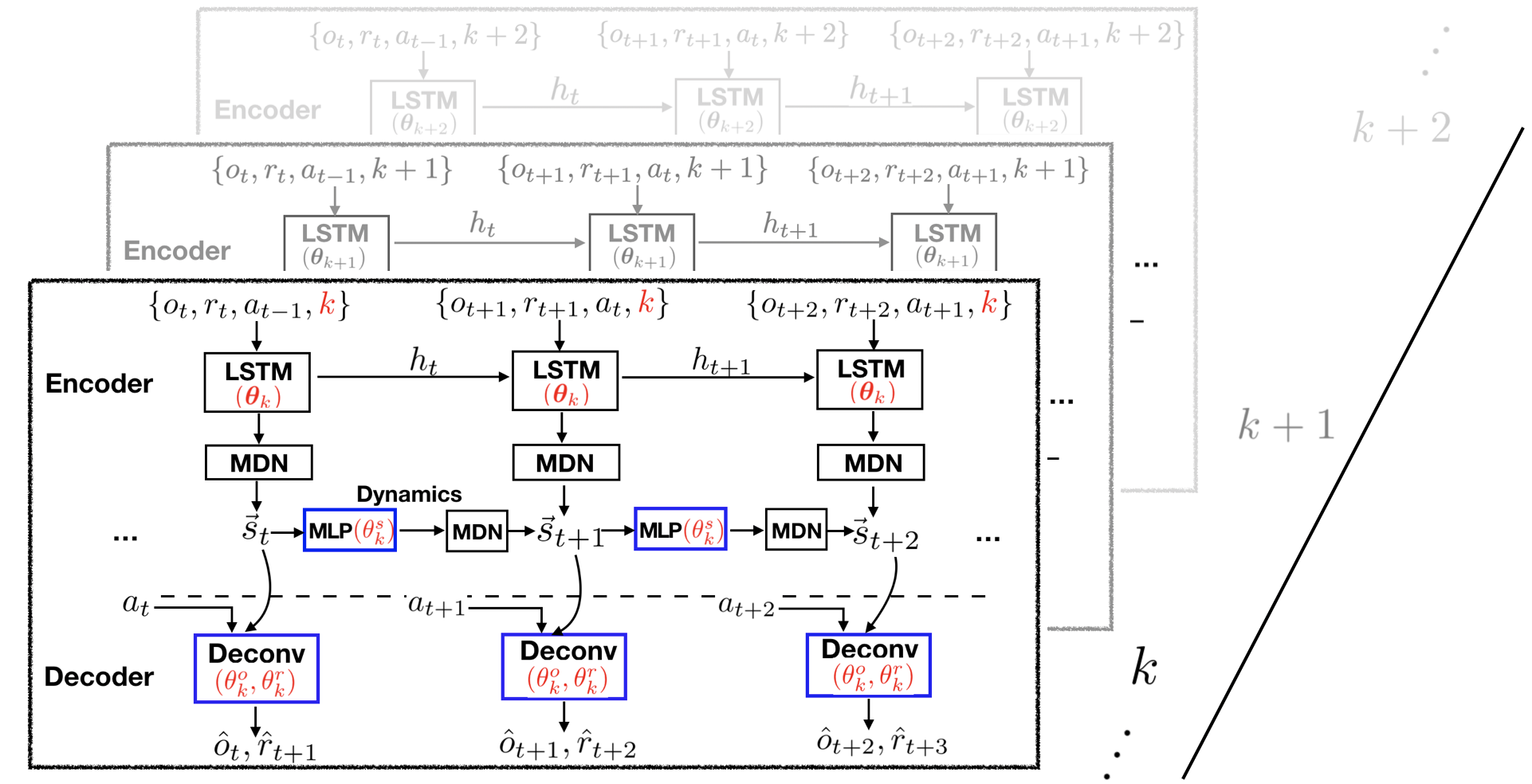}
    \caption{Diagram of MiSS-VAE neural network architecture. The "sequential VAE" component, "multi-model" component, and "structure" component are marked with black, red, and blue, respectively. }
    \label{Fig: architecture-diagram}
\end{figure}



\subsection{Low-Cost and Interpretable Policy Transfer}

After identifying what and where to transfer, we show how to adapt. 
Instead of learning the optimal policy in each domain separately, which is time and sample inefficient, we leverage a multi-task learning strategy: policies in different domains are optimized at the same time exploiting both commonalities and differences across domains. 
Given the compact domain-shared $\Vector{s}_t^{min}$ and domain-specific representations $\thetaSp$, we represent the optimal policies across domains in a unified way: 
\begin{equation}
\setlength{\abovedisplayskip}{0pt}
\setlength{\belowdisplayskip}{0pt}
  a_t = \pi^*(\Vector{s}_t^{min}, \boldsymbol{\theta}_k^{min}),
  \label{Eq: Policy}
\end{equation}
where $\thetaSp$ explicitly and compactly encodes the changes in the policy function in each domain $k$, and all other parameters in the optimal policy function $\pi^*$ are shared across domains. In other words, by learning $\pi^*$ in the source domains, and estimating 
the value of the change factor $\boldsymbol{\theta}^{min}_{\text{target}}$ and inferring latent states $\Vector{s}_{\text{target}}^{min}$ from the target domain, we can immediately derive the optimal policy in the target domain without further policy optimization by just applying Eq.~\ref{Eq: Policy}. 

The AdaRL framework answers what and where the change factors are and which change factors need to adapt across domains in an interpretable way. Moreover, AdaRL only requires a few samples to update the low-dimensional domain-specific parameters $\boldsymbol{\theta}_{\text{target}}^{min}$ to achieve the optimal policy in the target domain, without further policy optimization, achieving the low cost.
We provide the pseudocode for the AdaRL algorithm in Alg.~\ref{Algo: policy1}. 
The algorithm has three parts: (1) data collection with a random policy or any initial policy from $n$ source domains (line 2), (2) model estimation from the $n$ source domains with multi-task learning (lines 2-3, see Sec.~\ref{sec:model_est} for details), and (3) learning the optimal policy $\pi^*$ with deep Q-learning, by making use of domain-specific factors and the inferred domain-shared state representations (lines 4-21). 
Specifically, because we do not directly observe the states $\Vector{s}_t$, we infer $q(\Vector{s}_{t+1, k}^{min} | o_{\leq t+1, k},r_{\leq t+1, k},a_{\leq t, k}, \boldsymbol{\theta}_k^{min})$ and sample $\Vector{s}_{t+1, k}^{min}$ from its posterior, for the $k$th domain (lines 7 and 13). Moreover, the action-value function $Q$ is learned by considering the averaged error over the $n$ source domains (line 18). 
AdaRL can be implemented with a wide class of policy-learning algorithms, e.g., DDPG \citep{DDPG}, Q-learning \citep{Q_learning}, and Actor-Critic methods \citep{A2C_15, A3C_ICML16}. Then, in the target domain, we only need to collect a few rollouts to estimate the low-dimensional domain-specific representations $\boldsymbol{\theta}_{\text{target}}^{min}$, with all other parameters being fixed (lines 22-23). 

{\centering
\begin{minipage}{1.02\textwidth}
\begin{algorithm}[H] 	
\algsetup{linenosize=\tiny}
  \caption{(AdaRL with Domains Shifts)}
  \begin{algorithmic}[1]
	\STATE Initialize action-value function $Q$, target action-value function $Q'$, and replay buffer $\mathcal{B}$.
	\STATE Record multiple rollouts for each source domain $k$ \!($k = 1,\cdots\!,n$) and estimate the model in Eq.\ref{Eq: model2}.
	\STATE Identify the dimension indices of $\Vector{s}_t^{min}$ and the values of $\boldsymbol{\theta}_k^{min}$ according to the learned model. 
	\FOR {episode = 1, \ldots, M}
	  \FOR {source domain k = 1, \ldots, n}
	    \STATE Receive initial observations $o_{1, k}$ and $r_{1, k}$ for the $k$-th domain.
	    \STATE Infer the posterior $q(\Vector{s}_{1, k}^{min} | o_{1, k},r_{1, k},\boldsymbol{\theta}_k^{min})$ and sample initial inferred state $\Vector{s}_{1, k}^{min}$.
	  \ENDFOR
	  \FOR {timestep t = 1, \ldots, T}
	    \FOR {source domain k = 1, \ldots, n}
	      \STATE Select $a_{t,k}$ randomly with probability $\epsilon$; otherwise  $a_{t,k} = \argmax_a Q(\Vector{s}_{t, k}^{min},a,\boldsymbol{\theta}_k^{min})$.
	      \STATE Execute action $a_{t, k}$, and receive reward $r_{t+1, k}$ and observation $o_{t+1, k}$ in the $k$th domain.
	      \STATE Infer the posterior $q(\Vector{s}_{t+1, k}^{min} | o_{\leq t+1, k},r_{\leq t+1, k},a_{\leq t, k},\boldsymbol{\theta}_k^{min})$ and sample $\Vector{s}_{t+1, k}^{min}$.
	      \STATE Store transition $(\Vector{s}_{t, k}^{min}, a_{t, k}, r_{t+1, k}, \Vector{s}_{t+1, k}^{min}, \boldsymbol{\theta}_k^{min})$ in reply buffer $\mathcal{B}$.
	    \ENDFOR
	     \STATE Randomly sample a minibatch of $N$ transitions $(\Vector{s}_{i,j}^{min}, a_{i, j}, r_{i+1, j}, \Vector{s}_{i+1, j}^{min}, \boldsymbol{\theta}_j^{min})$ from $\mathcal{B}$.
	     \STATE Set $y_{i, j} = r_{i+1, j} + \lambda \max_{a'} Q'(s_{i+1, j}^{min}, a',\boldsymbol{\theta}_j^{min})$. 
        \STATE Update action-value function $Q$ by minimizing the loss: 
        \begin{equation*}
            \setlength{\abovedisplayskip}{3pt}
            \setlength{\belowdisplayskip}{-3pt}
            L = \frac{1}{n*N} \sum_{i,j} (y_{i,j} - Q(s_{i, j}^{min},a_{i,j},\boldsymbol{\theta}_j^{min}) )^2.
        \end{equation*}
	  \ENDFOR
	  \STATE Update the target network $Q'$: $Q' = Q$.
	\ENDFOR
	\STATE Record a few rollouts from the target domain. 
	\STATE Estimate the values of $\boldsymbol{\theta}_{\text{target}}^{min}$ for the target domain, with all other parameters fixed. 
  \end{algorithmic}
	\label{Algo: policy1}
\end{algorithm}
\end{minipage}
\par
}

\subsection{Theoretical Properties}
Below we show the conditions under which we can identify the true graph $\mathcal{G}$ from observational data, even when the model in Eq. \ref{Eq: model2} is unknown. Furthermore, we derive a generalization bound of the state-value function under the PAC-Bayes framework \citep{mcallester1999pac}. 

\begin{theorem}[Structural Identifiability]
 Suppose the underlying states $\mathbf{s}_t$ are observed, i.e., Eq.~(\ref{Eq: model2}) is an MDP. Then under the Markov condition and faithfulness assumption, the structural matrices $\Vector{C}^{\Vector{s} \veryshortarrow \Vector{s}}, \Vector{c}^{a \veryshortarrow \Vector{s}}, \Vector{c}^{\Vector{s} \veryshortarrow r}$, $c^{a \veryshortarrow r}$, $\Vector{C}^{\theta_{k} \veryshortarrow \Vector{s}}$, and $c^{\theta_{k} \veryshortarrow r}$ are identifiable.
 \label{theorem: identifibility}
\end{theorem}
This theorem shows that in the MDP scenario, where the underlying states are observed and $c^{\theta_k \veryshortarrow o}$ and $\Vector{c}^{\Vector{s} \veryshortarrow o}$ are not considered by definition, we can uniquely determine the structural relationships over $\{\mathbf{s}_{t-1},\mathbf{s}_{t}, a_{t-1}, r_t, \boldsymbol{\theta}_k\}$, i.e., the Dynamic Bayesian network $\mathcal{G}$, from observed data under mild conditions, without knowing the generative environment model. 
Even if $\boldsymbol{\theta}_k$ is not directly observed, we can identify which state dimension changes and if there is a change in the reward function.

Suppose there are $n$ source domains, and for the $k$th domain, we have $S_k = \big((\Vector{s}_{1, k}, v^{\ast}(\Vector{s}_{1, k})), \cdots, (\Vector{s}_{m_k, k}, v^{\ast}(\Vector{s}_{m_k, k}))\big)$, where $m_k$ is the number of samples from the $k$th domain, $\Vector{s}_{\cdot, k}$ is a state sampled from the $k$th domain, and $v^{\ast}(\Vector{s}_{\cdot, k})$ is its corresponding optimal state-value. For any value function $h_{\thetaSp}(\cdot)$ parameterized by $\thetaSp$, we define the loss function $\ell(h_{\thetaSp}, (\Vector{s}_{k, i}, v^{\ast}(\Vector{s}_{i, k})))=D_{dist}(h_{\thetaSp}(\Vector{s}_{i, k}), v^{\ast}(\Vector{s}_{i, k}))$, where $D_{dist}$ is a distance function that measures the discrepancy between the learned value and the optimal value. The following theorem gives a generalization bound of the state-value function under the PAC-Bayes framework.

\begin{theorem}[Generalization Bound]
Let $\mathcal{Q}$ be an arbitrary distribution over $\boldsymbol{\theta}_k^{min}$ and $\mathcal{P}$ the prior distribution over $\boldsymbol{\theta}_k^{min}$. Then for any $\delta \in (0,1]$, with probability at least $1-\delta$, the following inequality holds uniformly for all  $\mathcal{Q}$,
\begin{equation*}
    \setlength{\abovedisplayskip}{6pt}
     \setlength{\belowdisplayskip}{6pt}
     \resizebox{1\hsize}{!}{$er(\mathcal{Q}) \leq \frac{1}{n} \sum \limits_{k=1}^{n} \bigg \{\hat{er}(\mathcal{Q}, S_k)
     +\sqrt{\frac{1}{2(m_k-1)}\left(D_{KL}(\mathcal{Q}||\mathcal{P})+\log\frac{2nm_k}{\delta}\right)} 
    +\sqrt{\frac{1}{2(n-1)}\left(D_{KL}(\mathcal{Q}||\mathcal{P})+\log\frac{2n}{\delta}\right)} \bigg \}$,}
\end{equation*}
where $er(\mathcal{Q})$ and $\hat{er}(\mathcal{Q}, S_k)$ are the generalization error  and the training error between the estimated value and the optimal value, respectively.
\label{Thm:PAC}
\end{theorem}

Theorem \ref{Thm:PAC} states that with high probability the generalization error $er(\mathcal{Q})$ is upper bounded by the empirical error plus two complexity terms. Specifically, the first one is the average of the task-complexity terms from the observed domains, which converges to zero in the limit of samples in each domain, i.e., $m_k \rightarrow \infty$. The second is
an environment-complexity term, which converges to zero if infinite domains are observed, i.e., $n \rightarrow \infty$. Moreover, if assuming different dimensions of $\boldsymbol{\theta}_k^{min}$ are independent, then $D_{KL}(\mathcal{Q}||\mathcal{P}) = \sum_{i=1}^{|\thetaSp|} D_{KL}(\mathcal{Q}_i||\mathcal{P}_i)$, which indicates that a low-dimensional $\thetaSp$ usually has a smaller KL divergence, so does the upper bound of the generalization error.

\section{Evaluation}

We modify the Cartpole and Atari Pong environments in OpenAI Gym~\citep{brockman2016openai}.
Here, we present a subset of the results; see Appendix for the complete results and the detailed settings.
We consider changes in the state dynamics (e.g., the change of gravity or cart mass in Cartpole, change of orientation in Pong), changes in observations  (e.g., different noise levels in images or different colors in Pong), and changes in reward functions (e.g., different reward functions in Pong based on the contact point of the ball), as shown in Fig.~\ref{Fig: visual_atari} for Pong.
For each of these factors, we take into account both \emph{interpolation} (where the factor value in the target domain is in the support of that in source domains), and \emph{extrapolation} (where it is out of the support w.r.t. the source domains).
We train on $n$ source domains based on the trajectory data generated by a random policy. 
In Cartpole, for each domain we collect $10000$ trials with $40$ steps. For Pong experiments, each domain contains $40$ episodes data and each of them takes a maximum of $10000$ steps. 
In the target domain we consider different sample sizes with $N_{\text{target}} = \{ 20, 50, 10000 \}$ to estimate $\boldsymbol{\theta}_{\text{target}}^{min}$. 
For both games, we evaluate the POMDP case, where the inputs are high-dimensional images; note that we did not stack multiple frames, so some properties (e.g., velocity) are not observed, resulting in a POMDP. 
For Cartpole, we also consider the MDP case, where the true states (cart position and velocity, pole angle and angular velocity) are used as the input to the model. In Cartpole, we also experiment with multiple factors changing at the same time (e.g., gravity and mass change concurrently in the target domain). 


\begin{figure}[t]
    \centering
    \includegraphics[width=0.8\textwidth]{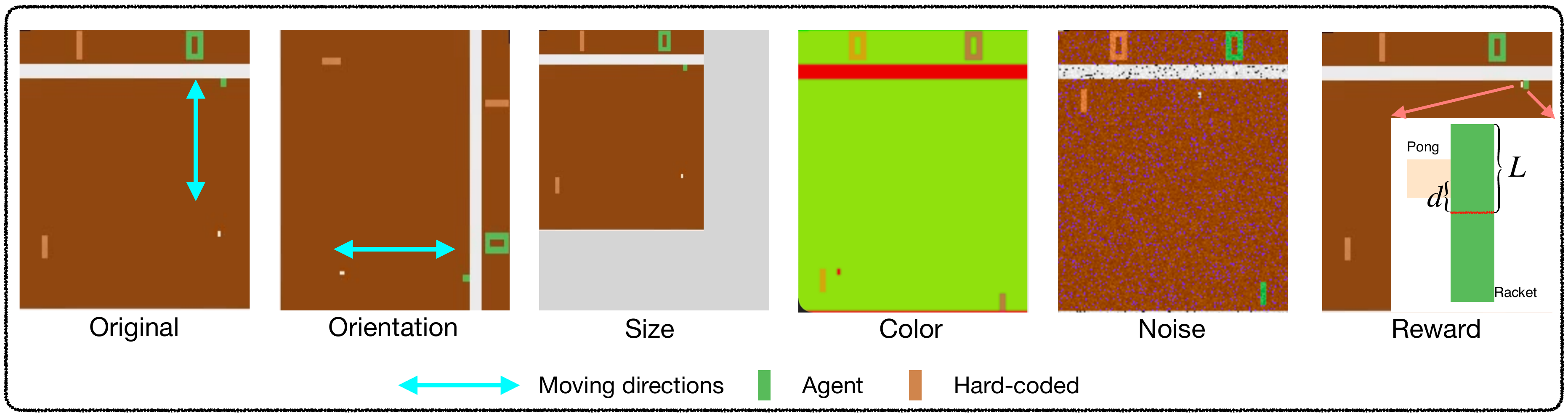}
    \caption{Illustrations of the change factors on modified Pong game.}
    \label{Fig: visual_atari}
\end{figure}

\paragraph{Modified Cartpole setting}
The Cartpole problem consists of a cart and a vertical pendulum attached to the cart using a passive pivot joint. The task is to prevent the vertical pendulum from falling by putting a force on the cart to move it left or right. 
We introduce two change factors for the state dynamics $\theta^s_k$: varying gravity and varying mass of the cart. In terms of changes on the observation function $\theta^o_k$, we  add Gaussian noise on the images. 
Since $\theta_k^o$ does not influence the optimal policy (as shown in Prop.~\ref{prop:generalizable}), we need it only for the model estimation, but not for policy optimization. Moreover, if $\boldsymbol{\theta}_k= \{\theta^o_k\}$, the optimal policy is shared across domains.


\paragraph{Modified Pong setting} In Pong, one of the established Atari benchmarks~\citep{ALE_2013}, 
the agent controls a paddle moving up and down vertically, aiming at hitting the ball. 
We consider changes in observation function $\theta^o_k$, state dynamics $\theta^s_k$, and reward function $\theta^r_k$, as shown in Fig.~\ref{Fig: visual_atari}.
We consider three change factors on perceived signals $\theta^o_k$: different image sizes, different image colors, and different noise levels. 
For the setting with different image colors, we use RGB images as inputs and consider source domains with varying RGB colors \{original, green, red\} and target domains with colors \{yellow, white\}, but for other settings, we convert the images to grayscale as input.
To change the state dynamics, we rotate the images $\omega$ degrees clockwise.
To test the changes in the reward function, we model the reward as a function of the distance between contact point and the central point of the paddle, denoted by $d$, as opposed to the original Pong in which it is constant.
We denote by $L$ the half-length of the paddle and formulate two groups of reward functions: (1) Linear reward: $r_t \!=\! \frac{\alpha d}{L}$; and (2) Non-linear reward: $r_t \!= \!\frac{\alpha L}{d+ 3 L}$, where $\alpha$ varies across domains.
\paragraph{Baselines} In the MDP setting, we compare AdaRL with CAVIA~\citep{CAVIA_19} and PEARL~\citep{PEARL_ICML19}.
In the POMDP setting, we compare with PNN~\citep{rusu2016progressive}, PSM~\citep{PSM} and MQL~\citep{MetaQ_20}. 
We also compare with \emph{AdaRL*}, a version of AdaRL that does not learn the binary masks $c^{\cdot \veryshortarrow \cdot}$ and therefore does not use any structural information. All of these methods use the same number of samples $N_{target}$ from the target domain.
We also compare with: 1) \emph{Non-t}, a vanilla non-transfer baseline that pools data from all source domains and learns a fixed model; and 2) an oracle baseline, which is \textbf{completely} trained on the target domain with model-free exploration. 
For a fair comparison, we use the same policy learning algorithm, Double DQN~\citep{van2016deep}, for all methods. As opposed to MAML and PNN, AdaRL only uses the $N_{target}$ samples to estimate $\boldsymbol{\theta}^{min}_{target}$, without any policy optimization. 

%
\begin{table}[t]
\centering
\scriptsize
\begin{tabular}{c|cccccc}
\toprule
 ~ & \makecell*[c]{Oracle \\ \color{blue}{Upper bound}} & \makecell*[c]{Non-t \\ \color{blue}{lower bound}}  &
 \makecell*[c]{CAVIA \\ \citep{CAVIA_19}} & \makecell*[c]{PEARL \\ \citep{PEARL_ICML19}} &  \makecell*[c]{AdaRL* \\ \color{blue}{Ours w/o masks}} & \makecell*[c]{AdaRL \\ \color{blue}{Ours}} \\ \specialrule{0.05em}{-1.5pt}{-1.5pt}
G\_in & \makecell*[c]{$2486.1$ \\ $(\pm 369.7)$} & \makecell*[c]{$1098.5$~$\bullet$ \\ $(\pm 472.1)$}  & 
\makecell*[c]{$1603.0$ \\ $(\pm 877.4)$} & \makecell*[c]{$1647.4$ \\ $(\pm 617.2)$}  & \makecell*[c]{$1940.5$ \\ $(\pm 841.7)$} & \makecell*[c]{$\color{red}{2217.6}$ \\ $(\pm 981.5)$} \\\specialrule{0.05em}{-1.5pt}{-1.5pt}
G\_out & \makecell*[c]{$693.9$ \\ $(\pm 100.6)$} & \makecell*[c]{$204.6$~$\bullet$ \\ $(\pm 39.8)$}  & 
\makecell*[c]{$392.0$~$\bullet$ \\ $(\pm 125.8)$} & \makecell*[c]{$434.5$~$\bullet$ \\ $(\pm 102.4)$} &
\makecell*[c]{$439.5$~$\bullet$ \\ $(\pm 157.8)$}  & \makecell*[c]{$\color{red}{\textbf{508.3}}$ \\ $(\pm 138.2)$} \\\specialrule{0.05em}{-1.5pt}{-1.5pt}
M\_in & \makecell*[c]{$2678.2$ \\ $(\pm 630.5)$} & \makecell*[c]{$748.5$~$\bullet$ \\ $(\pm 342.8)$}  &
\makecell*[c]{$2139.7$ \\ $(\pm 859.6)$} & \makecell*[c]{$1784.0$ \\ $(\pm 845.3)$}  & \makecell*[c]{$1946.2$~$\bullet$ \\ $(\pm 496.5)$}  & \makecell*[c]{$\color{red}{2260.2}$ \\ $(\pm 682.8)$} \\ \specialrule{0.05em}{-1.5pt}{-1.5pt}
M\_out & \makecell*[c]{$1405.6$ \\ ($\pm 368.0$)} & \makecell*[c]{$371.0$~$\bullet$ \\ $(\pm 92.5)$}  &
\makecell*[c]{$972.6$~$\bullet$ \\ $(\pm 401.4)$} & \makecell*[c]{$793.9$~$\bullet$ \\ ($\pm 394.2$)}  &
\makecell*[c]{$874.5$~$\bullet$ \\ $(\pm 290.8)$} & \makecell*[c]{$\color{red}{\textbf{1001.7}}$ \\ $(\pm 273.3)$} \\ \specialrule{0.05em}{-1.5pt}{-1.5pt}
\makecell*[c]{G\_in \\ \& M\_in} & \makecell*[c]{$1984.2$ \\ $(\pm 871.3)$} & \makecell*[c]{$365.0$~$\bullet$ \\ $(\pm 144.5)$}   &
\makecell*[c]{$1012.5$~$\bullet$ \\ $(\pm 664.9)$} & \makecell*[c]{$1260.8$~$\bullet$ \\ $(\pm 792.0)$} & \makecell*[c]{$1157.4$~$\bullet$ \\ $(\pm 578.5)$} & \makecell*[c]{$\color{red}{\textbf{1428.4}}$ \\ $(\pm 495.6)$} \\ \specialrule{0.05em}{-1.5pt}{-1.5pt}
\makecell*[c]{G\_out \\ \& M\_out} & \makecell*[c]{$939.4$ \\ $(\pm 270.5)$} & \makecell*[c]{$336.9$~$\bullet$ \\ $(\pm 139.6)$}     &
\makecell*[c]{$648.2$~$\bullet$ \\ $(\pm 481.5)$} & \makecell*[c]{$544.32$~$\bullet$ \\ $(\pm 175.2)$}  & \makecell*[c]{$596.0$~$\bullet$ \\ $(\pm 184.3)$} & \makecell*[c]{$\color{red}{\textbf{689.4}}$ \\ $(\pm 272.5)$} \\ \specialrule{0.1em}{-1.0pt}{-1.5pt}
\end{tabular}
\caption{Average final scores on modified Cartpole (MDP) with $N_{target}=50$. The best non-oracle results w.r.t. the mean are marked in \textcolor{red}{red}, while \textbf{bold} indicates a statistically significant result w.r.t. all the baselines, and "$\bullet$" indicates the baseline for which the improvements of AdaRL are statistically significant (via Wilcoxon signed-rank test at $5\%$ significance level). G and M denote the gravity and mass respectively, and ``*in" and ``*out" denote the interpolation and extrapolation, respectively.}
\label{tab: Cart_MDP}
\end{table}

\begin{table}[t]
\renewcommand{\arraystretch}{0.03}
\setlength\tabcolsep{3.5pt}
\centering
\tiny
\begin{tabular}{c|ccccccc}
\toprule
~ & \makecell*[c]{Oracle \\ \color{blue}{Upper bound}} & \makecell*[c]{Non-t \\ \color{blue}{lower bound}}   & \makecell*[c]{PNN \\ \citep{rusu2016progressive}} & \makecell*[c]{PSM \\ \citep{PSM}} & \makecell*[c]{MTQ \\ \citep{MetaQ_20} } & \makecell*[c]{AdaRL* \\ \color{blue}{Ours w/o masks}}  & \makecell*[c]{AdaRL \\ \color{blue}{Ours}} \\ \specialrule{0.05em}{-1.5pt}{-1.5pt}
G\_in & \makecell*[c]{$1930.5$ \\ $(\pm 1042.6)$} & \makecell*[c]{$1031.5$~$\bullet$ \\ $(\pm 837.9)$}   & \makecell*[c]{$1268.5$~$\bullet$ \\ $(\pm 699.0)$} & \makecell*[c]{$1439.8$~$\bullet$ \\ $
(\pm 427.6)$} & \makecell*[c]{$1517.9$ \\ $(\pm 883.6)$} & \makecell*[c]{$1460.6$ \\ $(\pm 497.5)$} & \makecell*[c]{$\color{red}{1697.4}$ \\ $(\pm 1002.3)$} \\ \specialrule{0.05em}{-1.5pt}{-1.5pt}
G\_out & \makecell*[c]{$408.6$ \\ $(\pm 67.2)$} & \makecell*[c]{$69.7$~$\bullet$ \\ $(\pm 19.4)$}   & \makecell*[c]{$307.9$~$\bullet$ \\ $(\pm 100.4)$} & \makecell*[c]{$273.8$~$\bullet$ \\ $(\pm 92.6)$} & \makecell*[c]{$330.6$ \\ $(\pm 109.8)$} & \makecell*[c]{$298.6$~$\bullet$\\ $(\pm 69.3)$} & \makecell*[c]{$\color{red}{353.4}$ \\ $(\pm 79.6)$}\\ \specialrule{0.05em}{-1.5pt}{-1.5pt}
M\_in & \makecell*[c]{$2004.9$ \\ $(\pm 404.3)$} & \makecell*[c]{$608.5$~$\bullet$ \\ $(\pm 222.8)$}  & \makecell*[c]{$1600.8$~$\bullet$ \\ $(\pm 463.5)$} & \makecell*[c]{$1891.5$ \\ $(\pm 638.4)$} & \makecell*[c]{$1735.6$~$\bullet$ \\ $(\pm 398.7)$} & \makecell*[c]{$1884.5$ \\ $(\pm 429.7)$} & \makecell*[c]{$\color{red}{1912.8}$ \\ $(\pm 378.5)$} \\ \specialrule{0.05em}{-1.5pt}{-1.5pt}
M\_out & \makecell*[c]{$1498.6$ \\ $(\pm 625.4)$} & \makecell*[c]{$216.4$~$\bullet$ \\ $(\pm 77.3)$} & \makecell*[c]{$987.6$~$\bullet$ \\ $(\pm 368.5)$} & \makecell*[c]{$1032.7$~$\bullet$ \\ $(\pm 634.0)$} & \makecell*[c]{$862.2$~$\bullet$ \\ $(\pm 300.4)$} & \makecell*[c]{$1219.5$ \\ $(\pm 1014.3)$}  & \makecell*[c]{$\color{red}{1467.5}$ \\ $(\pm 837.2)$} \\ \specialrule{0.05em}{-1.5pt}{-1.5pt}
N\_in & \makecell*[c]{$8640.5$ \\ $(\pm 3086.1)$} & \makecell*[c]{$942.0$~$\bullet$ \\ $(\pm 207.5)$}   & \makecell*[c]{$3952.4$~$\bullet$ \\ $(\pm 1024.9)$} & \makecell*[c]{$5279.6$~$\bullet$ \\ $(\pm 1969.7)$} & \makecell*[c]{$6927.3$~$\bullet$ \\ $(\pm 2464.8)$} & \makecell*[c]{$5540.8$~$\bullet$ \\ $(\pm 2013.6)$} & \makecell*[c]{$\color{red}{\textbf{7817.4}}$ \\ $(\pm 3009.5)$}\\ \specialrule{0.05em}{-1.5pt}{-1.5pt}
N\_out & \makecell*[c]{$4465.2$ \\ $(\pm 667.3)$} & \makecell*[c]{$1002.8$~$\bullet$ \\ ($\pm 335.2)$}   & \makecell*[c]{$1137.1$~$\bullet$ \\ $(\pm 384.6)$} & \makecell*[c]{$2740.9$~$\bullet$ \\ $(\pm 511.5)$}  & \makecell*[c]{$3298.5$~$\bullet$ \\ $(\pm 537.8)$} & \makecell*[c]{$2018.9$~$\bullet$ \\ $(\pm 685.4)$} & \makecell*[c]{$\color{red}{\textbf{3640.9}}$ \\ $(\pm 841.0)$}\\ \specialrule{0.15em}{-1.0pt}{-1.5pt}
\end{tabular}
\caption{Average final scores on modified Cartpole (POMDP) with $N_{target}=50$. The best non-oracle results are marked in \textcolor{red}{\textbf{red}}. G, M, and N denote the gravity, mass, and noise respectively.
}
\label{tab: Cart_POMDP}
\end{table}

\paragraph{Results}
We measure performance by the mean and standard deviation of the final scores over $30$ trials with different random seeds, as well as testing the significance with the Wilcoxon signed-rank test \citep{rank_test}. 
As shown in Tables~\ref{tab: Cart_MDP},~\ref{tab: Cart_POMDP}~and~\ref{tab: Pong_POMDP}~\footnote{In Table 1-3,  "$\bullet$" indicates the baselines for which the improvements of AdaRL are statistically significant (via Wilcoxon signed-rank test at $5\%$ significance level).}, AdaRL consistently outperforms the baselines across most change factors in the MDP and POMDP case for modified Cartpole, and in the POMDP case for Pong for $N_{target}=50$. As ablation studies, to see the effect of learning the graphical structure, we also compare with $AdaRL*$, which does not learn the binary masks $c^{\cdot}$, but just assumes everything is fully connected. Learning the graphical structure improves the performances significantly, and without it $AdaRL*$ is generally comparable to baselines. 
We provide results with $N_{target}=\{20,50, 10000 \}$ in Appendix, showing that the performance gains are larger at smaller sample sizes. 
Furthermore, we consider the change of reward functions (see Table A12-14 in Appendix).
More detailed experimental results are provided in Appendix, including the average score across different $N_{target}$, policy learning curves and an analysis of the estimated $\theta_k$ w.r.t. real change factor.
Interestingly, in the Cartpole case, the estimated $\theta_k$ matches the physical quantities that are being changed across the domains. In particular, the estimated $\theta_k$ for gravity and noise are linear mappings of the gravity and noise level values. For the mass-varying case, the learned $\boldsymbol{\theta}^s_k$ is a nonlinear monotonic function of the mass, which matches the influence of the mass on the dynamics.



\begin{table}[t]
\setlength\tabcolsep{4pt}
\centering
\tiny
\begin{tabular}{c|ccccccc}
\toprule
~ & \makecell*[c]{Oracle \\ \color{blue}{Upper bound}} & \makecell*[c]{Non-t \\ \color{blue}{lower bound}}   & \makecell*[c]{PNN \\ \citep{rusu2016progressive}} & \makecell*[c]{PSM \\ \citep{PSM}} & \makecell*[c]{MTQ \\ \citep{MetaQ_20} } & \makecell*[c]{AdaRL* \\ \color{blue}{Ours w/o masks}}  & \makecell*[c]{AdaRL \\ \color{blue}{Ours}} \\  \specialrule{0.05em}{-1.5pt}{-1.5pt}
O\_in & \makecell*[c]{$18.65$ \\ $(\pm 2.43)$} & \makecell*[c]{$6.18$~$\bullet$ \\ $(\pm 2.43)$} & \makecell*[c]{$9.70$~$\bullet$ \\ $(\pm 2.09)$} & \makecell*[c]{$11.61$~$\bullet$ \\ $(\pm 3.85)$} & \makecell*[c]{$15.79$~$\bullet$ \\ $(\pm 3.26)$}   & \makecell*[c]{$14.27$~$\bullet$ \\ $(\pm 1.93)$} & \makecell*[c]{$\color{red}{\textbf{18.97}}$ \\ $(\pm 2.00)$}\\ \specialrule{0.05em}{-1.5pt}{-1.5pt}
O\_out & \makecell*[c]{$19.86$ \\ $(\pm 1.09)$} & \makecell*[c]{$6.40$~$\bullet$ \\ $(\pm 3.17)$}   & \makecell*[c]{$9.54$~$\bullet$ \\ $(\pm 2.78)$} & \makecell*[c]{$10.82$~$\bullet$ \\ $(\pm 3.29)$} & \makecell*[c]{$10.82$~$\bullet$ \\ $(\pm 4.13)$} & \makecell*[c]{$12.67$~$\bullet$ \\ $(\pm 2.49)$} & \makecell*[c]{$\color{red}{\textbf{15.75}}$ \\ $(\pm 3.80)$} \\ \specialrule{0.05em}{-1.5pt}{-1.5pt}
C\_in & \makecell*[c]{$19.35$ \\ $(\pm 0.45)$} & \makecell*[c]{$8.53$ ~$\bullet$\\ $(\pm 2.08)$}   & \makecell*[c]{$14.44$ ~$\bullet$\\ $(\pm 2.37)$} & \makecell*[c]{$19.02$ \\ $(\pm 1.17)$} & \makecell*[c]{$16.97$~$\bullet$ \\ $(\pm 2.02)$}  & \makecell*[c]{$18.52$~$\bullet$ \\ $(\pm 1.41)$} & \makecell*[c]{$\color{red}{19.14}$ \\$ (\pm 1.05)$}\\ \specialrule{0.05em}{-1.5pt}{-1.5pt}
C\_out & \makecell*[c]{$19.78$ \\ $(\pm 0.25)$} & \makecell*[c]{$8.26$~$\bullet$ \\ $(\pm 3.45)$}   & \makecell*[c]{$14.84$~$\bullet$ \\$(\pm 1.98)$} & \makecell*[c]{$17.66$~$\bullet$ \\ $(\pm 2.46)$} & \makecell*[c]{$15.45$~$\bullet$ \\ $(\pm 3.30)$}  & \makecell*[c]{$17.92$\\ $(\pm 1.83)$} & \makecell*[c]{$\color{red}{19.03}$ \\ $(\pm 0.97)$} \\ \specialrule{0.05em}{-1.5pt}{-1.5pt}
S\_in & \makecell*[c]{$18.32$ \\ $(\pm 1.18)$} & \makecell*[c]{$6.91$~$\bullet$  \\ $(\pm 2.02)$}   & \makecell*[c]{$11.80$~$\bullet$  \\ $(\pm 3.25)$} & \makecell*[c]{$12.65$~$\bullet$  \\ $(\pm 3.72)$}
 & \makecell*[c]{$13.68$~$\bullet$   \\$(\pm 3.49)$}  & \makecell*[c]{$14.23$~$\bullet$  \\ $(\pm 3.19)$} & \makecell*[c]{$\color{red}{\textbf{16.65}}$ \\ $(\pm 1.72)$} \\ \specialrule{0.05em}{-1.5pt}{-1.5pt}
S\_out  & \makecell*[c]{$19.01$ \\ $(\pm 1.04) $} & \makecell*[c]{$6.60$~$\bullet$  \\ $(\pm 3.11)$}  & \makecell*[c]{$9.07$~$\bullet$  \\ $(\pm 4.58)$} & \makecell*[c]{$8.45$~$\bullet$  \\ $(\pm 4.51)$} & \makecell*[c]{$11.45$~$\bullet$  \\ $(\pm 2.46)$}  & \makecell*[c]{$12.80$~$\bullet$  \\ $(\pm 2.62)$} & \makecell*[c]{$\color{red}{\textbf{17.82}}$\\ $(\pm 2.35)$} \\ \specialrule{0.05em}{-1.5pt}{-1.5pt}
N\_in & \makecell*[c]{$18.48$ \\ $(\pm 1.25)$} & \makecell*[c]{$5.51$~$\bullet$  \\ $(\pm 3.88)$}   & \makecell*[c]{$12.73$~$\bullet$  \\ $(\pm 3.67)$} & \makecell*[c]{$11.30$~$\bullet$  \\ $(\pm 2.58)$} & \makecell*[c]{$12.67$~$\bullet$  \\ $(\pm 3.84)$} & \makecell*[c]{$13.78$~$\bullet$  \\ $(\pm 2.15)$}  & \makecell*[c]{$\color{red}{\textbf{16.84}}$ \\ $(\pm 3.13)$} \\ \specialrule{0.05em}{-1.5pt}{-1.5pt}
N\_out & \makecell*[c]{$18.26$ \\ $(\pm 1.11)$} & \makecell*[c]{$6.02$ ~$\bullet$ \\ $(\pm 3.19)$}  & \makecell*[c]{$13.24$~$\bullet$  \\ $(\pm 2.55)$} & \makecell*[c]{$11.26$~$\bullet$  \\ $(\pm 3.15)$} & \makecell*[c]{$15.77$~$\bullet$  \\ $(\pm 2.12)$}  & \makecell*[c]{$14.65$~$\bullet$  \\ $(\pm 3.01)$} & \makecell*[c]{$\color{red}{\textbf{18.30}}$\\ $(\pm 2.24)$}  \\ \specialrule{0.1em}{-1.0pt}{-1.5pt}
\end{tabular}
\caption{Average final scores on modified Pong (POMDP) with $N_{target}=50$. The best non-oracle are marked in \textcolor{red}{\textbf{red}}. O, C, S, and N denote the orientation, color, size, and noise factors, respectively.}
\label{tab: Pong_POMDP}
\end{table}

\section{Conclusions and future work}\label{Sec:limits}
In this paper, we proposed AdaRL, a principled framework for transfer RL.
AdaRL learns a latent representation with domain-shared and domain-specific components across source domains, uses it to learn an optimal policy parameterized by the domain-specific parameters, and applies it to a new target domain. It is achieved without any further policy optimization, but just by estimating the values of the domain-specific parameters in the target domain, which can be accomplished with a few target-domain data.
As opposed to previous work, AdaRL can model changes in the state dynamics, observation function and reward function in an unified manner, and exploit the factorization to improve the data efficiency and adapt faster with fewer samples.
Further directions include exploiting the target domain to fine-tune the policy and handling the out-of-distribution data. Moreover, exploring an alternative to the reconstruction loss, e.g., using the contrastive loss \citep{CURL_ICML20}, might also improve the training efficiency. Finally, an exciting next step is to transfer knowledge across different tasks, e.g., different Atari games.

\newpage
\appendix 

\newcommand{\beginsupplement}{%
\setcounter{table}{0}
\renewcommand{\thetable}{A\arabic{table}}%
\setcounter{figure}{0}
\renewcommand{\thefigure}{A\arabic{figure}}%
\setcounter{algorithm}{0}
\renewcommand{\thealgorithm}{A\arabic{algorithm}}%
\setcounter{section}{0}
\renewcommand{\thesection}{A\arabic{section}}%
\setcounter{proposition}{0}
\renewcommand{\theproposition}{A\arabic{proposition}}%
\setcounter{theorem}{0}
\renewcommand{\thetheorem}{A\arabic{theorem}}%
\setcounter{lemma}{0}
\renewcommand{\thelemma}{A\arabic{lemma}}%
\setcounter{equation}{0}
\renewcommand{\theequation}{A\arabic{equation}}%
\setcounter{definition}{0}
\renewcommand{\thedefinition}{A\arabic{definition}}%
}

\newtheorem{innercustomthm}{Theorem}
\newenvironment{customthm}[1]
  {\renewcommand\theinnercustomthm{#1}\innercustomthm}
  {\endinnercustomthm}
  
\newtheorem{innercustomprop}{Proposition}
\newenvironment{customprop}[1]
  {\renewcommand\theinnercustomprop{#1}\innercustomprop}
  {\endinnercustomprop}
  





\beginsupplement

Appendix organization:
\begin{itemize}
    \item Appendix A1: Proof of Proposition 1
    \item Appendix A2: Proof of Theorem 1
    \item Appendix A3: Proof of Theorem 2
    \item Appendix A4: More details for model estimation
    \item Appendix A5: Complete experimental results
    \item Appendix A6: Experimental details
\end{itemize}

\section{Proof of Proposition 1}

We first review the definitions of d-separation, the Markov condition, and the faithfulness assumption \citep{SGS93, Pearl00}, which will be used in the proof.

Given a directed acyclic graph $G = (\textbf{V}, \textbf{E})$, where $\textbf{V}$ is the set of nodes and $\textbf{E}$ is the set of directed edges, we can define a graphical criterion that expresses a set of conditions on the paths.
\begin{definition}[d-separation \citep{Pearl00}]
    A path $p$ is said to be d-separated by a set of nodes $\textbf{Z} \subseteq \textbf{V}$ if and only if (1) $p$ contains a chain $i \rightarrow m \rightarrow j$ or a fork $i \leftarrow m \rightarrow j$ such that the middle node $m$ is in $Z$, or (2) $p$ contains a collider $i \rightarrow m \leftarrow j$ such that the middle node $m$ is not in $\textbf{Z}$ and such that no descendant of $m$ is in $\textbf{Z}$. 
    
 Let $\textbf{X}$, $\textbf{Y}$, and $\textbf{Z}$ be disjunct sets of nodes. $\textbf{Z}$ is said to d-separate $\textbf{X}$ from $\textbf{Y}$ (denoted as $\textbf{X} \perp_d \textbf{Y} | \textbf{Z}$) if and only if $\textbf{Z}$ blocks every path from a node in $\textbf{X}$ to a node in $\textbf{Y}$. 
\end{definition}

\begin{definition}[Global Markov Condition \citep{SGS93, Pearl00}]
 A distribution $P$ over $\mathbf{V}$ satisfies the global Markov condition on graph $G$ if for any partition $(\textbf{X, Z, Y})$ such that $\textbf{X} \perp_d \textbf{Y} | \textbf{Z}$
 \begin{equation*}
     P(\textbf{X, Y}| \textbf{Z}) = P(\textbf{X} | \textbf{Z}) P(\textbf{Y} | \textbf{Z}).
 \end{equation*}
In other words, $\textbf{X}$ is conditionally independent of $\textbf{Y}$ given $\textbf{Z}$, which we denote as $\textbf{X} \independent \textbf{Y} | \textbf{Z}$.
\end{definition}

\begin{definition}[Faithfulness Assumption \citep{SGS93, Pearl00}]
 There are no independencies between variables that are not entailed by the Markov Condition.
\end{definition}

If we assume both of these assumptions, then we can use d-separation as a criterion to read all of the conditional independences from a given DAG $G$. In particular, for any disjoint subset of nodes $\textbf{X,Y,Z} \subseteq \textbf{V}$: $\textbf{X} \independent \textbf{Y} | \textbf{Z} \iff \textbf{X} \perp_d \textbf{Y} | \textbf{Z}$.

\begin{figure}[hpt!]
    \centering
    \includegraphics[width=0.6\textwidth]{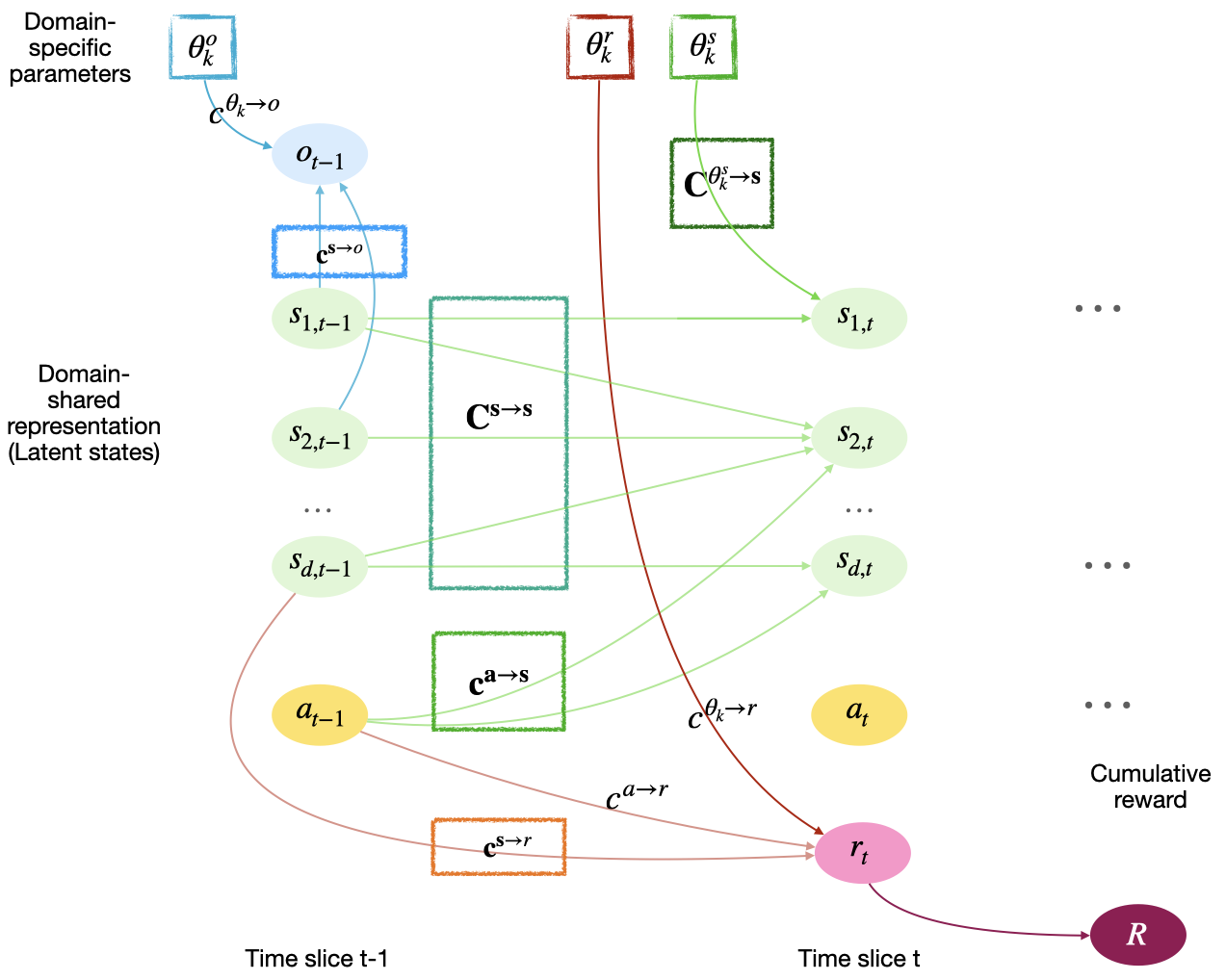}
    \caption{Graphical representation of the generative model in Eq.~1. In the top of the figure, the square boxes are the domain-specific parameters $\boldsymbol{\theta}_k$, which are constant in time,  while the rectangular boxes represent the binary masks that encode the edges. R represents the cumulative reward}
    \label{Fig:dbn}
\end{figure}

In our case we can represent the generative model in Eq.~1 as a Dynamic Bayesian Network(DBN) $\mathcal{G}$ \citep{Murphy_DBN} over the variables $\{\Vector{s}_{t-1}, a_{t-1}, o_{t-1}, r_{t}, \Vector{s}_{t}, \boldsymbol{\theta}_k\}$, where the binary masks $c^{\cdot \veryshortarrow \cdot}$ represent edges or sets of edges, as shown in Fig.~\ref{Fig:dbn}. As is typical in DBN we assume that the graph is invariant across different timesteps. We assume $\boldsymbol{\theta}_k$ are constant across the different timesteps. We add to the image also the cumulative reward R. In practice we will focus instead on the cumulative \emph{future} reward $R_{t+1} = \sum_{\tau=t+1}^T \gamma^{\tau-t-1} r_{\tau}$, which only considers the contributions of the $r_{\tau}$ in the future with respect to the current timestep $t$.

In order to prove Proposition 1, we first need to prove that the compact shared representations $\textbf{s}_t^{min}$ and compact shared representations $\boldsymbol{\theta}_k^{min}$ are all the state and change factors dimensions, respectively, that are conditionally independent of $a_t$ given the future cumulative reward $R_{t+1}$, even given all other variables:

\begin{lemma}
 Under the assumption that the graph $\mathcal{G}$ is Markov and faithful to the measured data, a state dimension $s_{i,t} \in \Vector{s}_t$ is part of $\Vector{s}_t^{min}$ iff:
\begin{equation*}
s_{i,t} \dependent a_t | R_{t+1}, \tilde{s}_t \ \ \forall \tilde{s}_t \subseteq \{\Vector{s}_{t} \setminus s_{i,t} \},
\end{equation*}
Similarly, a change factor dimension $\theta_{i,k} \in \theta_k$ is part of  $\boldsymbol{\theta}_k^{min}$ iff:
\begin{equation*}
    \theta_{i,k} \dependent a_t | R_{t+1}, \tilde{l}_t, 
    \ \ \forall \tilde{l}_t \subseteq \{\Vector{s}_{t}, \{\boldsymbol{\theta}_{k} \setminus \theta_{i,k} \} \}.
\end{equation*}
\end{lemma}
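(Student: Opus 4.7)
The proof will proceed via the graphical lens: because $\mathcal{G}$ is assumed Markov and faithful to the data (as recalled in the three definitions above), conditional independence on either side of the biconditionals is equivalent to d\nobreakdash-separation in $\mathcal{G}$. The plan is therefore to translate both statements into claims about paths in the DBN and then verify those path-claims by collider analysis at the future rewards.

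First I will unpack the characterizations of $\Vector{s}_t^{min}$ and $\boldsymbol{\theta}_k^{min}$ that are in force from the setup of Proposition~1: a coordinate $s_{i,t}$ lies in $\Vector{s}_t^{min}$ precisely when it has a directed path in $\mathcal{G}$ to some $r_\tau$ with $\tau > t$ that is not blocked by any subset of $\Vector{s}_t\setminus\{s_{i,t}\}$, and analogously for $\theta_{i,k}$ with the conditioning set enlarged to $\Vector{s}_t\cup(\boldsymbol{\theta}_k\setminus\{\theta_{i,k}\})$. I will state this explicitly at the start of the proof so the rest can be phrased purely graphically.

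For the $(\Rightarrow)$ direction I will fix an arbitrary $\tilde{s}_t\subseteq\Vector{s}_t\setminus\{s_{i,t}\}$ and exhibit an active path between $s_{i,t}$ and $a_t$ given $\{R_{t+1},\tilde{s}_t\}$. By the characterization above there is a directed path $\pi_s:s_{i,t}\rightarrow\cdots\rightarrow r_\tau$ avoiding $\tilde{s}_t$, all of whose intermediate nodes (being either future states or time-$t$ mediators of $s_{i,t}$'s effect that must lie outside $\tilde{s}_t$ by minimality) are unconditioned. Since every $r_\tau$ with $\tau>t$ has $a_t$ as an ancestor through the action-to-reward-to-next-state chain, there is also a directed path $\pi_a:a_t\rightarrow\cdots\rightarrow r_\tau$ whose internal nodes are all future (hence not in $\tilde{s}_t$). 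Concatenating $\pi_s$ and $\pi_a^{\mathrm{rev}}$ at $r_\tau$ yields a path whose only collider is $r_\tau$, which is opened by conditioning on its descendant $R_{t+1}$. Faithfulness then gives $s_{i,t}\dependent a_t\mid R_{t+1},\tilde{s}_t$. The $(\Leftarrow)$ direction I will prove by contrapositive: if $s_{i,t}\notin\Vector{s}_t^{min}$, the defining property of $\Vector{s}_t^{min}$ supplies a witness set $\tilde{s}_t^\star\subseteq\Vector{s}_t\setminus\{s_{i,t}\}$ that d-separates $s_{i,t}$ from every future $r_\tau$; this simultaneously blocks every $s_{i,t}$--$R_{t+1}$ path and hence every collider path $s_{i,t}\rightarrow\cdots\rightarrow R_{t+1}\leftarrow\cdots\leftarrow a_t$, so the Markov condition yields $s_{i,t}\independent a_t\mid R_{t+1},\tilde{s}_t^\star$. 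The $\theta_{i,k}$ statement follows by the same template, using that the $\boldsymbol{\theta}_k$ are time-invariant roots of $\mathcal{G}$ so they play the same role as $s_{i,t}$ in the collider analysis once the conditioning set is expanded to include $\Vector{s}_t$ and the other $\theta$-coordinates.

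The step I expect to be the main obstacle is the $(\Rightarrow)$ direction's claim that the chosen path $\pi_s$ can be taken to avoid \emph{every} $\tilde{s}_t$ simultaneously: naively, one might fear that conditioning on a large $\tilde{s}_t$ can block every route. The resolution is that the relevant portion of $\pi_s$ beyond the first edge lives entirely at time steps $>t$, and the hypothesis $s_{i,t}\in\Vector{s}_t^{min}$ is precisely the guarantee that the first edge can be chosen not to enter any time-$t$ mediator; verifying this uniformity over $\tilde{s}_t$ is the delicate graphical bookkeeping step, and it is also the place where the minimality built into $\Vector{s}_t^{min}$ (as opposed to mere ancestry of $R_{t+1}$) is used.
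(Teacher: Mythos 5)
Your overall strategy is the same as the paper's: translate membership in $\Vector{s}_t^{min}$ into the existence of a directed path to some future reward $r_{t+\tau}$, build the collider path $s_{i,t}\to\cdots\to r_{t+\tau}\to R_{t+1}\leftarrow r_{t+1}\leftarrow a_t$ that is opened by conditioning on $R_{t+1}$, and invoke faithfulness for the dependence direction and the Markov condition for the independence direction. Two remarks on the state part. First, you locate the "delicate step" in the wrong place: the uniformity over $\tilde{s}_t$ does not come from any minimality built into $\Vector{s}_t^{min}$ but from the structural assumption that there are no instantaneous edges among the time-$t$ state dimensions, so every intermediate node of a directed path from $s_{i,t}$ to $r_{t+\tau}$ lives at a strictly later time step and therefore cannot belong to $\tilde{s}_t\subseteq\Vector{s}_t\setminus\{s_{i,t}\}$; this is exactly how the paper disposes of the issue, and it also repairs the small logical wrinkle in your contrapositive (the negation of your bespoke definition gives a per-path blocking set, not a single witness set, but under the no-instantaneous-edges assumption the negation collapses to "no directed path at all" and the empty conditioning set is the witness). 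Second, your premise that $a_t$ is an ancestor of \emph{every} $r_\tau$ is stronger than needed; the paper only routes through $a_t\to r_{t+1}\to R_{t+1}$.

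The genuine gap is in the change-factor half. You assert that $\theta_{i,k}$ "plays the same role as $s_{i,t}$ once the conditioning set is expanded to include $\Vector{s}_t$," but that expansion is precisely what breaks the template: for $\theta_{i,k}\in\theta_k^s$, the obvious directed path $\theta_{i,k}\to s_{i,t}\to\cdots\to r_{t+\tau}$ passes through $s_{i,t}$, which now \emph{is} allowed to appear in the conditioning set $\tilde{l}_t\subseteq\{\Vector{s}_t,\boldsymbol{\theta}_k\setminus\theta_{i,k}\}$ and blocks that path as a conditioned non-collider. So the forward direction cannot be concluded by exhibiting the time-$t$ instance of the path. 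The paper's proof handles this by using the time-invariance of $\theta_{i,k}$: it has edges into $s_{i,t+1},s_{i,t+2},\dots$ as well, so there are (infinitely many) future instantiations $\theta_{i,k}\to s_{i,t+\tau'}\to\cdots\to r_{t+\tau}$ whose internal nodes all lie strictly after time $t$ and hence cannot be blocked by any $\tilde{l}_t$. Without this argument your proof of the $\theta_{i,k}$ biconditional does not go through.
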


\begin{proof}

We split the proof in two parts, the "only if" and the "if" part:

\textbf{``If conditionally dependent on $a_t$ given $R_{t+1}$ then in compact representation ": }

We first show that if $s_{i,t}$ satisfies the conditional dependence $s_{i,t} \dependent a_t | R_{t+1},\tilde{s}_t, \ \ \forall \tilde{s}_t \subseteq \{\Vector{s}_{t} \setminus s_{i,t} \}$, then it is part of $\Vector{s}^{min}_t$, i.e. $s_{i,t}$ either has an edge to the reward in the next time-step $r_{t+1}$, or, recursively, it has an edge to another state component in the next time-step $s_{j,t+1}$, such that the same component at time step $t$, $s_{j,t+1} \in \Vector{s}^{min}_t$. Note that this recursive definition collects all of the states $s_{i,t}$ that have an effect on future reward $r_{t+\tau}, \tau=\{1, \dots, T-t\}$, either directly, or through the influence on other state components. Since all of these $r_{t+\tau}$ are influencing the cumulative future reward $R_{t+1}$, all of these components have an edge to $R_{t+1}$ as well.
We prove it by contradiction. Suppose that $s_{i,t} \dependent a_t | R_{t+1}$ does not have a direct or indirect path to $r_{t+\tau}$, i.e. $s_{i,t} \in \Vector{s}^{min}$.
By assumption $a_t$ only affects future state $\Vector{s}_{t+1}$, so $s_{i,t} \independent a_t$.
Then, according to the Markov and faithfulness conditions, $s_{i,t}$ is independent of $a_t$ conditioning on $R_{t+1}$, since there is no path that connects $s_{i,t} \to \dots \to r_{t+\tau} \to R_{t+1} \leftarrow r_{t+1} \leftarrow a_t$ on which $R_{t+1}$ is a collider (i.e. a variable with two incoming edges), which is the only path which could introduce a conditional dependence. This contradicts the assumption.

Similarly we show that if $\forall \tilde{l}_t \subseteq \{\Vector{s}_{t}, \{\boldsymbol{\theta}_{k} \setminus \theta_{i,k} \} \}$ the change factor dimension $\theta_{i,k} \dependent a_t | R_{t+1}, \tilde{l}_t$, then $\theta_{i,k} \in \boldsymbol{\theta}_k^{min}$, which similarly to previous case means it has a direct or indirect effect on $r_{t+\tau}$ and therefore $R_{t+1}$.
By contradiction suppose that $\theta_{i,k} \dependent a_t | R_{t+1}, \tilde{l}_t$ for all previously defined $\tilde{l}_t$, but it is not a change parameter for the reward function $\theta_{i,k} \not \in \theta_k^r$ with $c^{\theta_{k} \veryshortarrow r}=1$, nor it is a change parameter for the state dynamics $\theta_{i,k} \in \theta_k^s$ with a direct or indirect path to $r_{t+\tau}$ for $\tau =1, \dots, T-t$. 
By assumption of our model, $\theta_{i,k}$ is never connected to $a_t$ directly, nor they might have a common cause, so $\theta_{i,k} \independent a_t$.
Then, according to the Markov condition, $\theta_{i,k}$ is independent of $a_t$ conditioning on $R_{t+1}$, which contradicts the assumption, since:
\begin{itemize}
    \item if $\theta_{i,k} \not \in \theta_k^r$ or  $c^{\theta_{k} \veryshortarrow r}=0$, then it means there is no path $\theta_{i,k} \to s_{i,t+\tau} \to r_{t+ \tau} \to R_{t+1} \leftarrow r_{t+1} \leftarrow a_t$ for any $\tau \in \mathbb{N}$ that would be open by conditioning on $R_{t+1}$;
    \item if $\theta_{i,k} \in \theta_k^s$ but there is no directed path to $r_{t+\tau}$ for any $\tau \geq 1$, i.e. then there is also no directed path $\theta_{i,k} \to \dots \to r_{t+\tau} \to R_{t+1} \leftarrow r_{t+1}\leftarrow a_t$ that would be open when we condition on $R_{t+1}$.
\end{itemize}


\textbf{``If in compact representation then conditionally dependent on $a_t$ given $R_{t+1}$": }

We next show that if $s_{i,t} \in \Vector{s}_t^{min}$, which mean $s_{i,t}$ has a direct or indirect edge to $r_{t+\tau}, \ \tau=\{1, \dots , T-t\}$, then $s_{i,t}$ satisfies the conditional dependence $s_{i,t} \dependent a_t | R_{t+1},\tilde{s}_t, \ \ \forall \tilde{s}_t \subseteq \{\Vector{s}_{t} \setminus s_{i,t} \}$.
We prove it by contradiction. Suppose $s_{i,t}$ has a directed path to $r_{t+\tau}$ and $s_{i,t}$ is independent on $a_t$ given $R_{t+1}$ and a subset of other variables $\tilde{s}_t \subseteq \Vector{s}_t \setminus s_{i,t}$.
Since we assume that there are no instantaneous causal relations across the state dimensions, if $s_{i,t} \dependent a_t | R_{t+1}$ there can never be an $s_{j,t}$ such that 
$s_{i,t} \independent a_t | R_{t+1}, s_{j,t}$. In this case, this means that  $s_{i,t} \independent a_t | R_{t+1}$ has to hold. 
Then according to the Markov and faithfulness assumptions, $s_{i,t}$ cannot have any directed path to any $r_{t+\tau} \forall \tau \geq 1$, because that any such path create a v-structure in the collider $R_{t+1}$, which would be open if we condition on $R_{t+1}$, contradicting the assumption.

Similarly, suppose $\theta_{i,k}$ is a dimension in $\boldsymbol{\theta}_k$ that has a directed path to $r_{t+\tau}$. 
We distinguish two cases, and show in neither can $\theta_{i,k}$ be independent of $a_t$ given $R_{t+1}$ and a subset of the other variables:
\begin{itemize}
    \item if $\theta_{i,k} \in \theta_k^r$, then it cannot be independent of $a_t$ when we condition on $R_{t+1}$, which is a descendant of $r_{t+1}$ and therefore opens the collider path $\theta_k^r \leftarrow r_{t+1} \leftarrow a_t$;
    \item if $\theta_{i,k} \in \theta_k^s$, then at timestep $t$ it is always only connected to the corresponding $s_{i,t}$. So if there is a directed path $\pi$ to $r_{t+\tau}$, it has to go through $s_{i,t}$. While $\pi$ cannot be blocked by any subset of $\{\boldsymbol{\theta}_{k} \setminus \theta_{i,k} \}$, it can be blocked by conditioning on $s_{i,t}$, there are infinite future paths with the same structure, e.g. through $s_{i,t+1}$ that will not be blocked by conditioning only on variables at timestep $t$. Under the faithfulness and Markov assumption this means that $\theta_{i,k}$ cannot be independent from $a_t$ by conditioning on any subset of state variables at timestep $t$ or any other change parameters, which is a contradiction.
\end{itemize}


\end{proof}

\begin{figure}
    \centering
    \begin{subfigure}[b]{\textwidth}
    \includegraphics[width=0.9\textwidth]{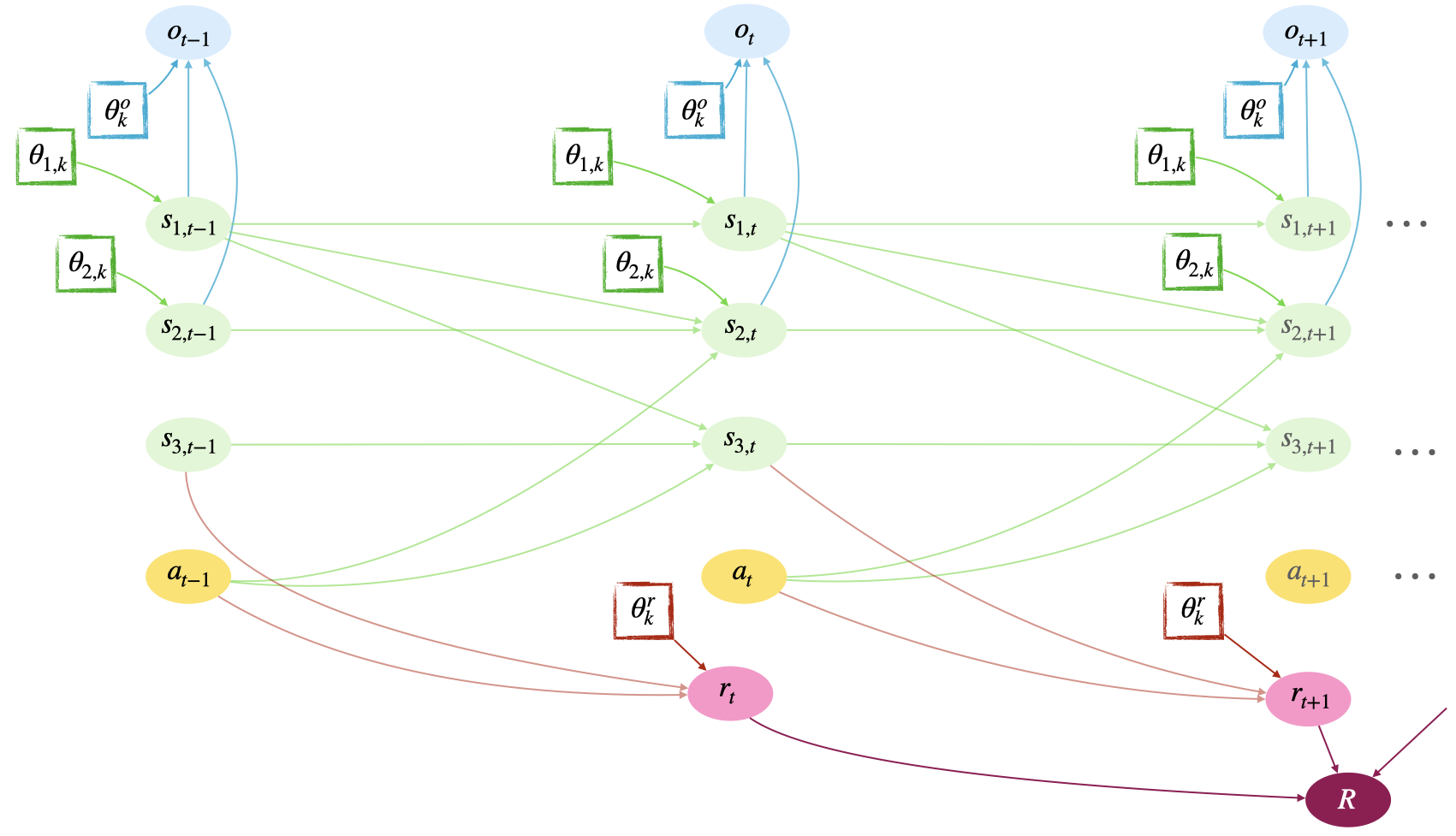}
    \caption{An example of a ground Bayesian network (unrolled DBN over time).}
    \label{fig:completeExample}
    \end{subfigure}
    \vspace{0.5cm}
    \begin{subfigure}[b]{0.45\textwidth}
    \includegraphics[width=\textwidth]{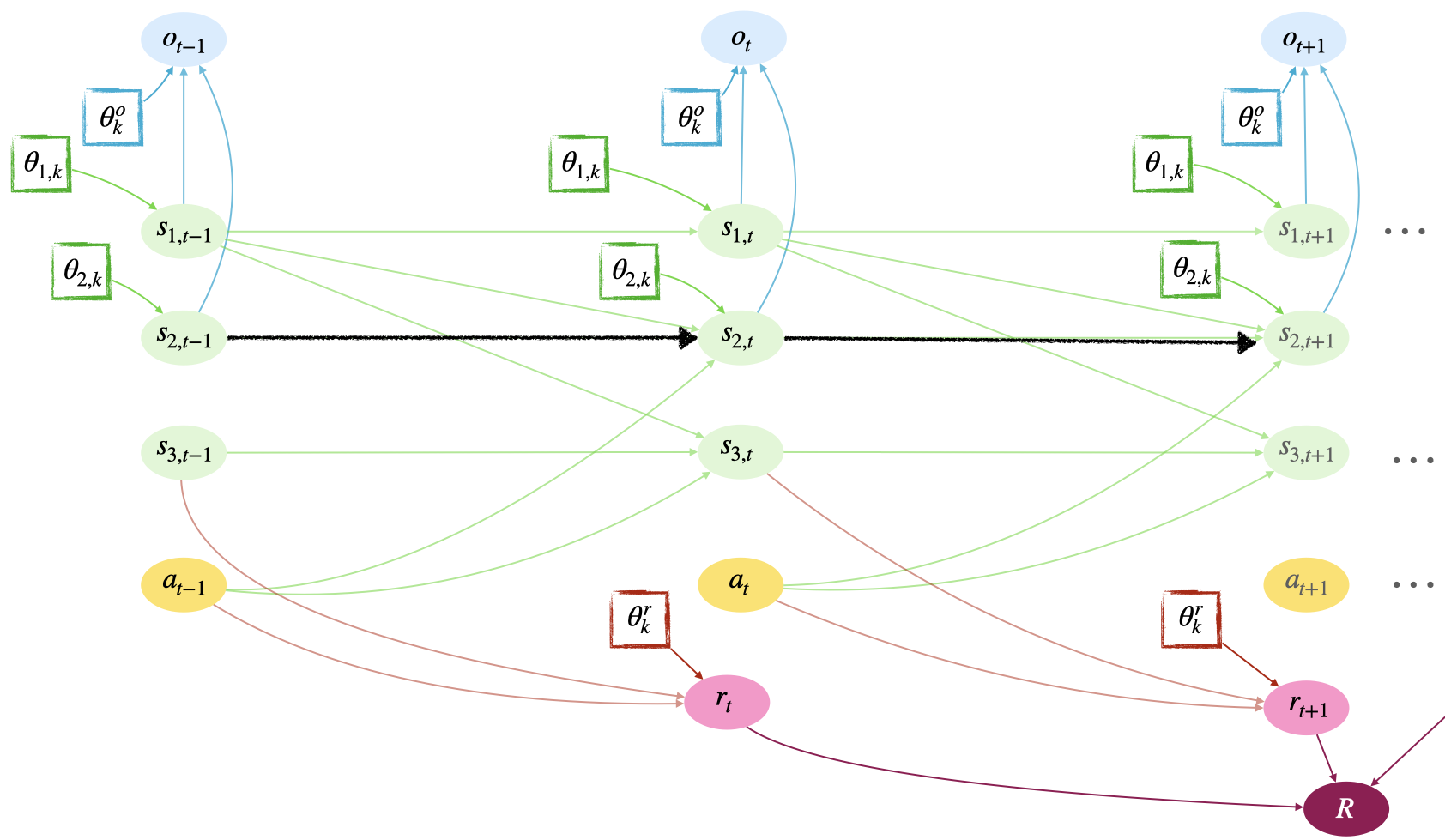}
    \caption{$s_{2,t}$ is not a compact domain-specific representation, since there is no directed path to any $r_{t+ \tau}$, i.e. $s_{2,t} \independent a_t | R$. Similarly, there is no directed path from $\theta_{1,k}$ to $R$.}
    \label{fig:exampleS2}
    \end{subfigure}
    \hfill
    \begin{subfigure}[b]{0.45\textwidth}
    \includegraphics[width=\textwidth]{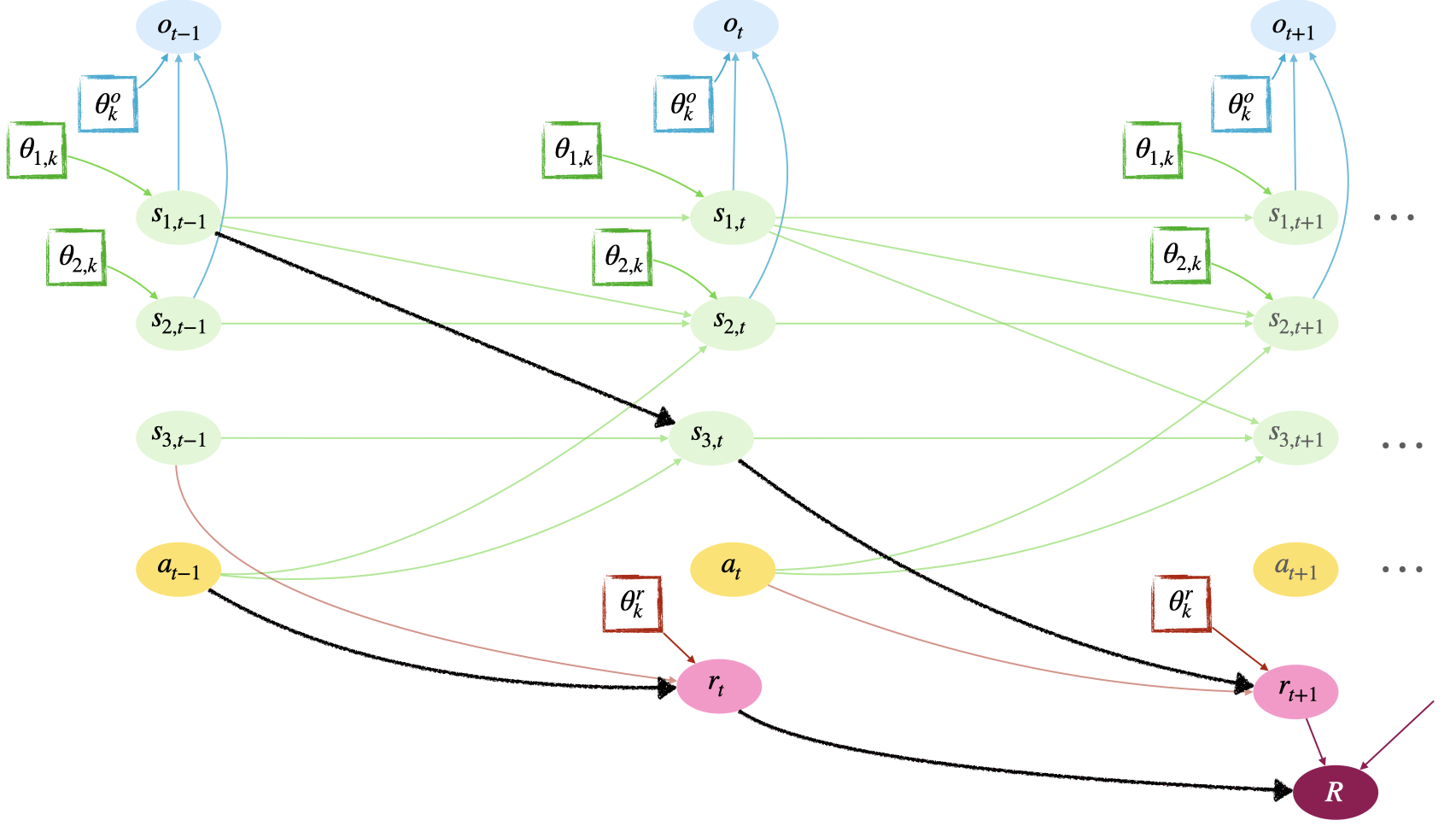}
    \caption{$s_{1,t}$ is a compact domain-specific representation, since there exists a path to $a_t$ that is d-connected when we condition on the collider $R$. Similarly $\theta_{1,k}$ is d-connected to $a_t$ when conditioning on $R$.}
    \label{fig:exampleS1}
    \end{subfigure}
    \caption{Example model in which $s_{1,t}$ and $s_{3,t}$ are compact domain-specific representations for policy learning, while $s_{2,t}$ is not. This does not mean that $s_{2,t}$ and $\theta_{2,k}$ are not useful in the model estimation part, especially in estimating $\theta_k^o$. }
\end{figure}

We can now prove our main proposition:
\begin{customprop}{1}
  Under the assumption that the graph $\mathcal{G}$ is Markov and faithful to the measured data, the union of {compact domain-specific} $\boldsymbol{\theta}_k^{min}$ and {compact shared representations} $\textbf{s}_t^{min}$ are the minimal and sufficient dimensions for policy learning across domains. 
\end{customprop}

\begin{proof}
As shown in the previous lemma, in \textit{compact domain-generalizable representations} $\mathbf{s}_t^{min}$ every dimension is dependent on $a_t$ given $R_{t+1}$ and any other variables, and every other dimension is independent of $a_t$ given $R_{t+1}$ and some other variables. Furthermore, because every dimension that is dependent on $a_t$ is necessary for the policy learning and every dimension that is (conditionally) independent of $a_t$ for at least a subset of other variables is not necessary for the policy learning, {compact domain-specific} $\boldsymbol{\theta}_k^{min}$ and {compact shared representations} $\textbf{s}_t^{min}$ contain minimal and sufficient dimensions for policy learning across domains.
Note that the agents determine the action under the condition of maximizing cumulative reward, which policy learning aims to achieve, so we always consider the situation when the discounted cumulative future reward $R_{t+1}$ is given.
\end{proof}

\section{Proof of Theorem 1}

\begin{customthm}{1}[Structural Identifiability]
 Suppose the underlying states $\mathbf{s}_t$ are observed, i.e., Eq. (1) is an MDP. Then under the Markov condition and faithfulness assumption, the structural matrices $\Vector{C}^{\Vector{s} \veryshortarrow \Vector{s}}, \Vector{c}^{a \veryshortarrow \Vector{s}}, \Vector{c}^{\Vector{s} \veryshortarrow r}$, $c^{a \veryshortarrow r}$, $\Vector{C}^{\theta_{k} \veryshortarrow \Vector{s}}$, and $c^{\theta_{k} \veryshortarrow r}$ are identifiable .
\end{customthm}

\begin{proof}
 We concatenate data from different domains and denote by $k$ be the variable that takes the domain index $1,\cdots, n$.
  Since the data distribution changes across domains and the change is due to the unobserved change factors $\boldsymbol{\theta}_k$ that influence the observed variables, we can represent the change factors as a function of $k$. In other words, we use the domain index $k$ as a surrogate variable to characterize the unobserved change factors. 
 
  We denote the variable set in the system by $\mathbf{V}$, with $\mathbf{V} =\{s_{1,t-1}, \dots, s_{d,t-1}, s_{1,t}, \dots, s_{d,t}, a_{t-1}, r_t \}$,
  and the variables form a dynamic Bayesian network $\mathcal{G}$. Note that in our particular problem, according to the generative environment model in Eq. (1), the possible edges in $\mathcal{G}$ are only those from $s_{i, t-1} \in \Vector{s}_{t-1}$ to $s_{j, t} \in \Vector{s}_t$, from $a_{t-1}$ to $s_{j, t} \in \Vector{s}_t$, from $s_{i, t-1} \in \Vector{s}_{t-1}$ to $r_t$, and from $a_{t-1}$ to $r_t$. 
  We further include the domain index $k$ into the system to characterize the unobserved change factors.

  It has been shown that under the Markov condition and faithfulness assumption, for every $V_i, V_j \in \mathbf{V}$, $V_i$ and $V_j$ are not adjacent in $\mathcal{G}$ if and only if they are independent conditional on some subset of $\{V_l|l \neq i, l \neq j \} \cup k$ \citep{CDNOD_Huang20}. Thus, we can asymptotically identity the correct graph skeleton over $\mathbf{V}$.
  
  Moreover, since we assume a dynamic Bayesian network, there the direction of an edge between a variable at time $t$ to one at time $t+1$ is fixed. 
  Therefore, the structural matrices $\Vector{C}^{\Vector{s} \veryshortarrow \Vector{s}}, \Vector{c}^{a \veryshortarrow \Vector{s}}, \Vector{c}^{\Vector{s} \veryshortarrow r}$, and $c^{a \veryshortarrow r}$, which are parts of the graph $\mathcal{G}$ over $\mathbf{V}$, are identifiable.

  Furthermore, we want to show the identifiability of $\Vector{C}^{\theta_{k} \veryshortarrow \Vector{s}}$, and $c^{\theta_{k} \veryshortarrow r}$; that is, to identify which distribution modules have changes. Whether a variable $V_i$ has a changing module is decided by whether $V_i$ and $k$ are independent conditional on some subset of other variables. The justification for one side of this decision is trivial. If $V_i$'s module does not change, that means $P(V_i \,|\, \mathrm{PA}^i)$ remains the same for every value of $k$, and so $V_i \independent k \, | \, \mathrm{PA}^i$. Thus, if $V_i$ and $k$ are not independent conditional on any subset of other variables, $V_i$'s module changes with $k$, which is represented by an edge between $V_i$ and $k$. Conversely, we assume that if $V_i$'s module changes, which entails that $V_i$ and $k$ are not independent given $\mathrm{PA}^i$, then $V_i$ and $k$ are not independent given any other subset of $\mathbf{V}\backslash \{ V_i\}$. If this assumption does not hold, then we only claim to detect some (but not necessarily all) variables with changing modules.
\end{proof}

\section{Proof of Theorem 2}
In this section, we derive the generalization bound under the PAC-Bayes framework \citep{mcallester1999pac,shalev2014understanding}, and our formulation follows \citep{pentina2014pac} and \citep{amit2018meta}. We assume that all domains share the sample space $\mathcal{Z}$, hypothesis space $\mathcal{H}$, and loss function $\ell: \mathcal{H} \times \mathcal{Z} \rightarrow [0, 1]$. All domains differ in the unknown sample distribution $E_k$ parameterized by $\boldsymbol{\theta}_k^{\text{min}}$ associated with each domain $k$. We observe the training sets $S_1, \ldots, S_n$ corresponding to $n$ different domains. The number of samples in domain $k$ is denoted by $m_k$. Each dataset $S_k$ is assumed to be generated from an unknown sample distribution $E_k^{m_k}$. We also assume that the sample  distribution $E_k$ are generated $i.i.d.$ from an unknown domain distribution $\tau$. More specifically, we have $S_k = (z_{1,k}, \ldots, z_{i,k}, \ldots, z_{m_k,k})$, where $z_{i,k}=(\Vector{s}_{i,k}, v^{\ast}(\Vector{s}_{i,k}))$. Note that, $\Vector{s}_{i,k}$ is the $i$-th state sampled from $k$-th domain and $v^{\ast}(\Vector{s}_{i,k})$ is its corresponding optimal state-value. For any value function $h_{\boldsymbol{\theta}_k^{\text{min}}}(\cdot)$ parameterized by $\boldsymbol{\theta}_k^{\text{min}}$, we define the loss function $\ell(h_{\boldsymbol{\theta}_k^{\text{min}}}, z_{i,k})=D_{dist}(h_{\boldsymbol{\theta}_k^{\text{min}}}(\Vector{s}_{i,k}), v^{\ast}(\Vector{s}_{i,k}))$, where $D_{dist}$ is a distance function that measures the discrepancy between the learned value and the optimal state-value. We also let $P$ be the prior distribution over $\mathcal{H}$ and $Q$ the posterior over $\mathcal{H}$. 


\begin{customthm}{1}
Let $\mathcal{Q}$ be an arbitrary distribution over $\boldsymbol{\theta}_k^{min}$ and $\mathcal{P}$ the prior distribution over $\boldsymbol{\theta}_k^{min}$. Then for any $\delta \in (0,1]$, with probability at least $1-\delta$, the following inequality holds uniformly for all  $\mathcal{Q}$,
\begin{align}
er(\mathcal{Q}) \leq &\frac{1}{n}\sum_{k=1}^{n}\hat{er}(\mathcal{Q}, S_k)
+\frac{1}{n}\sum_{k=1}^{n}\sqrt{\frac{1}{2(m_k-1)}\left(D_{KL}(\mathcal{Q}||\mathcal{P})+\log\frac{2nm_k}{\delta}\right)}
 \nonumber\\ 
&+\sqrt{\frac{1}{2(n-1)}\left(D_{KL}(\mathcal{Q}||\mathcal{P})+\log\frac{2n}{\delta}\right)},\nonumber
\end{align}
where $er(\mathcal{Q})$ and $\hat{er}(\mathcal{Q}, S_k)$ are the generalization error  and the training error between the estimated value and the optimal value, respectively.
\end{customthm}

\begin{proof}
This proof consists of two steps, both using the classical PAC-Bayes bound \citep{mcallester1999pac,shalev2014understanding}. Therefore, we start by restating the classical PCA-Bayes bound. 

\begin{theorem}[Classical PAC-Bayes Bound, General Notations] \label{thm:pac}
Let $\mathcal{X}$ be a sample space, $P(X)$ a distribution over $\mathcal{X}$, $\Theta$ a hypothesis space. Given a loss function $\ell(\theta, X): \Theta \times \mathcal{X} \rightarrow [0, 1]$ and a collection of M i.i.d random variables ($X_1, \ldots, X_M$) sampled from $P(X)$, let $\pi$ be a prior distribution over hypothesis in $\Theta$. Then, for any $\delta \in (0,1]$, the following bound holds uniformly for all posterior distributions $\rho$ over $\Theta$,
\begin{align}
P\left(
\underset{X_i\sim P(X), \theta\sim\rho}{\mathbb{E}}[\ell(\theta, X_i)] \leq
\frac{1}{M}\sum_{m=1}^{M}\underset{\theta\sim\rho}{\mathbb{E}}[\ell(\theta, X_m)]
+\sqrt{\frac{1}{2(M-1)}\left(D_{KL}(\rho||\pi)+\log\frac{M}{\delta}\right)}, \forall\rho 
\right) \nonumber \\
\geq 1 -\delta. \nonumber
\end{align}
\end{theorem}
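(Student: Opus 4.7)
The plan is to decompose the excess risk $er(\mathcal{Q}) - \tfrac{1}{n}\sum_{k=1}^{n}\hat{er}(\mathcal{Q}, S_k)$ using the per-domain expected risks $er_k(\mathcal{Q}) := \mathbb{E}_{z \sim E_k,\, \theta \sim \mathcal{Q}}[\ell(\theta, z)]$ as pivots, and to control each resulting piece by a single application of the stated classical PAC-Bayes bound (Theorem A2). Concretely, I would write
\[
er(\mathcal{Q}) - \frac{1}{n}\sum_{k=1}^{n}\hat{er}(\mathcal{Q}, S_k) = \Bigl(er(\mathcal{Q}) - \frac{1}{n}\sum_{k=1}^{n}er_k(\mathcal{Q})\Bigr) + \frac{1}{n}\sum_{k=1}^{n}\Bigl(er_k(\mathcal{Q}) - \hat{er}(\mathcal{Q}, S_k)\Bigr).
\]
The first bracketed term is a sample-average deviation over the $n$ i.i.d.\ environments drawn from $\tau$, while the $k$th summand in the second sum is a deviation over the $m_k$ i.i.d.\ samples from $E_k$. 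This splits the problem into one across-domain PAC-Bayes inequality and $n$ within-domain PAC-Bayes inequalities, each of which is an instance of Theorem A2, which is exactly how the two separate square-root terms in the claim appear.

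For the across-domain term, I would introduce the surrogate loss $\tilde{\ell}(\theta, E) := \mathbb{E}_{z \sim E}[\ell(\theta, z)]$, which inherits the range $[0,1]$ from $\ell$. With this choice, $er(\mathcal{Q}) = \mathbb{E}_{E \sim \tau,\, \theta \sim \mathcal{Q}}[\tilde{\ell}(\theta, E)]$ and $\tfrac{1}{n}\sum_k er_k(\mathcal{Q}) = \tfrac{1}{n}\sum_k \mathbb{E}_{\theta \sim \mathcal{Q}}[\tilde{\ell}(\theta, E_k)]$, so Theorem A2 applies verbatim with sample space the set of sample distributions, $M=n$, prior $\mathcal{P}$, posterior $\mathcal{Q}$, and confidence parameter $\delta/2$. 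This yields the bound $\sqrt{\tfrac{1}{2(n-1)}\bigl(D_{KL}(\mathcal{Q}||\mathcal{P}) + \log\tfrac{2n}{\delta}\bigr)}$ on the across-domain gap with probability at least $1-\delta/2$, which matches the third summand in the claim.

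For the within-domain terms, for each fixed $k$ I would condition on $E_k$ and apply Theorem A2 to the $m_k$ i.i.d.\ samples $z_{1,k}, \ldots, z_{m_k,k}$ drawn from $E_k$, with $M = m_k$, loss $\ell$, prior $\mathcal{P}$, posterior $\mathcal{Q}$, and confidence $\delta/(2n)$. This gives $er_k(\mathcal{Q}) - \hat{er}(\mathcal{Q}, S_k) \leq \sqrt{\tfrac{1}{2(m_k-1)}\bigl(D_{KL}(\mathcal{Q}||\mathcal{P}) + \log\tfrac{2nm_k}{\delta}\bigr)}$ with probability at least $1 - \delta/(2n)$ conditional on $E_k$, and hence unconditionally by taking expectations over $E_k \sim \tau$. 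A union bound over the $n+1$ PAC-Bayes events then guarantees simultaneous validity with probability at least $1 - \delta/2 - n \cdot \delta/(2n) = 1 - \delta$, and averaging the within-domain bounds and adding the across-domain bound reproduces the claimed inequality term by term.

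The main conceptual step is the error decomposition with the per-domain expected risks as pivots, which is what lets the single-level Theorem A2 be reused both across and within domains without any strengthening. The one piece requiring genuine care is the outer PAC-Bayes application: I need to verify that the surrogate $\tilde{\ell}$ remains in $[0,1]$ (immediate from $\ell \in [0,1]$ and monotonicity of expectation) and that the domains are genuinely i.i.d.\ under $\tau$ so that Theorem A2 applies with them playing the role of the $X_m$'s. Beyond this, the only bookkeeping is calibrating the confidence splits $\delta/2$ and $\delta/(2n)$ so that the log constants become $\log(2n/\delta)$ and $\log(2nm_k/\delta)$ exactly as written; no additional concentration tool is needed.
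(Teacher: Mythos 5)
Your proposal does not prove the statement in question; it proves a different theorem while assuming the statement as a black box. The statement is the \emph{classical single-task PAC-Bayes bound} (Theorem~\ref{thm:pac}): a concentration inequality for one sample space, $M$ i.i.d.\ draws, and an arbitrary posterior $\rho$. Your argument invokes exactly this bound twice --- once across domains with $M=n$ and once within each domain with $M=m_k$ --- in order to assemble the paper's Theorem~2 (the two-level meta-learning bound). Read as a proof of Theorem~\ref{thm:pac} itself, this is circular: every quantitative step in your text is, in your own words, ``an instance of Theorem A2,'' so nothing establishes the inequality you were asked to prove. A genuine proof of the classical bound requires different tools: the Donsker--Varadhan change-of-measure inequality $\mathbb{E}_{\theta\sim\rho}[f(\theta)] \le D_{KL}(\rho\|\pi) + \log \mathbb{E}_{\theta\sim\pi}\bigl[e^{f(\theta)}\bigr]$, applied to $f(\theta) = 2(M-1)\bigl(R(\theta)-\hat{R}(\theta)\bigr)^2$ with $R(\theta)=\mathbb{E}_{X\sim P}[\ell(\theta,X)]$ and $\hat{R}(\theta)=\frac{1}{M}\sum_{m}\ell(\theta,X_m)$; a Hoeffding-type moment bound showing $\mathbb{E}_{X_{1:M}}\,\mathbb{E}_{\theta\sim\pi}\bigl[e^{f(\theta)}\bigr] \le M$; and Markov's inequality, with uniformity over $\rho$ coming for free because the change of measure holds simultaneously for all $\rho$ once the $\pi$-moment is controlled. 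None of these ingredients appears in your proposal. Note also that the paper itself does not prove this statement: it quotes it from \citep{mcallester1999pac} and \citep{shalev2014understanding} and only uses it as a tool.

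As a side remark, had the target been Theorem~2, your sketch would essentially coincide with the paper's proof: the same pivot decomposition through the per-domain risks $er(\mathcal{Q},E_k)$, the same confidence split $\delta_0=\delta/2$ and $\delta_k=\delta/(2n)$, and the same union bound. The one point you gloss over is the within-domain KL term: the paper applies the bound with prior $\pi = \int \mathcal{P}(\boldsymbol{\theta})Q(S_k,\boldsymbol{\theta})d\boldsymbol{\theta}$ and posterior $\rho = \int \mathcal{Q}(\boldsymbol{\theta})Q(S_k,\boldsymbol{\theta})d\boldsymbol{\theta}$ over hypotheses, and then needs $D_{KL}(\rho\|\pi)\le D_{KL}(\mathcal{Q}\|\mathcal{P})$ \citep{yin2019meta} to arrive at a KL between the distributions over $\boldsymbol{\theta}_k^{min}$, whereas you apply the bound to $\mathcal{Q}$ and $\mathcal{P}$ directly; that shortcut skips the step reconciling the hypothesis-level randomness $h\sim Q(S_k,\boldsymbol{\theta})$ with a KL divergence measured over the change parameters.
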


\paragraph{Between-domain Generalization Bound} First, we bound the between-domain generalization, i.e., relating $er(\mathcal{Q})$ to $er(\mathcal{Q}, E_k)$.

We first expand the generalization error as below,
\begin{align}
er(\mathcal{Q}) 
&= \underset{(E, m)\sim \tau}{\mathbb{E}}
\quad\underset{S\sim E^{m}}{\mathbb{E}}
\quad\underset{\boldsymbol{\theta}\sim \mathcal{Q}}{\mathbb{E}}
\quad\underset{h\sim Q(S, \boldsymbol{\theta})}{\mathbb{E}}
\quad\underset{z\sim E}{\mathbb{E}} \ell(h, z) \nonumber \\
&=\underset{(E, m)\sim \tau}{\mathbb{E}}
\quad\underset{S\sim E^{m}}{\mathbb{E}}
\quad\underset{\boldsymbol{\theta}\sim \mathcal{Q}}{\mathbb{E}} \ell(\boldsymbol{\theta}, E) \nonumber \\
&=\underset{(E, m)\sim \tau}{\mathbb{E}}
\quad\underset{S\sim E^{m}}{\mathbb{E}}er(\mathcal{Q}, E).
\end{align}
Then we compute the error across the training domains,
\begin{align}
\frac{1}{n}\sum_{k=1}^{n}\underset{\boldsymbol{\theta}\sim \mathcal{Q}}{\mathbb{E}}
\quad\underset{h\sim Q(S_k, \boldsymbol{\theta})}{\mathbb{E}}
\quad\underset{z\sim E_k}{\mathbb{E}} \ell(h, z) 
=\frac{1}{n}\sum_{k=1}^{n} er(\mathcal{Q}, E_k).
\end{align}
Then Theorem \ref{thm:pac} says that for any $\delta_0 \sim (0, 1]$, we have
\begin{align}
P\left(
er(\mathcal{Q}) \leq \frac{1}{n}\sum_{k=1}^{n} er(\mathcal{Q}, E_k)
+\sqrt{\frac{1}{2(n-1)}\left(D_{KL}(\mathcal{Q}||\mathcal{P})+\log\frac{n}{\delta_0}\right)}
\right) \geq 1 - \delta_0, \label{eq:between_bound}
\end{align}
where $\mathcal{P}$ is a prior distribution over $\boldsymbol{\theta}$.

\paragraph{Within-domain Generalization Bound} Then, we bound the the within-domain generalization, i.e., relating $er(\mathcal{Q}, E_k)$ to $\hat{er}(\mathcal{Q}, S_k)$.

We first have 
\begin{align}
er(\mathcal{Q}, E_k)
=\underset{\boldsymbol{\theta}\sim \mathcal{Q}}{\mathbb{E}}
\quad\underset{h\sim Q(S_k, \boldsymbol{\theta})}{\mathbb{E}}
\quad\underset{z\sim E_k}{\mathbb{E}} \ell(h, z).
\end{align}
Then we compute the empirical error across the training domains,
\begin{align}
\hat{er}(\mathcal{Q}, S_k)=\frac{1}{m_k}\sum_{j=1}^{m_k}\underset{h\sim Q(S_k, \boldsymbol{\theta})}{\mathbb{E}}
\quad\underset{z\sim E_k}{\mathbb{E}} \ell(h, z_{i,j}).
\end{align}
According to Theorem \ref{thm:pac}, for any $\delta_D \sim (0, 1]$, we have
\begin{align}
P\left(
er(\mathcal{Q}, E_k) \leq \hat{er}(\mathcal{Q}, S_k)
+\sqrt{\frac{1}{2(m_k-1)}\left(D_{KL}(\mathcal{\rho}||\mathcal{\pi})+\log\frac{m_k}{\delta_k}\right)}
\right) \geq 1 - \delta_k.
\end{align}
With the choice of $\pi = \int \mathcal{P}(\boldsymbol{\theta})Q(S_D, \boldsymbol{\theta})d\boldsymbol{\theta}$ and $\rho = \int \mathcal{Q}(\boldsymbol{\theta})Q(S_D, \boldsymbol{\theta})d\boldsymbol{\theta}$, we have that $D_{KL}(\mathcal{\rho}||\mathcal{\pi}) \leq D_{KL}(\mathcal{Q}||\mathcal{P})$ \citep{yin2019meta}. Thus, the above inequality can be further written as,
\begin{align}
P\left(
er(\mathcal{Q}, E_k) \leq \hat{er}(\mathcal{Q}, S_k)
+\sqrt{\frac{1}{2(m_k-1)}\left(D_{KL}(\mathcal{Q}||\mathcal{P})+\log\frac{m_k}{\delta_k}\right)}
\right) \geq 1 - \delta_k. \label{eq:within_bound}
\end{align}

\paragraph{Overall Generalization Bound} Combining Eq. (\ref{eq:between_bound}) and (\ref{eq:within_bound}) using the union bound and choosing that for any $\delta > 0$, set $\delta_0 \doteq \frac{\delta}{2}$ and $\delta_k \doteq \frac{\delta}{2n}$ for $k=1,\ldots,n$, then we finally obtain,
\begin{align}
P\left(
er(\mathcal{Q}) \leq \frac{1}{n}\sum_{k=1}^{n}\hat{er}(\mathcal{Q}, S_k)
+\frac{1}{n}\sum_{k=1}^{n}\sqrt{\frac{1}{2(m_k-1)}\left(D_{KL}(\mathcal{Q}||\mathcal{P})+\log\frac{2nm_k}{\delta}\right)}
\right. \nonumber\\
\left. 
+\sqrt{\frac{1}{2(n-1)}\left(D_{KL}(\mathcal{Q}||\mathcal{P})+\log\frac{2n}{\delta}\right)}
\right) \geq 1 - \delta.
\end{align}

\end{proof}

\section{More details for model estimation}

\subsection{Locating model changes}

In real-world scenarios, it is often the case that changes to the environment are sparse and localized. Instead of assuming every function to change arbitrarily, which is inefficient and unnecessarily complex, we first identify possible locations of the changes. To this end, we concatenate data from different domains and denote by $k$ the variable that takes distinct values $1, \cdots, n$ to represent the domain index. Then, we exploit (conditional) independencies/dependencies to locate model changes.
These (conditional) independencies/dependencies can be tested by kernel-based conditional independence tests \citep{KCI}, which allows for both linear or nonlinear relationships between variables. 
Below we show that in some cases, we can identify the location of $\boldsymbol{\theta}_k$, by using the conditional independence relationships from concatenated data.

\begin{proposition}
  In POMDP, where the underlying states are latent, we can localize the changes by conditional independence relationships from concatenated observed data $\mathcal{D}$ in the following cases:
  \begin{enumerate}[itemsep=0pt,topsep=-2pt,leftmargin=26pt]
      \item[C$_1$:] if $o_{t} \independent k$, then there is neither a change in the observation function nor in the state dynamics for any state that is an input to the observation function;
      \item[C$_2$:] if $o_{t} \independent k$ and $a_t \dependent k | r_{t+1}$, then there is only a change in the reward function;
      \item[C$_3$:] if $a_{t} \independent k | r_{t+1}$, then there is neither a change in the reward function nor in the state dynamics for any state in $\mathbf{s}^{min}$;
      \item[C$_4$:] if $a_{t} \independent k | r_{t+1}$ and $o_t \dependent k$, then there is a change in the observation function, or there exists a state that is not in $\mathbf{s}^{min}$ but is an input to the observation function, whose dynamics has a change.
  \end{enumerate}
  \label{prop: location}
\end{proposition}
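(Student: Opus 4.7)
The plan is to adopt the surrogate-variable perspective already used in the proof of Theorem 1: concatenate data across domains, introduce $k$ as a pseudo-node in the DBN $\mathcal{G}$, and postulate (via Markov and faithfulness) that $k$ is adjacent to a variable $V$ if and only if the conditional distribution $P(V \mid \mathrm{PA}(V))$ varies across domains. Under this convention, a change in the observation function is represented by an edge $k \to o_t$, a change in the reward function by an edge $k \to r_{t+1}$, and a change in the dynamics of $s_{i,t}$ by an edge $k \to s_{i,t}$. The proof of each of the four cases then reduces to a d-separation argument in the unrolled DBN, and I would handle each by contraposition: starting from the graphical configuration ruled in by the hypothesis on changes, produce an open path to $o_t$ or $a_t$ that contradicts the stated (conditional) independence.

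For C$_1$, I would suppose that either $k \to o_t$ is present, or some $s_{i,t}$ that is a parent of $o_t$ satisfies $k \to s_{i,t}$. In the first case the edge itself is an open path; in the second case the chain $k \to s_{i,t} \to o_t$ is d-connecting without conditioning, so faithfulness forces $o_t \dependent k$, contradicting the hypothesis. C$_2$ builds on C$_1$: the hypothesis $o_t \independent k$ rules out changes in observation and in any observation-relevant dynamics, so by Lemma~A1 and the DBN structure the only remaining edge incident to $k$ that can produce the observed $a_t \dependent k \mid r_{t+1}$ is $k \to r_{t+1}$, because conditioning on $r_{t+1}$ opens the collider path $a_t \to r_{t+1} \leftarrow k$; any change located strictly in the dynamics of $\mathbf{s}^{min}$ states would also open such a path, but those have just been excluded.

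For C$_3$ I would argue directly by contradiction using the collider $r_{t+1}$. Suppose either $k \to r_{t+1}$ or $k \to s_{j,t'}$ for some $s_{j,t'} \in \mathbf{s}^{min}$ at a time $t' \leq t$. In the first case the path $a_t \to r_{t+1} \leftarrow k$ becomes active when conditioning on $r_{t+1}$. In the second case, since $s_{j,t'} \in \mathbf{s}^{min}$, the definition (and Lemma~A1) guarantees a directed path from $s_{j,t'}$ to some future reward $r_{t+\tau}$, hence to $R_{t+1}$; conditioning on $r_{t+1}$ (a descendant in the relevant paths through $R$-structure) opens a d-connecting route from $a_t$ to $k$, again contradicting $a_t \independent k \mid r_{t+1}$. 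Finally, C$_4$ follows by combining C$_3$ with a direct analysis of $o_t \dependent k$: C$_3$ has eliminated changes in the reward and in $\mathbf{s}^{min}$ dynamics, so the only edges from $k$ that can produce a d-connection to $o_t$ are $k \to o_t$ itself or $k \to s_{i,t}$ where $s_{i,t}$ is a parent of $o_t$ but lies outside $\mathbf{s}^{min}$.

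The main obstacle I anticipate is case C$_3$, and by extension C$_4$: one must ensure the collider argument is tight. In particular, the d-separation analysis has to cover not just direct edges into $r_{t+1}$ but indirect influences through arbitrarily distant future states and rewards, since a state in $\mathbf{s}^{min}$ may only affect the current reward $r_{t+1}$ through a long chain of dynamics. I would address this by invoking the recursive characterization of $\mathbf{s}^{min}$ from Lemma~A1, which guarantees the existence of at least one directed path ending at some $r_{t+\tau}$ for $\tau \geq 1$, and then carefully checking that conditioning on $r_{t+1}$ alone suffices to open a collider-bearing path from $a_t$ back to $k$; equivalent arguments extend to $R_{t+1}$ if one prefers to state the result in terms of the cumulative reward, matching the framing used in the preceding lemma and proposition.
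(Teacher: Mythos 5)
Your proposal takes essentially the same route as the paper's proof: introduce the domain index $k$ as a surrogate node in the DBN, identify ``module changes'' with edges incident to $k$ via the Markov and faithfulness assumptions, and then dispatch each of C$_1$--C$_4$ by a d-separation argument, arguing C$_1$ and C$_3$ by contradiction and deriving C$_2$ and C$_4$ by combining the earlier cases with the remaining possible edges from $k$. If anything, you spell out more of the d-connection reasoning (the explicit chain $k \to s_{i,t} \to o_t$ in C$_1$ and the collider $a_t \to r_{t+1} \leftarrow k$ in C$_2$/C$_3$) than the paper does, which simply asserts the resulting dependences from faithfulness.

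One remark on the obstacle you flag in C$_3$: your concern is well placed, and the paper's proof does not resolve it either. If a state $s_{j,t}\in\mathbf{s}^{min}$ influences rewards only at horizon $\tau\ge 2$ (say $k\to s_{j,t}\to s_{l,t+1}\to r_{t+2}$), the relevant collider sits at $r_{t+2}$ (or at $R_{t+1}$), not at $r_{t+1}$, so conditioning on $r_{t+1}$ alone does not open a path from $a_t$ to $k$; the clean version of the statement conditions on the cumulative future reward $R_{t+1}$, as in Lemma~A1, or restricts the dynamics claim to states with a direct edge into $r_{t+1}$. Your instinct to route the argument through the recursive characterization of $\mathbf{s}^{min}$ and through $R_{t+1}$ is the right fix.
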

  
 

 \begin{proof}

    We first formulate the problem as follows. We concatenate data from different domains and use domain-index variable $k$ to indicate whether the corresponding distribution module has changes across domains. Specifically, by assuming the Markov condition and faithfulness assumption, $s_{i,t}$ has an edge with $k$ if and only if $p(s_{i,t}|PA(s_{i,t}))$ changes across domains, where $PA(\cdot)$ denotes its parents. Similarly, $r_{t}$ has an edge with $k$ if and only if $p(r_{t}|PA(r_{t}))$ changes across domains, and $o_{t}$ has an edge with $k$ if and only if $p(o_{t}|PA(o_{t}))$ changes across domains. Under this setting, locating changes is equivalent to identify which variables have an edge with $k$ from the data. 
   
    We consider the scenario of POMDP, where we only observe $\{o_t,r_t,a_t\}$ and the underlying states $\Vector{s}_t$ are not observed. Below, we consider each case separately.
   
   Case 1: Show that if $o_{t} \independent k$, then there is neither a change in the observation function nor a change in the state dynamics for any state that is an input to the observation function.
   
   We prove it by contradiction. Suppose that there is a change in the observation function and a change in the state dynamics for any state that is an input to the observation function. That is, $o_t$ has an edge with $k$, and $s_{i,t}$ that has a direct edge to $o_t$ also connects with $k$. Based on faithfulness assumption, $o_{t} \dependent k$, which contradicts to the assumption. Since we have a contradiction, it must be that there is neither a change in the observation function nor a change in the state dynamics for any state that is an input to the observation function.
   
   Case 2: Show that if $o_{t} \independent k$ and $a_t \dependent k | r_{t+1}$, then there is only a change in the reward function.
   
   If $o_{t} \independent k$ and $a_{t-1} \dependent k | r_{t} $, based on the Markov condition and faithfulness assumption, $r_t$ has an edge with $k$, and $s_{i,t}$ and $o_t$ do not have edges with $k$; that is, there are only changes in the reward function.
   
   Case 3: Show that if $a_{t} \independent k | r_{t+1}$, then there is neither a change in the reward function nor a change in the state dynamics for any state in $\Vector{s}^{min}$.
   
   By contradiction, suppose that there is a change in the reward function or there exists a state $s_{j,t} \in \Vector{s}_j^{min}$ that has a change in its dynamics. That is, $r_t$ has an edge with $k$ or corresponding $s_{j,t}$ has an edge with $k$. Based on faithfulness assumption, $a_{t} \dependent k | r_{t+1}$, which contradicts to the assumption. 
   
   Case 4: Show that if $a_{t} \independent k | r_{t+1}$ and $o_t \dependent k$, then there is a change in the observation function, or there exists a state that is not in $\Vector{s}^{min}$ but is an input to the observation function, whose dynamics has a change.
   
   According to Case 3, if $a_{t} \independent k | r_{t+1}$, then there is neither a change in the reward function nor a change in the state dynamics for any state in $\Vector{s}^{min}$. Furthermore, since $o_t \dependent k$, then based on the Markov condition, either $o_t$ has an edge with $k$, or there exists a state that is not in $\Vector{s}^{min}$ but is an input to the observation function, whose dynamics has an edge with $k$. That is, there is a change in the observation function, or there exists a state that is not in $\Vector{s}^{min}$ but is an input to the observation function, whose dynamics has a change.

 \end{proof}
Based on Theorem 1 and Proposition \ref{prop: location}, in MDP, we can fully determine where the changes are, so we only need to consider the corresponding $\theta_k^{(\cdot)}$ to capture the changes.  
In POMDP, in Case 1, we only need to involve $\theta_{k}^s$ and $\theta_{k}^r$ in model estimation, that is, $\boldsymbol{\theta}_k = \{\theta_{k}^s, \theta_{k}^r\}$; in Case 2, $\boldsymbol{\theta}_k = \{\theta_{k}^r\}$; and in Case 3 \& 4, $\boldsymbol{\theta}_k = \{\theta_{k}^o, \theta_{k}^s\}$. For other cases, we involve $\boldsymbol{\theta}_k = \{\theta_{k}^o, \theta_{k}^s, \theta_{k}^r\}$ in model estimation.

 
 \subsection{More details for estimation of domain-varying models in one step}
 
We use MiSS-VAE to learn the environment model, which contains three components: the "sequential VAE" component, the "multi-model" component, and the "structure" component. Figure \ref{Fig: architecture-diagram2} gives the diagram of neural network architecture in model training.  

\begin{figure}[t]
    \centering
    \includegraphics[width=0.85\textwidth]{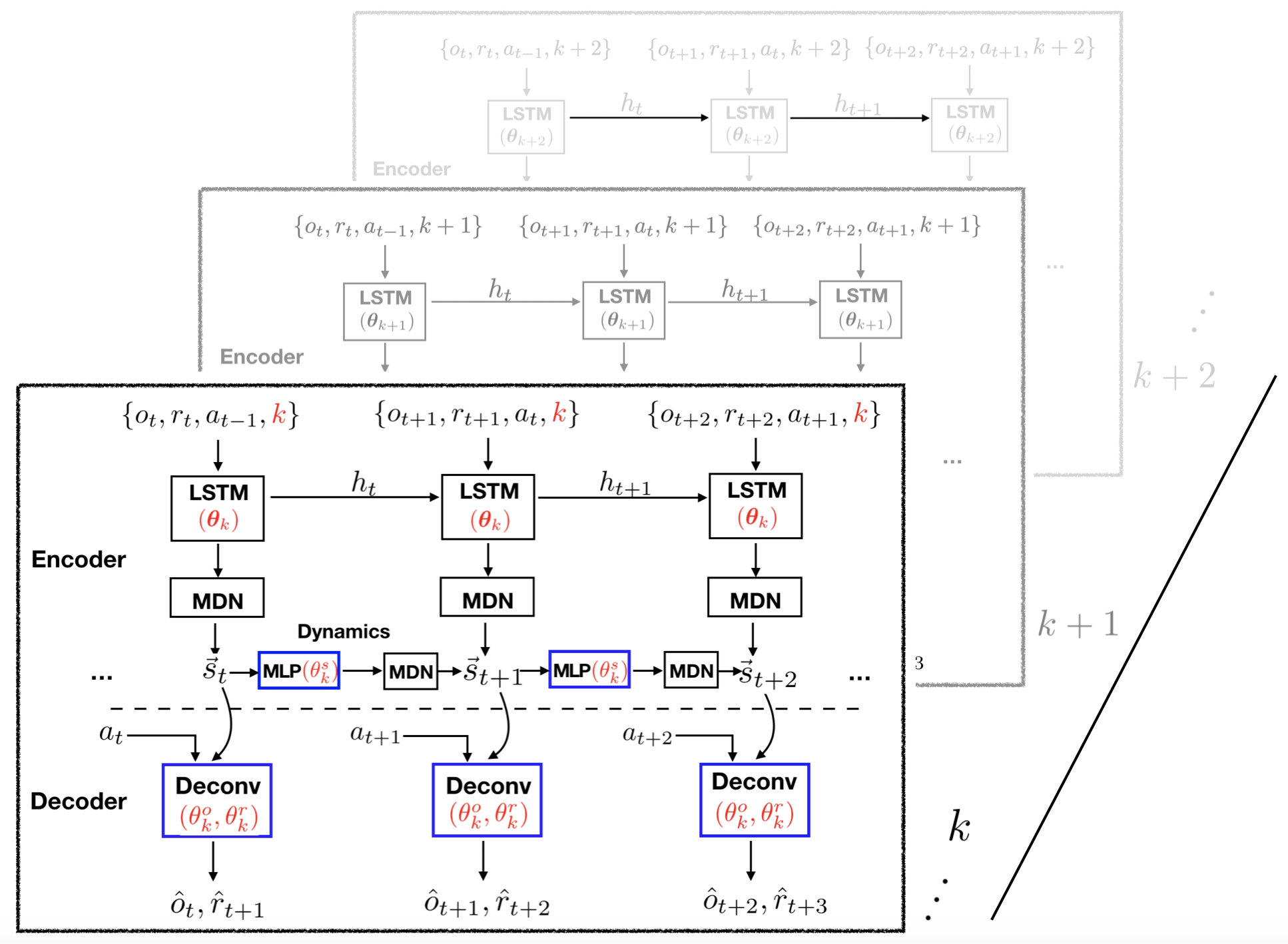}
    \caption{Diagram of MiSS-VAE neural network architecture. The "sequential VAE" component, "multi-model" component, and "structure" component are marked with black, red, and blue, respectively. }
    \label{Fig: architecture-diagram2}
\end{figure}

Specifically, for the "sequential VAE" component, we include a Long Short-Term Memory (LSTM \citep{LSTM_97}) to encode the sequential information with output $h_t$ and a Mixture Density Network (MDN \citep{MDN_Bishop}) to output the parameters of MoGs, and thus to learn the inference model $q_{\phi}(\Vector{s}_{t,k} | \Vector{s}_{t-1,k}, \mathbf{y}_{1:t,k},a_{1:t-1,k}; \boldsymbol{\theta}_k)$ and infer a sample of $\Vector{s}_{t,k}$ from $q_{\phi}$ as the output. The generated sample further acts as an input to the decoder, and the decoder outputs $\hat{o}_{t+1}$ and $\hat{r}_{t+2}$. Moreover, the state dynamics which satisfies a Markov process is modeled with an MLP and MDN.

For the "multi-model" component, we include the domain index $k$ as an input to LSTM and involve $\boldsymbol{\theta}_k$ as free parameters in the inference model $q_{\phi}$, by assuming that $\boldsymbol{\theta}_k$ also characterizes the changes in the inference model. Moreover, we embed $\theta_k^s$ in state dynamics $p_{\gamma}$, $\theta_k^o$ in observation function and $\theta_k^r$ in reward function in the decoder. With such a characterization, except $\boldsymbol{\theta}_k$, all other parameters are shared across domains, so that all we need to update in the target domain is the low-dimensional $\boldsymbol{\theta}_k$, which greatly improves the sample efficiency and the statistical efficiency in the target domain--usually few samples are needed.

For the "structure" component, the latent states are organized with structures, captured by the mask $\Vector{C}^{\Vector{s} \veryshortarrow \Vector{s}}$. Also, the structural relationships among perceived signals, latent states, the action variable, the reward variable, and the domain-specific factors are embedded as free parameters (structural vectors and scalars $\Vector{c}^{\Vector{s} \veryshortarrow \Vector{s}}, c^{a \veryshortarrow \Vector{s}}, \Vector{c}^{\theta_{k} \veryshortarrow \Vector{s}}, \Vector{c}^{\Vector{s} \veryshortarrow r}$, and $c^{a \veryshortarrow r}$) into MiSS-VAE. 



\section{Complete experimental results}
In this section we provide the complete experimental results on both of our settings, modified Cartpole and modified Pong in the OpenAI Gym~\citep{brockman2016openai}. In the POMDP setting, we use images as input, which for Cartpole look like Fig.~\ref{Figure: visual_cartpole}(a) and for Pong look like Fig.~\ref{Fig: visual_atari_app}(a). For Cartpole, we also consider the MDP case, where the true states (cart position and velocity, pole angle and angular velocity) are used as the input to the model.

We consider changes in the state dynamics (e.g., the change of gravity or cart mass in Cartpole, or the change of orientation in Atari), changes in perceived signals (e.g., different noise levels on observed images in Cartpole, as shown in Fig.~\ref{Figure: visual_cartpole} or colors in Pong) and changes in reward functions (e.g., different reward functions in Pong based on the contact point of the ball), as shown in Fig.~\ref{Fig: visual_atari_app} for Pong.
For each of these factors, we take into account both \emph{interpolation} (where the factor value in the target domain is in the support of that in source domains), and \emph{extrapolation} (where it is out of the support w.r.t. the source domains).

We train on $n$ source domains based on the trajectory data generated by a random policy. 
In Cartpole experiments, for each domain we collect $10000$ trials with $40$ steps. For Pong experiments, each domain contains $40$ episodes data and each of them takes a maximum of $10000$ steps. 

\subsection{Complete results of modified Cartpole experiment}
  \begin{figure}[h]
     \centering
     \includegraphics[width=0.75\textwidth]{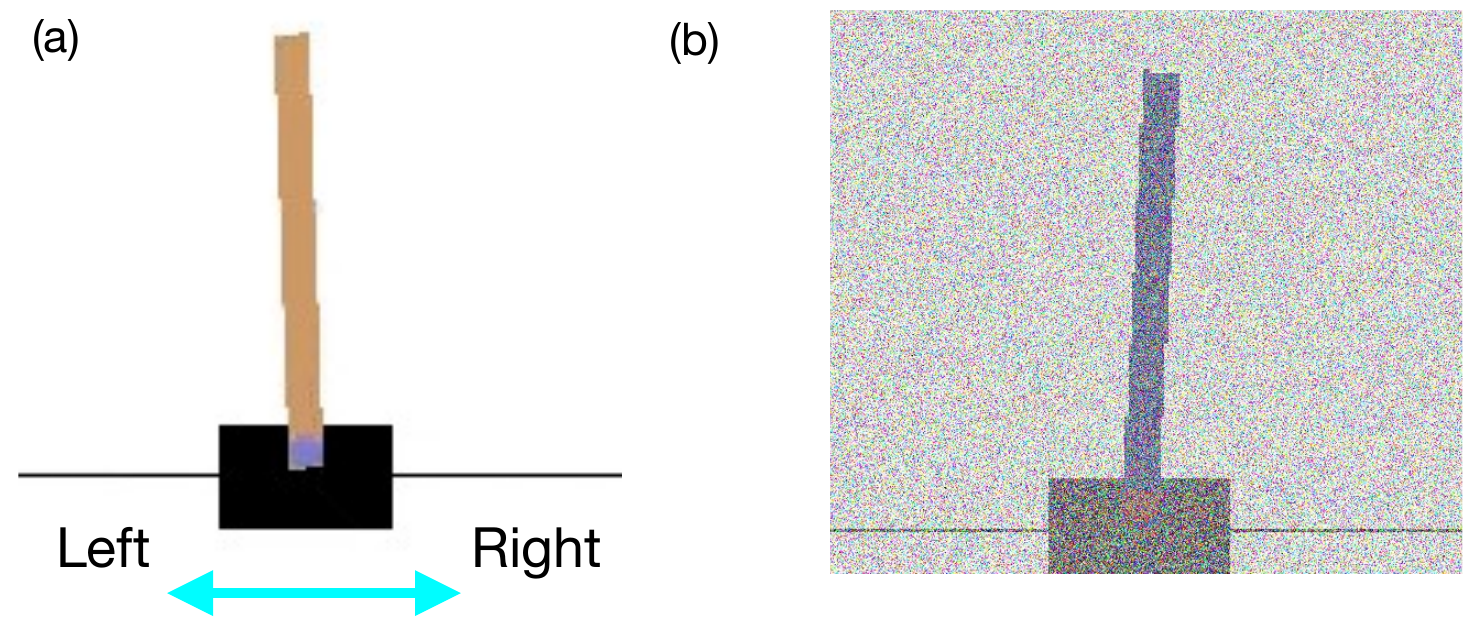}
     \caption{Visual examples of Cartpole game and change factors. (a) Cartpole game; (b) Modified Cartpole game with Gaussian noise on the image. The light blue arrows are added to show the direction in which the agent can move.}
     \label{Figure: visual_cartpole}
 \end{figure} 
 
The Cartpole problem consists of a cart and a vertical pendulum attached to the cart using a passive pivot joint. The cart can move left or right. The task is to prevent the vertical pendulum from falling by putting a force on the cart to move it left or right. The action space consists of two actions: moving left or  right. 

We introduce two change factors for the state dynamics $\theta^s_k$: varying gravity and varying mass of the cart, and a change factor in the observation function $\theta^o_k$ that is the image noise level.
Fig.~\ref{Figure: visual_cartpole} gives a visual example of Cartpole game, and the image with Gaussian noise. The images of the varying gravity and mass look exactly like the original image.
Specifically, in the gravity case, we consider source domains with gravity $g= \{5, 10, 20, 30, 40\}$. We take into account both interpolation (where the gravity in the target domain is in the support of that in source domains) with $g = \{15\}$,  and extrapolation (where it is out of the support w.r.t. the source domains) with $g = \{55\}$. 
Similarly, we consider source domains where the mass of the cart is $m = \{0.5, 1.5, 2.5, 3.5, 4.5\}$, while in target domains it is $m=\{1.0, 5.5\}$.
In terms of changes on the observation function $\theta^o_k$, we  add Gaussian noise on the images with variance $\sigma = \{0.25, 0.75, 1.25, 1.75, 2.25 \}$ in source domains, and $\sigma = \{0.5, 2.75\}$ in target domains.

We summarize the detailed settings in both source and target domains in Table~\ref{Table: cartpole_setting}.
In particular in each experiment we use all source domains for each change factor and one of the target domains at a time in either the interpolation and extrapolation set.

 \begin{table}[h]
    \centering
    \begin{tabular}{c|c|c|c}
    \toprule
        & Gravity & Mass & Noise \\ \hline
    Source domains   &  $\{5, 10, 20, 30, 40\}$ & $\{0.5, 1.5, 2.5, 3.5, 4.5\}$ & $\{0.25, 0.75, 1.25, 1.75, 2.25\}$ \\ \hline
    Interpolation set  &  $\{15\}$ & $\{1.0\}$ & $\{0.5\}$ \\ \hline
    Extrapolation set  &  $\{55\}$ & $\{5.5\}$ & $\{2.75\}$ \\ \bottomrule
    \end{tabular}
    \caption{The settings of source and target domains for modified Cartpole experiments.}
    \label{Table: cartpole_setting}
\end{table}

\begin{figure}[t]
    \centering
    \includegraphics[width=0.85\textwidth]{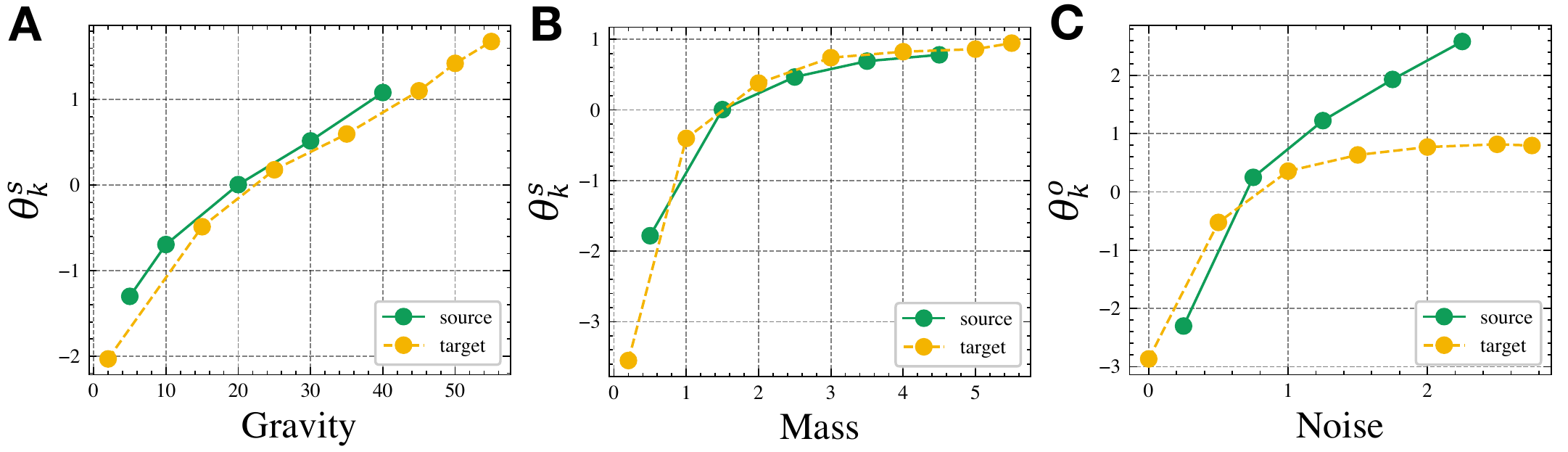}
    \caption{The estimated $\thetaSp$ for the three change factors in Cartpole (POMDP): gravity (A), mass (B), and noise level (C) for $N_{target}=50$. 
    \label{Fig: theta_cartpole}.}
\end{figure}

\subsubsection{Learned $\theta_k$ in Cartpole experiments}
Fig.~\ref{Fig: theta_cartpole} shows the estimated $\boldsymbol{\theta_k}$ in the modified Cartpole experiments. For the gravity and mass scenarios, the learned parameters with different sample sizes are close with each other. This phenomenon indicates that even with only a few samples $(N_{target}=50)$, AdaRL can estimate these change parameters very well. For the noise level factor, the learned curves with different sample sizes have a similar behaviour, but the distance is larger. We can see that the $\theta_k$ we learn is approximately a linear function of the actual perturbation in gravity, while for the mass and noise are monotonic functions.

\subsubsection{Average final scores for multiple $N_{target}$ in Cartpole experiments}

\begin{table}
\centering
\tiny
\begin{tabular}{c|ccccccc}
\toprule
~ & \makecell*[c]{Oracle \\ \color{blue}{Upper bound}} & \makecell*[c]{Non-t \\ \color{blue}{lower bound}}   & \makecell*[c]{PNN \\ \citep{rusu2016progressive}} & \makecell*[c]{PSM \\ \citep{PSM}} & \makecell*[c]{MTQ \\ \citep{MetaQ_20} } & \makecell*[c]{AdaRL* \\ \color{blue}{Ours w/o masks}}  & \makecell*[c]{AdaRL \\ \color{blue}{Ours}} \\  \specialrule{0.05em}{-1.5pt}{-1.5pt}
G\_in & \makecell*[c]{$1930.5$ \\ $(\pm 1042.6)$} & \makecell*[c]{$828.5$~$\bullet$ \\ $(\pm 509.4)$}   & \makecell*[c]{$1113.4$~$\bullet$ \\ $(\pm 719.2)$}  & \makecell*[c]{$1008.5$~$\bullet$ \\ $(\pm 453.6)$} & \makecell*[c]{$1257.2$ \\ $(\pm 503.5)$} & \makecell*[c]{$1290.6$ \\ $(\pm 589.1)$} & \makecell*[c]{$\color{red}{1302.7}$ \\ $(\pm 874.0)$} \\ \specialrule{0.05em}{-1.5pt}{-1.5pt}
G\_out  & \makecell*[c]{$408.6$ \\ $(\pm 67.2)$} & \makecell*[c]{$54.0$~$\bullet$ \\ $(\pm 13.6)$}   & \makecell*[c]{$109.7$~$\bullet$ \\ $(\pm 24.2)$} & \makecell*[c]{$156.8$~$\bullet$ \\ $(\pm 49.5)$} & \makecell*[c]{$120.5$~$\bullet$ \\ $(\pm 87.4)$} & \makecell*[c]{$173.8$ \\ $(\pm 39.3)$} & \makecell*[c]{$\color{red}{198.4}$ \\ $(\pm 54.5)$} \\ \specialrule{0.05em}{-1.5pt}{-1.5pt}
M\_in  & \makecell*[c]{$2004.8$ \\ $(\pm 404.3)$} & \makecell*[c]{$447.2$~$\bullet$ \\ $(\pm 39.6)$}   & \makecell*[c]{$1120.6$~$\bullet$ \\ $(\pm 348.1)$} & \makecell*[c]{$982.5$~$\bullet$ \\ $(\pm 363.2)$} & \makecell*[c]{$1245.7$~$\bullet$ \\ $(\pm 274.0)$} & \makecell*[c]{$1095.8$~$\bullet$ \\ $(\pm 521.3)$} & \makecell*[c]{$\color{red}{\textbf{1361.0}}$ \\ $(\pm 327.3)$} \\ \specialrule{0.05em}{-1.5pt}{-1.5pt}
M\_out & \makecell*[c]{$1498.6$ \\ $(\pm 625.4)$} & \makecell*[c]{$130.6$~$\bullet$ \\ $(\pm 39.8)$} & \makecell*[c]{$528.5$~$\bullet$ \\ $(\pm 251.4)$} & \makecell*[c]{$830.6$~$\bullet$ \\ $(\pm 317.2)$} & \makecell*[c]{$875.2$~$\bullet$ \\ $(\pm 262.5)$} & \makecell*[c]{$764.2$~$\bullet$ \\ $(\pm 320.9)$}  & \makecell*[c]{$\color{red}{\textbf{1082.5}}$ \\ $(\pm 236.3)$}  \\ \specialrule{0.05em}{-1.5pt}{-1.5pt}
N\_in & \makecell*[c]{$8640.5$ \\ $(\pm 3086.1)$} & \makecell*[c]{$679.4$~$\bullet$ \\ $(\pm 283.5)$} & \makecell*[c]{$4170.6$~$\bullet$ \\ $(\pm 2202.2)$} & \makecell*[c]{$4936.5$~$\bullet$ \\ $(\pm 1604.9)$} & \makecell*[c]{$3985.7$~$\bullet$ \\ $(\pm 2387.4)$} & \makecell*[c]{$4954.3$~$\bullet$ \\ $(\pm 2627.8)$} & \makecell*[c]{$\color{red}{\textbf{5761.2}}$ \\ $(\pm 2341.5)$}  \\ \specialrule{0.05em}{-1.5pt}{-1.5pt}
N\_out & \makecell*[c]{$4465.2$ \\ $(\pm 667.3)$} & \makecell*[c]{$584.0$~$\bullet$ \\ $(\pm 429.2)$}   & \makecell*[c]{$2841.5$~$\bullet$ \\ $(\pm 385.2)$} & \makecell*[c]{$2650.2$~$\bullet$ \\ $(\pm 453.6)$} & \makecell*[c]{$2654.0$~$\bullet$ \\ $(\pm 277.9)$} & \makecell*[c]{$1785.2$~$\bullet$ \\ $(\pm 470.3)$} & \makecell*[c]{$\color{red}{\textbf{3318.7}}$ \\ $(\pm 293.5)$} \\ \specialrule{0.1em}{-1.0pt}{-1.5pt}
\end{tabular}
\caption{Average final scores in modified Cartpole (POMDP) with $N_{target}=20$. The best non-oracle results are marked in \textcolor{red}{\textbf{red}}. G, M, and N denote the gravity, mass, and noise respectively. 
}
\label{tab: Cart_POMDP_20}
\end{table}

\begin{table}
\centering
\tiny
\begin{tabular}{c|ccccccc}
\toprule
~ & \makecell*[c]{Oracle \\ \color{blue}{Upper bound}} & \makecell*[c]{Non-t \\ \color{blue}{lower bound}}   & \makecell*[c]{PNN \\ \citep{rusu2016progressive}} & \makecell*[c]{PSM \\ \citep{PSM}} & \makecell*[c]{MTQ \\ \citep{MetaQ_20} } & \makecell*[c]{AdaRL* \\ \color{blue}{Ours w/o masks}}  & \makecell*[c]{AdaRL \\ \color{blue}{Ours}} \\  \specialrule{0.05em}{-1.5pt}{-1.5pt}
G\_in & \makecell*[c]{$1930.5$ \\ $(\pm 1042.6)$} & \makecell*[c]{$1115.2$~$\bullet$ \\ $(\pm 341.8)$}  & \makecell*[c]{$1637.4$~$\bullet$ \\ $(\pm 378.2)$} & \makecell*[c]{$1838.4$ \\ $(\pm 358.1)$} & \makecell*[c]{$1459.2$~$\bullet$ \\ $(\pm 688.5)$} & \makecell*[c]{$1864.5$ \\ $(\pm 694.1)$}  & \makecell*[c]{$\color{red}{1924.6}$ \\ $(\pm 874.0)$} \\ \specialrule{0.05em}{-1.5pt}{-1.5pt}
G\_out & \makecell*[c]{$408.6$ \\ $(\pm 67.2)$} & \makecell*[c]{$161.3$~$\bullet$ \\ $(\pm 65.9)$}  & \makecell*[c]{$329.6$~$\bullet$ \\ $(\pm 48.9)$} & \makecell*[c]{$\color{red}{457.3}$ \\ $(\pm 138.5)$} & \makecell*[c]{$393.2$~$\bullet$ \\ $(\pm 76.5)$} & \makecell*[c]{$384.2$~$\bullet$ \\ $(\pm 103.7)$} & \makecell*[c]{${410.6}$ \\ $(\pm 92.3)$}  \\ \specialrule{0.05em}{-1.5pt}{-1.5pt}
M\_in & \makecell*[c]{$2004.8$ \\ $(\pm 404.3)$} & \makecell*[c]{$596.0$~$\bullet$ \\ $(\pm 373.4)$}  & \makecell*[c]{$1672.3$~$\bullet$ \\ $(\pm 642.9)$} & \makecell*[c]{$1798.5$~$\bullet$ \\ $(\pm 493.0)$} & \makecell*[c]{$\color{red}{1905.4}$ \\ $(\pm 378.2)$} & \makecell*[c]{$1864.2$ \\ $(\pm 309.5)$}  & \makecell*[c]{$1898.5$ \\ $(\pm 683.4)$} \\ \specialrule{0.05em}{-1.5pt}{-1.5pt}
M\_out & \makecell*[c]{$1498.6$ \\ $(\pm 625.4)$} & \makecell*[c]{$325.6$~$\bullet$ \\ $(\pm 146.3)$}  & \makecell*[c]{$1206.8$~$\bullet$ \\ $(\pm 394.7)$} & \makecell*[c]{$1339.4$~$\bullet$ \\ $(\pm 520.5)$} & \makecell*[c]{$1296.2$~$\bullet$ \\ $(\pm 773.1)$} & \makecell*[c]{$1297.4$~$\bullet$ \\ $(\pm 411.2)$}  & \makecell*[c]{$\color{red}{\textbf{1486.3}}$ \\ $(\pm 598.2)$} \\ \specialrule{0.05em}{-1.5pt}{-1.5pt}
N\_in & \makecell*[c]{$8640.5$ \\ $(\pm 3086.1)$} & \makecell*[c]{$1239.6$~$\bullet$ \\ $(\pm 380.5)$}  & \makecell*[c]{$6476.2$~$\bullet$ \\ $(\pm 3132.9)$} & \makecell*[c]{$7493.4$~$\bullet$ \\ $(\pm 1981.5)$} & \makecell*[c]{$7932.9$ \\ $(\pm 2389.0)$}  & \makecell*[c]{$7382.4$~$\bullet$ \\ $(\pm 2915.3)$} & \makecell*[c]{$\color{red}{8179.8}$ \\ $(\pm 2356.0)$} \\ \specialrule{0.05em}{-1.5pt}{-1.5pt}
N\_out & \makecell*[c]{$4465.2$ \\ $(\pm 667.3)$} & \makecell*[c]{$962.5$~$\bullet$ \\ $(\pm 341.8)$} & \makecell*[c]{$3043.9$~$\bullet$ \\ $(\pm 1098.6)$} & \makecell*[c]{$2987.2$~$\bullet$\\ $(\pm 1172.3)$} & \makecell*[c]{$3892.4$~$\bullet$ \\ $(\pm 763.0)$}  & \makecell*[c]{$4183.6$ \\ $(\pm 782.2)$} & \makecell*[c]{$\color{red}{4235.2}$ \\ $(\pm 532.4)$}\\ \specialrule{0.1em}{-1.0pt}{-1.5pt}
\end{tabular}
\caption{Average final scores in modified Cartpole (POMDP) with $N_{target}=10000$. The best non-oracle results are marked in \textcolor{red}{\textbf{red}}. G, M, and N denote the gravity, mass, and noise respectively. 
}
\label{tab: Cart_POMDP_10k}
\end{table}

\begin{table}
\centering
\scriptsize
\begin{tabular}{c|cccccc}
\toprule
 ~ & \makecell*[c]{Oracle \\ \color{blue}{Upper bound}} & \makecell*[c]{Non-t \\ \color{blue}{lower bound}}  &
 \makecell*[c]{CAVIA \\ \citep{CAVIA_19}} & \makecell*[c]{PEARL \\ \citep{PEARL_ICML19}} &  \makecell*[c]{AdaRL* \\ \color{blue}{Ours w/o masks}} & \makecell*[c]{AdaRL \\ \color{blue}{Ours}} \\ \specialrule{0.05em}{-1.5pt}{-1.5pt}
G\_in & \makecell*[c]{$2486.1$ \\ $(\pm 369.7)$}  & \makecell*[c]{$972.6$~$\bullet$ \\ $(\pm 368.5)$} 
& \makecell*[c]{$1651.5$~$\bullet$ \\ $(\pm 623.8)$} & \makecell*[c]{$1720.3$~$\bullet$ \\ $(\pm 589.4)$} & \makecell*[c]{$1602.7$~$\bullet$ \\ $(\pm 393.6)$} & \makecell*[c]{$\color{red}{\textbf{1943.2}}$ \\ $(\pm 765.4)$} \\ \specialrule{0.05em}{-1.5pt}{-1.5pt}
G\_out & \makecell*[c]{$693.9$ \\ $(\pm 100.6)$}  & \makecell*[c]{$243.8$~$\bullet$ \\ $(\pm 45.2)$} & \makecell*[c]{$356.2$ \\ $(\pm 76.5)$} & \makecell*[c]{$362.1$ \\ $(\pm 57.3)$} &\makecell*[c]{$292.4$~$\bullet$ \\ $(\pm 91.8)$} &\makecell*[c]{$\color{red}{395.6}$ \\ $(\pm 101.7)$}  \\ \specialrule{0.05em}{-1.5pt}{-1.5pt}
M\_in & \makecell*[c]{$2678.2$ \\ $(\pm 630.5)$} & \makecell*[c]{$480.3$~$\bullet$ \\ $(\pm 136.2)$} 
& \makecell*[c]{$1306.8$~$\bullet$ \\ $(\pm 376.5)$} & \makecell*[c]{$1589.4$~$\bullet$ \\ $(\pm 682.3)$}   & \makecell*[c]{$1624.8$~$\bullet$ \\ $(\pm 531.6)$} & \makecell*[c]{$\color{red}{\textbf{1962.0}}$ \\ $(\pm 652.8)$} \\ \specialrule{0.05em}{-1.5pt}{-1.5pt}
M\_out & \makecell*[c]{$1405.6$ \\ $(\pm 368.0)$}  & \makecell*[c]{$306.5$~$\bullet$ \\ $(\pm 162.4)$} & \makecell*[c]{$853.2$~$\bullet$ \\ $(\pm 317.6)$} & \makecell*[c]{$969.4$~$\bullet$ \\ $(\pm 238.5)$} & \makecell*[c]{$984.6$~$\bullet$ \\ $(\pm 209.8)$} & \makecell*[c]{$\color{red}{\textbf{1113.5}}$ \\ $(\pm 394.2)$} \\ \specialrule{0.05em}{-1.5pt}{-1.5pt}
\makecell*[c]{G\_in \\ \& M\_in} & \makecell*[c]{$1984.2$ \\ $(\pm 871.3)$}  & \makecell*[c]{$374.9$~$\bullet$ \\ $(\pm 126.8)$} &
\makecell*[c]{$1174.3$~$\bullet$ \\ $(\pm 298.2)$} & \makecell*[c]{$964.3$~$\bullet$ \\ $(\pm 370.5)$} & \makecell*[c]{$1209.6$~$\bullet$ \\ $(\pm 425.7)$}  & \makecell*[c]{$\color{red}{\textbf{1392.7}}$ \\ $(\pm 392.6)$} \\ \specialrule{0.05em}{-1.5pt}{-1.5pt}
\makecell*[c]{G\_out \\ \& M\_out} & \makecell*[c]{$939.4$ \\ $(\pm 270.5)$} & \makecell*[c]{$292.4$~$\bullet$ \\ $(\pm 127.6)$}  
& \makecell*[c]{$494.6$~$\bullet$ \\ $(\pm 201.3)$} & \makecell*[c]{$368.4$~$\bullet$ \\ $(\pm 259.8)$}  & \makecell*[c]{$399.8$~$\bullet$ \\ $(\pm 242.5)$} & \makecell*[c]{$\color{red}{\textbf{531.2}}$ \\ $(\pm 272.5)$}  \\ \specialrule{0.1em}{-1.0pt}{-1.5pt}
\end{tabular}
\caption{Average final scores in modified Cartpole (MDP) with $N_{target}=20$. The best non-oracle results are marked in \textcolor{red}{red}, while \textbf{bold} indicates a statistically significant result w.r.t. all the baselines.
G, M, and N denote the gravity, mass, and noise respectively. "$\bullet$" indicates the baselines for which the improvements of AdaRL are statistically significant (via Wilcoxon signed-rank test at $5\%$ significance level). }
\label{tab: Cart_MDP_20}
\end{table}

\begin{table}
\centering
\scriptsize
\begin{tabular}{c|cccccc}
\toprule
 ~ & \makecell*[c]{Oracle \\ \color{blue}{Upper bound}} & \makecell*[c]{Non-t \\ \color{blue}{lower bound}}  &
 \makecell*[c]{CAVIA \\ \citep{CAVIA_19}} & \makecell*[c]{PEARL \\ \citep{PEARL_ICML19}} &  \makecell*[c]{AdaRL* \\ \color{blue}{Ours w/o masks}} & \makecell*[c]{AdaRL \\ \color{blue}{Ours}} \\ \specialrule{0.05em}{-1.5pt}{-1.5pt}
G\_in & \makecell*[c]{$2486.1$ \\ $(\pm 369.7)$} & \makecell*[c]{$986.3$~$\bullet$ \\ $(\pm 392.5)$} 
& \makecell*[c]{$1907.4$~$\bullet$ \\ $(\pm 526.8)$} & \makecell*[c]{$2102.3$~$\bullet$ \\ $(\pm 398.5)$} & \makecell*[c]{$1864.0$~$\bullet$ \\ $(\pm 369.2)$}  & \makecell*[c]{$\color{red}{\textbf{2365.1}}$ \\ $(\pm 403.5)$} \\ \specialrule{0.05em}{-1.5pt}{-1.5pt}
G\_out & \makecell*[c]{$693.9$ \\ $(\pm 100.6)$}  & \makecell*[c]{$349.2$~$\bullet$ \\ $(\pm 72.0)$}
& \makecell*[c]{$502.9$~$\bullet$ \\ $(\pm 133.2)$} & \makecell*[c]{$585.7$~$\bullet$ \\ $(\pm 98.6)$} & \makecell*[c]{$494.7$~$\bullet$ \\ $(\pm 151.4)$} &\makecell*[c]{$\color{red}{\textbf{604.8}}$ \\ $(\pm 117.6)$} \\ \specialrule{0.05em}{-1.5pt}{-1.5pt}
M\_in & \makecell*[c]{$2678.2$ \\ $(\pm 630.5)$}  & \makecell*[c]{$643.9$~$\bullet$ \\ $(\pm 281.3)$} 
& \makecell*[c]{$2008.6$~$\bullet$ \\ $(\pm 436.2)$} & \makecell*[c]{$2106.2$~$\bullet$ \\ $(\pm 436.7)$}  & \makecell*[c]{$2148.9$~$\bullet$ \\ $(\pm 387.2)$} & \makecell*[c]{$\color{red}{\textbf{2415.2}}$ \\ $(\pm 591.4)$} \\ \specialrule{0.05em}{-1.5pt}{-1.5pt}
M\_out & \makecell*[c]{$1405.6$ \\ $(\pm 368.0)$}  & \makecell*[c]{$617.4$~$\bullet$ \\ $(\pm 145.3)$}  & 
\makecell*[c]{$1182.7$~$\bullet$ \\ $(\pm 255.8)$} &  \makecell*[c]{$\color{red}{1294.5}$ \\ $(\pm 210.6)$} &
\makecell*[c]{$1207.5$ \\ $(\pm 251.3)$} & \makecell*[c]{$1263.5$ \\ $(\pm 362.9)$} \\ \specialrule{0.05em}{-1.5pt}{-1.5pt}
\makecell*[c]{G\_in \\ \& M\_in} & \makecell*[c]{$1984.2$ \\ $(\pm 871.3)$} & \makecell*[c]{$452.6$~$\bullet$ \\ $(\pm 178.3)$} 
& \makecell*[c]{$1275.0$~$\bullet$ \\ $(\pm 432.5)$} & \makecell*[c]{$1468.7$~$\bullet$ \\ $(\pm 697.2)$}  & \makecell*[c]{$1395.4$~$\bullet$ \\ $(\pm 387.2)$} & \makecell*[c]{$\color{red}{\textbf{1589.4}}$ \\ $(\pm 379.5)$} \\ \specialrule{0.05em}{-1.5pt}{-1.5pt}
\makecell*[c]{G\_out \\ \& M\_out} & \makecell*[c]{$939.4$ \\ $(\pm 270.5)$} & \makecell*[c]{$596.2$~$\bullet$ \\ $(\pm 137.5)$} 
& \makecell*[c]{$709.5$~$\bullet$ \\ $(\pm 386.0)$} & \makecell*[c]{$743.8$~$\bullet$ \\ $(\pm 200.9)$} & \makecell*[c]{$724.7$ \\ $(\pm 283.8)$}  & \makecell*[c]{$\color{red}{769.3}$ \\ $(\pm 208.4)$}  \\ \specialrule{0.10em}{-1.0pt}{-1.5pt}
\end{tabular}
\caption{Average final scores in modified Cartpole (MDP) with $N_{target}=10000$. The best non-oracle results are marked in \textcolor{red}{red}, while \textbf{bold} indicates a statistically significant result w.r.t. all the baselines.
G, M, and N denote the gravity, mass, and noise respectively.}
\label{tab: Cart_MDP_10k}
\end{table}

Tables~\ref{tab: Cart_POMDP_20} and~\ref{tab: Cart_POMDP_10k} show the complete results of the modified Cartpole experiments (POMDP settings) for $N_{target} = 20$ and $N_{target} = 10000$. Table~\ref{tab: Cart_MDP_20} and~\ref{tab: Cart_MDP_10k} give the complete results of the modified Cartpole experiments (MDP settings with symbolic input) for $N_{target} = 20$ and $N_{target} = 10000$. 
We average the scores across $30$ trials from different random seeds during the policy learning stage. The results suggest that, in most cases, AdaRL can outperform other baselines. 



\subsubsection{Average policy learning curves in terms of steps}
\begin{figure}[h]
    \centering
    \includegraphics[width=0.9\textwidth]{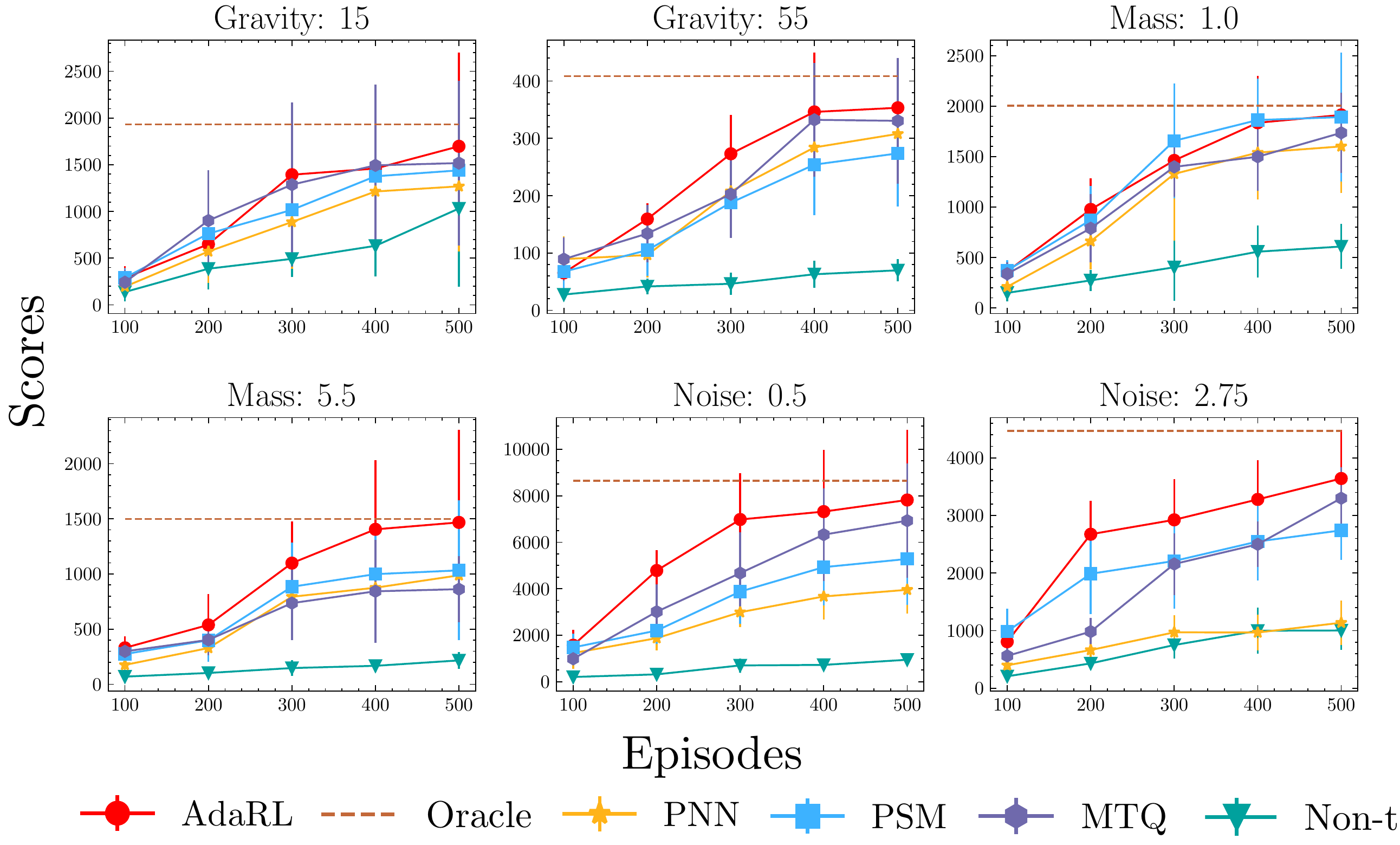}
    \caption{Learning curves for modified Cartpole experiments (POMDP version) with change factors. The reported scores are averaged across $30$ trials. }
    \label{Figure: cart_POMDP_summary}
\end{figure}

\begin{figure}[h]
    \centering
    \includegraphics[width=0.9\textwidth]{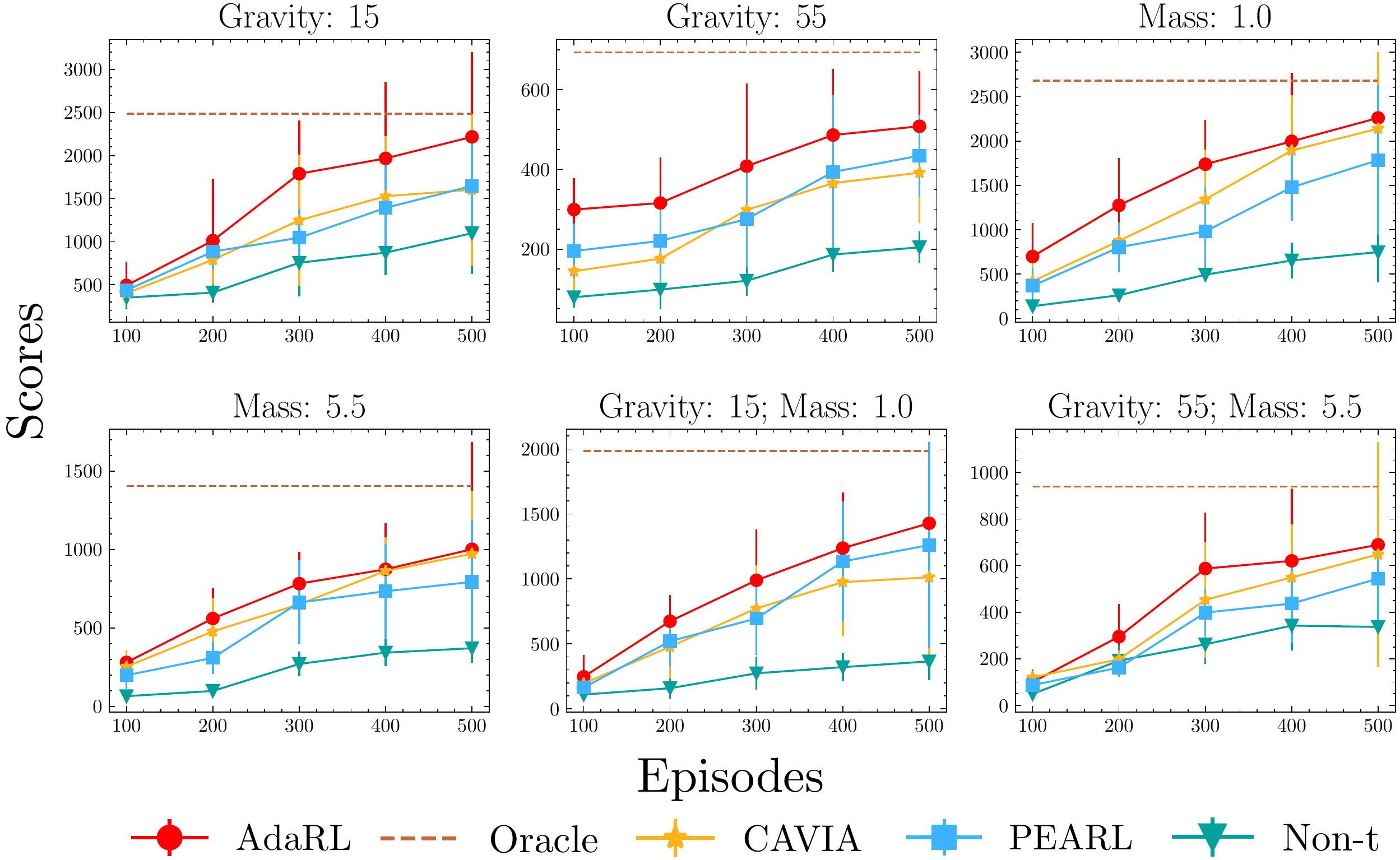}
    \caption{Learning curves for modified Cartpole experiments (MDP version) with change factors. The reported scores are averaged across $30$ trials. }
    \label{Figure: cart_MDP_summary}
\end{figure}
Fig.~\ref{Figure: cart_MDP_summary} and~\ref{Figure: cart_POMDP_summary} provide the learning curves for modified Cartpole experiments (MDP and POMDP versions) with multiple change factors. In most cases, AdaRL can converge faster than other baselines. 

\newpage

\newpage
\subsection{Complete results of the modified Pong experiment with changing dynamics and observations}
Atari Pong is a two-dimensional game that simulates table tennis. The agent controls a paddle moving up and down vertically, aiming at hitting the ball. The goal for the agent is to reach higher scores, which are earned when the other agent (hard-coded) fails to hit back the ball. 
We show the example of the original visual inputs and how it appears after we have changed each of the change factors in Fig.~\ref{Fig: visual_atari_app}.

In source domains, the degrees are chosen from $\omega = \{0\degree, 180\degree \}$, and in target domains, they are chosen from $\omega = \{90\degree, 270\degree \}$. 
For the image size, we reduce the original image by a factor of $\{2, 4, 6, 8\}$ in source domains and by a factor of $\{3, 9\}$ in target domains.

We summarize the detailed settings in both source and target domains in Table~\ref{Table: atari_setting}.
In particular, in each experiment we use all source domains for each change factor and one of the target domains at a time in either the interpolation and extrapolation set.

\begin{figure}
    \centering
    \includegraphics[width=\textwidth]{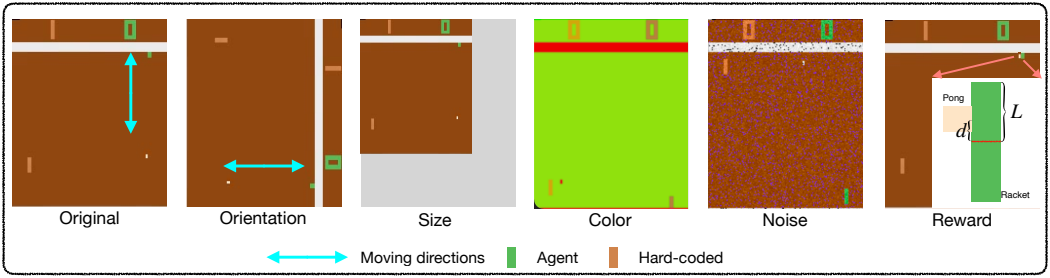}
    \caption{Visual example of the original Pong game and the various change factors. The light blue arrows are added to show the direction in which the agent can move.}
    \label{Fig: visual_atari_app}
\end{figure}

\begin{table}[H]
    \centering
    \footnotesize
    \begin{tabular}{c|c|c|c|c}
    \toprule
        & Size & Orientations & Noise & Background colors\\ \hline
    Source domains   &  $\{2, 4, 6, 8\}$ & $0\degree, 180\degree$ & $\{0.25, 0.75, 1.25, 1.75, 2.25\}$ & original, green, red\\ \hline
    Interpolation set  &  $\{3\}$ & $90\degree $ & $\{1.0\}$ & yellow\\ \hline
    Extrapolation set  &  $\{ 9\}$ & $270\degree$ & $\{2.75\}$ & white\\ \bottomrule
    \end{tabular}
    \caption{The settings of source and target domains for modified Pong experiments.}
    \label{Table: atari_setting}
\end{table}

\begin{figure}
    \centering
    \begin{subfigure}[b]{0.7\textwidth}
    \includegraphics[width=\textwidth]{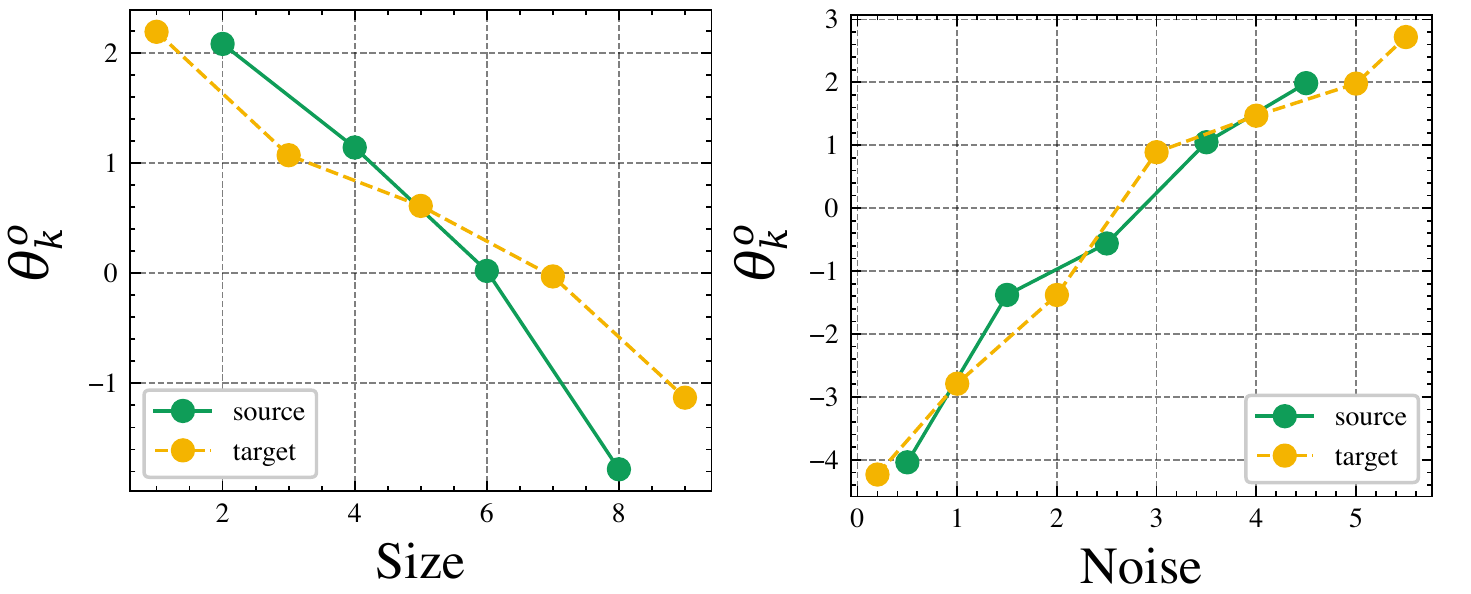}
        \caption{Source vs. target domains for $N_{target}=10000$.}
    \label{Fig: theta_pong}
    \end{subfigure}
    \hfill
    \begin{subfigure}[b]{0.7\textwidth}
    \includegraphics[width=\textwidth]{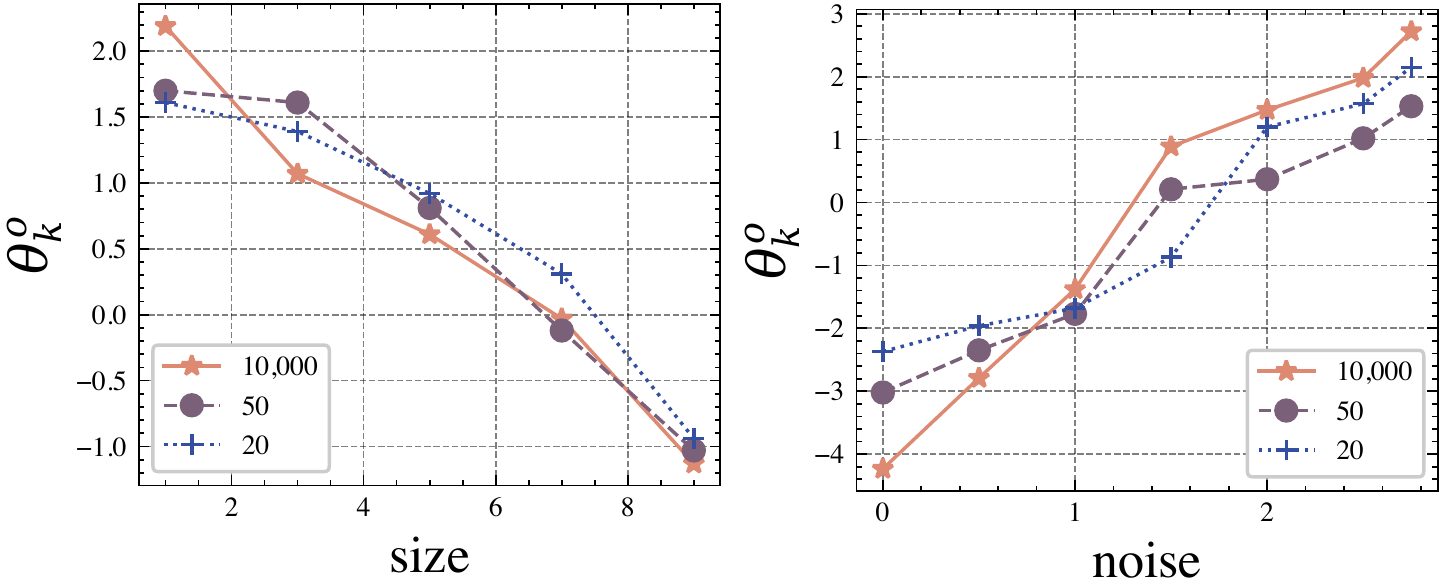}
    \caption{Different $N_{target}$ in target domains. }
    \label{Fig: theta_pong_sample}
    \end{subfigure}
    \caption{Learned $\theta_k^{o}$ for the two change factors, size and noise, in modified Pong.}
\end{figure}

\subsubsection{Learned $\theta_k$ in modified Pong experiments}
Fig.~\ref{Fig: theta_pong}, Table~\ref{Table: pong_theta_orientation}, and Table~\ref{Table: pong_theta_color} show the learned $\theta_k$ in modified Pong experiments across different change factors. 

\begin{table}[H]
\centering
\caption{The learned $\theta^s_{k}$ across different orientation angles with different $N_{target}$ in modified Pong. The bold columns represent the target domains.}
\begin{tabular}{c|c|c|c|c|c}
\toprule
$N_{target}$ & Orientations & $0\degree$ & \textbf{90$\degree$} & $180\degree$ &  \textbf{270$\degree$} \\ \hline
\multirow{2}*{$10000$} & $\theta_k^{s1}$  &  $-2.32$  & $-1.78$  & $1.69$  & $0.44$    \\ 
 & $\theta_k^{s2}$    &  $-2.94$ & $-1.86$  &  $1.47$ & $1.59$   \\ \hline
\multirow{2}*{$50$} & $\theta_k^{s1}$  & $-2.01$  & $-0.87$ &  $1.85$  & $0.69$    \\ 
 & $\theta_k^{s2}$ & $-2.59$  &  $-1.84$ &  $1.42$ & $1.07$   \\  \hline
\multirow{2}*{$20$} & $\theta_k^{s1}$  &  $-1.69$& $-1.23$  & $0.79$  & $1.38$    \\ 
 & $\theta_k^{s2}$  &  $-1.98$&  $-0.56$  & $0.82$ & $1.20$   \\  \hline
\bottomrule
\end{tabular}
\label{Table: pong_theta_orientation}
\end{table}

We can find that each dimension of the learned $\boldsymbol{\theta}^s_k$ is a nonlinear monotonic function of the change factors. Table~\ref{Table: pong_theta_orientation},  Table~\ref{Table: pong_theta_color} and Fig.~\ref{Fig: theta_pong_sample} also give the learned $\theta^k$ with different sample sizes $N_{target}$ in target domains. Similarly, the learned curves with different sample sizes are homologous. Even with a few samples, AdaRL can still capture the model changes well.

\begin{table}[h]
\centering
\caption{The learned $\theta_k^{o}$ across different colors with different $N_{target}$ in modified Pong. The bold columns represent the target domains.}
\begin{tabular}{c|c|c|c|c|c|c}
\toprule
$N_{target}$ & Colors &   Original  & Red & Green & \textbf{Yellow} &  \textbf{White}\\ \hline
\multirow{3}*{$10000$} & $\theta_k^{o1}$ & $1.36$ & $1.47$   & $1.04$ & $1.58$ & $-0.91$   \\ 
 & $\theta_k^{o2}$        &  $0.72$ & $-1.15$    & $1.17$  &  $0.96$ & $-1.33$     \\ 
 & $\theta_k^{o3}$        & $0.93$  & $-1.28$    &  $-1.31$  & $-0.65$  & $-1.09$    \\ 
 \hline
\multirow{3}*{$50$} & $\theta_k^{o1}$      & $0.96$&$1.13$ & $0.82$     & $1.26$ & $-0.73$   \\ 
 & $\theta_k^{o2}$       & $0.59$&$-0.46$ & $0.75$   &  $1.32$ & $-0.59$     \\ 
 & $\theta_k^{o3}$       & $0.61$& $-1.02$& $-0.91$   & $-0.18$  & $-0.49$    \\ \hline
 \multirow{3}*{$20$} & $\theta_k^{o1}$      & $1.09$& $1.38$ & $0.65$     & $1.30$ & $-0.46$   \\ 
 & $\theta_k^{o2}$       & $0.58$ & $-0.72$ & $0.39$   &  $1.60$ & $-0.27$     \\ 
 & $\theta_k^{o3}$       & $0.38$& $-0.59$& $-0.63$   & $-0.24$  & $-0.33$    \\ \hline
\bottomrule
\end{tabular}
\label{Table: pong_theta_color}
\end{table}

\subsubsection{Average final scores for multiple $N_{target}$}

Table~\ref{tab: Cart_POMDP_20} and~\ref{tab: Cart_POMDP_10k} provides the complete results of the modified Pong experiments with $N_{target} = 20$ and  $10000$, respectively. The details of both source and target domains are listed in Table~\ref{Table: atari_setting}. 
Similar to the results of Cartpole, AdaRL can perform the best among all baselines in Pong experiments. 
As shown in the results of the main paper, AdaRL consistently outperforms the other methods.

\subsubsection{Average policy learning curves in terms of steps}
Fig.~\ref{Figure: pong_summary} gives the learning curves for modified Pong experiments with multiple change factors. From the results, we can find that AdaRL can converge faster than other baselines.

\subsection{Complete results of the modified Pong experiment with changing reward functions}
Table~\ref{Table: reward_setting} summarizes the detailed change factors in both linear and non-linear reward groups. 
\begin{table}[h]
    \centering
    \footnotesize
    \begin{tabular}{c|c|c}
    \toprule
        & Linear reward ($k_1$)  & Non-linear reward ($k_2$) \\ \hline
    Source domains   &  $\{0.1, 0.2, 0.3, 0.4, 0.6, 0.7, 0.8\}$ & $\{2.0, 3.0, 5.0, 6.0, 7.0, 8.0, 9.0\}$ \\ \hline
    Interpolation set  &  $\{0.5\}$ & $\{4.0\}$ \\ \hline
    Extrapolation set  &  $\{0.9\}$ & $\{1.0\}$ \\ \bottomrule
    \end{tabular}
    \caption{The settings of source and target domains for modified Pong experiments.}
    \label{Table: reward_setting}
\end{table}

We denote with $L$ the half-length of the paddle and then formulate the two groups of reward functions as: (1) Linear reward functions: $r_t = \frac{k_1 d}{L}$, where $k_1 \in \{0.1, 0.2, 0.3, 0.4, 0.6, 0.7, 0.8\}$ in source domains and $k_1 \in \{0.5, 0.9\}$ in target domains; and (2) Non-linear reward functions: $r_t = \frac{k_2 L}{d+ 3 L}$, where $k_2 \in \{2.0, 3.0, 5.0, 6.0, 7.0, 8.0, 9.0\}$ in source domains and $k_2 \in \{1.0, 4.0\}$ in target domains. 

\subsubsection{Learned $\theta_r$ in modified Pong experiments}
Fig.~\ref{Fig: theta_reward_g1} and~\ref{Fig: theta_reward_g2} give the learned $\theta_r$ with both linear and non-linear rewards. In both groups, the learned $\theta_r$ is linearly or monotonically  correlated with the change factor $k_1$ and there is no significant gap between the learned $\theta_r$ with different $N_{target}$. 

\begin{figure}[h]
    \centering
    \begin{subfigure}[h]{0.38\textwidth}
    \includegraphics[width=\textwidth]{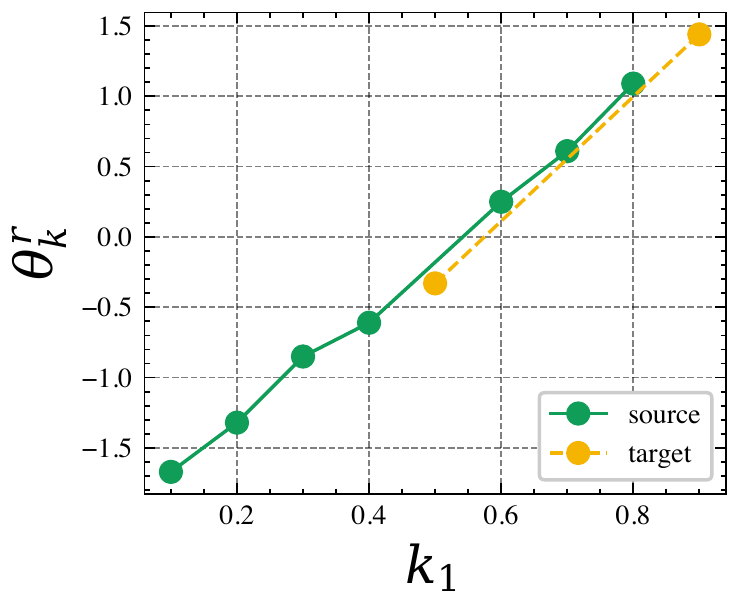}
        \caption{Learnt $\theta_r$ in each domain.}
    \label{Fig: theta_reward}
    \end{subfigure}
    \begin{subfigure}[h]{0.38\textwidth}
    \includegraphics[width=\textwidth]{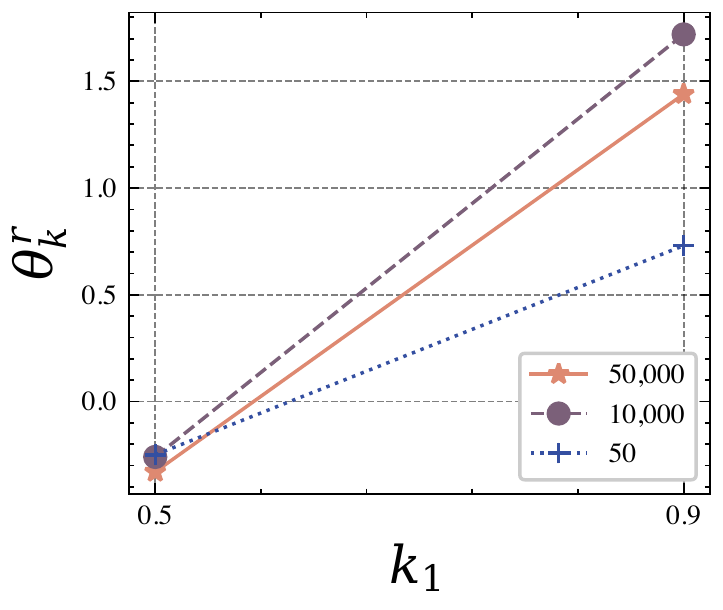}
    \caption{Different $N_{target}$ in target domains. }
    \label{Fig: theta_reward_sample}
    \end{subfigure}
    \caption{Learned $\theta_k^{r}$ for the linear changing rewards in modified Pong.}
    \label{Fig: theta_reward_g1}
\end{figure}

\begin{figure}[h]
    \centering
    \begin{subfigure}[h]{0.75\textwidth}
    \includegraphics[width=\textwidth]{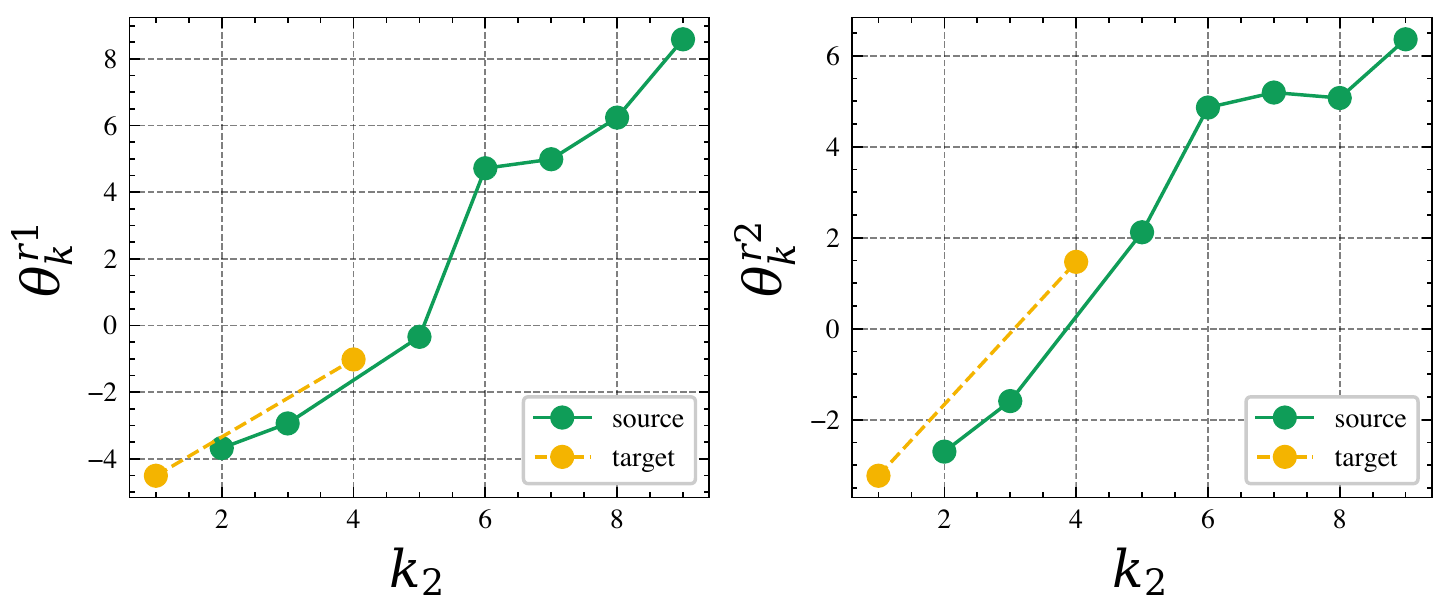}
        \caption{Learnt $\theta_r$ in each domain.}
    \label{Fig: theta_reward2}
    \end{subfigure}
    \begin{subfigure}[h]{0.75\textwidth}
    \includegraphics[width=\textwidth]{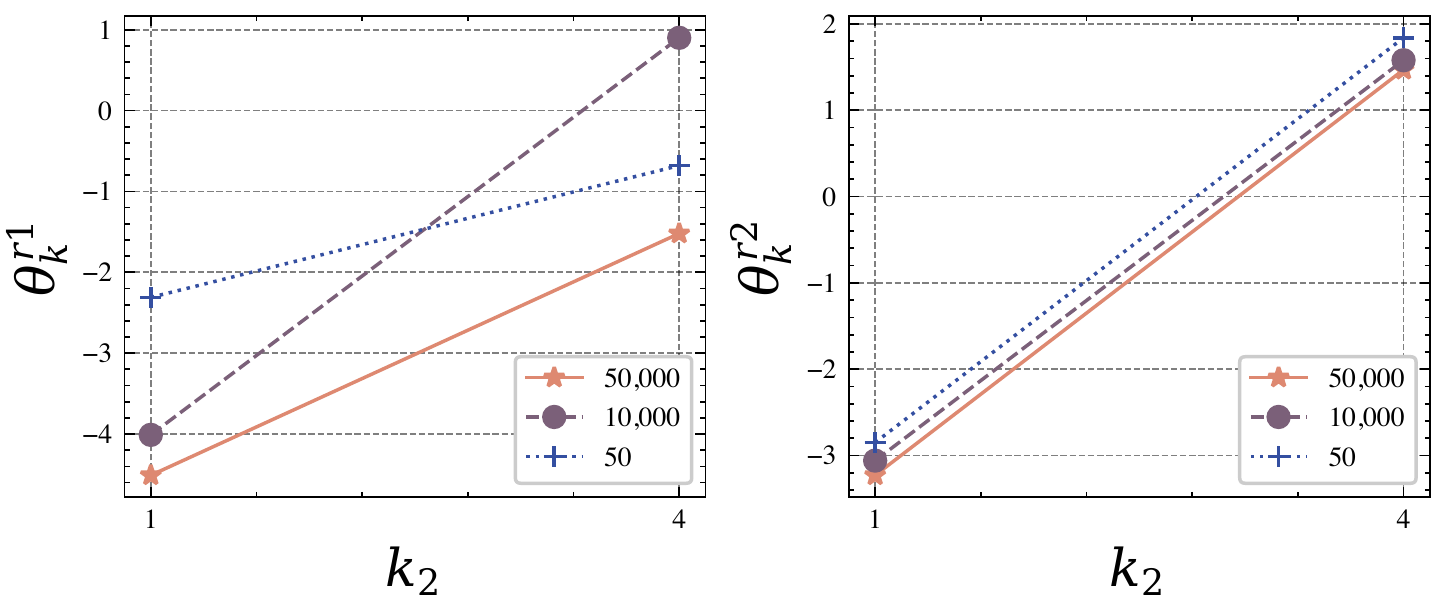}
    \caption{Different $N_{target}$ in target domains. }
    \label{Fig: theta_reward2_sample}
    \end{subfigure}
    \caption{Learned $\theta_k^{r}$ for the non-linear changing rewards in modified Pong.}
    \label{Fig: theta_reward_g2}
\end{figure}

\subsubsection{Average final scores for multiple $N_{target}$}
Table~\ref{tab: Pong_POMDP_reward_50}, ~\ref{tab: Pong_POMDP_reward_10k} and~\ref{tab: Pong_POMDP_reward_50k} shows the average final scores for $N_{target}=50$, $10000$ and $50000$ in modified Pong experiments with changing rewards. AdaRL consistently outperforms the other methods across different $N_{target}$. 
\subsubsection{Average policy learning curves in terms of steps}
Fig.~\ref{Figure: pong_summary} (last two rows) gives the learning curves for modified Pong experiments with changing rewards. 
\begin{figure}[h]
    \centering
    \includegraphics[width=0.9\textwidth]{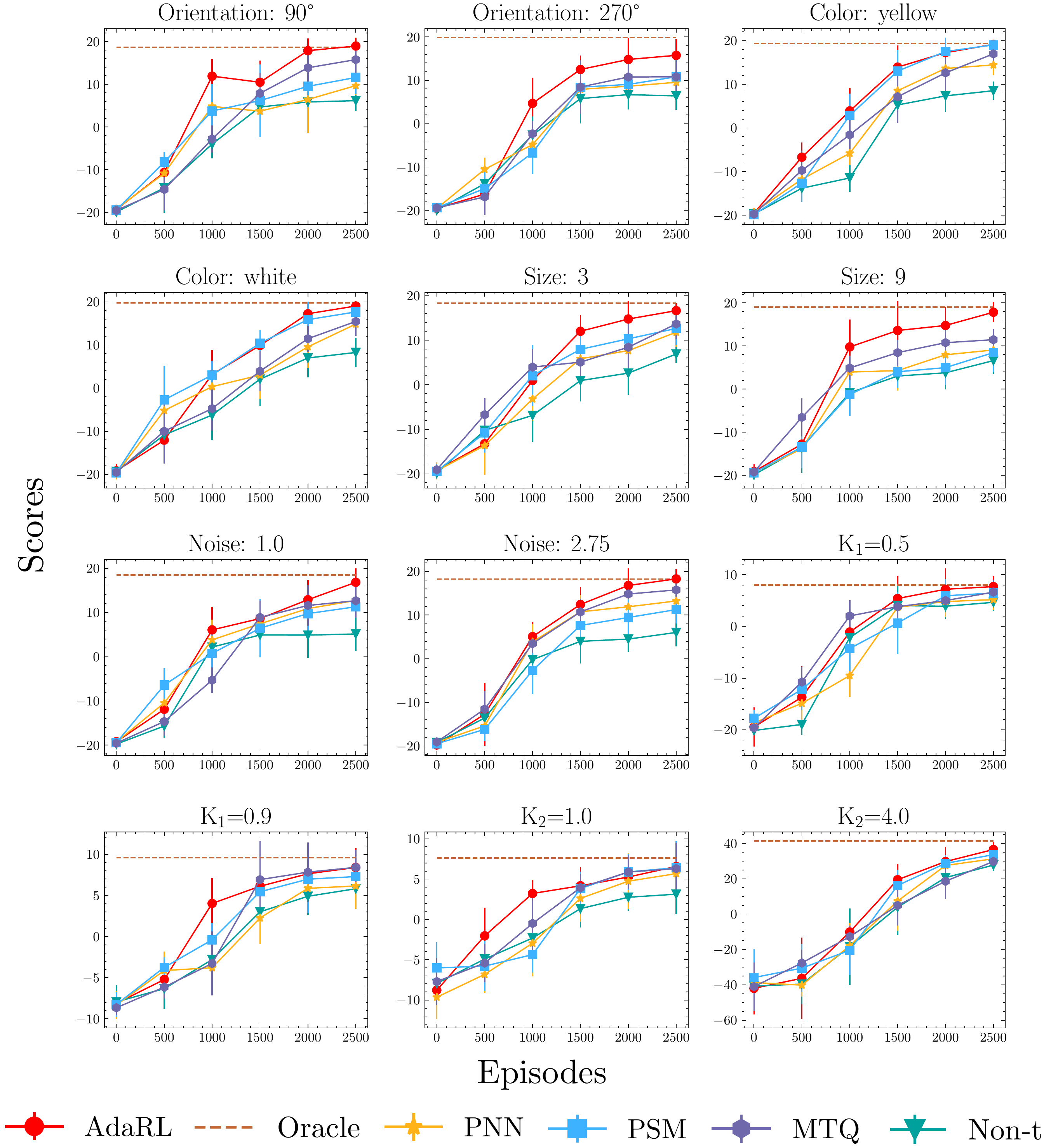}
    \caption{Learning curves for modified Pong experiments with change factors. The reported scores are averaged across $30$ trials. }
    \label{Figure: pong_summary}
\end{figure}

\begin{table}
\centering
\tiny
\begin{tabular}{c|ccccccc}
\hline
~ & \makecell*[c]{Oracle \\ \color{blue}{Upper bound}} & \makecell*[c]{Non-t \\ \color{blue}{lower bound}}   & \makecell*[c]{PNN \\ \citep{rusu2016progressive}} & \makecell*[c]{PSM \\ \citep{PSM}} & \makecell*[c]{MTQ \\ \citep{MetaQ_20} } & \makecell*[c]{AdaRL* \\ \color{blue}{Ours w/o masks}}  & \makecell*[c]{AdaRL \\ \color{blue}{Ours}} \\  \specialrule{0.05em}{-1.5pt}{-1.5pt}
O\_in & \makecell*[c]{$18.65$ \\ $(\pm 2.43)$}  &  \makecell*[c]{$4.30$~$\bullet$ \\ $(\pm 2.95)$}   & \makecell*[c]{$8.68$~$\bullet$ \\ $(\pm 5.78)$}  & \makecell*[c]{$9.65$~$\bullet$ \\ $(\pm 3.19)$} & \makecell*[c]{$14.80$~$\bullet$ \\ $(\pm 2.02)$} & \makecell*[c]{$15.08$~$\bullet$ \\ $(\pm 3.19)$}& \makecell*[c]{$\color{red}{\textbf{16.79}}$ \\ $(\pm 1.84)$} \\ \specialrule{0.02em}{-1.5pt}{-1.5pt}
O\_out & \makecell*[c]{$19.86$ \\ $(\pm 1.09)$}  & \makecell*[c]{$5.09$~$\bullet$ \\ $(\pm 2.41)$}   & \makecell*[c]{$10.61$~$\bullet$ \\ $(\pm 5.26)$} & \makecell*[c]{$9.94$~$\bullet$ \\ $(\pm 6.23)$} & \makecell*[c]{$11.82$ \\ $(\pm 2.46)$} & \makecell*[c]{$11.92$ \\ $(\pm 3.09)$} & \makecell*[c]{$\color{red}{12.70}$ \\ $(\pm 4.38)$} \\ \specialrule{0.02em}{-1.5pt}{-1.5pt}
C\_in &  \makecell*[c]{$19.35$ \\ $(\pm 0.45)$} & \makecell*[c]{$7.72$~$\bullet$ \\ $(\pm 2.63)$}  & \makecell*[c]{$13.75$~$\bullet$ \\ $(\pm 4.16)$} & \makecell*[c]{$10.87$~$\bullet$ \\ $(\pm 5.15)$} & \makecell*[c]{$14.80$~$\bullet$ \\ $(\pm 3.07)$}  & \makecell*[c]{$16.07$ \\ $(\pm 2.86)$}  & \makecell*[c]{$\color{red}{16.29}$ \\ $(\pm 3.35)$} \\ \specialrule{0.02em}{-1.5pt}{-1.5pt}
C\_out  & \makecell*[c]{$19.78$ \\ $(\pm 0.25)$} & \makecell*[c]{$7.09$~$\bullet$ \\ $(\pm 3.21)$}  & \makecell*[c]{$13.37$~$\bullet$ \\ $(\pm 4.42)$} & \makecell*[c]{$12.59$~$\bullet$ \\ $\pm 3.80)$} & \makecell*[c]{$15.34$ \\ $(\pm 3.22)$} & \makecell*[c]{$15.84$ \\ $(\pm 3.10)$} & \makecell*[c]{$\color{red}{16.55}$ \\ $(\pm 2.09)$} \\ \specialrule{0.02em}{-1.5pt}{-1.5pt}
S\_in & \makecell*[c]{$18.32$ \\ $(\pm 1.18)$} & \makecell*[c]{$6.25$~$\bullet$ \\ $(\pm 3.42)$}  & \makecell*[c]{$12.93$~$\bullet$ \\ $(\pm 2.72)$} & \makecell*[c]{$10.67$~$\bullet$ \\ $(\pm 1.85)$} & \makecell*[c]{$12.78$~$\bullet$ \\ $(\pm 3.46)$} & \makecell*[c]{$13.86$ \\ $(\pm 2.95)$}  & \makecell*[c]{$\color{red}{14.92}$ \\ $(\pm 4.48)$}  \\ \specialrule{0.02em}{-1.5pt}{-1.5pt}
S\_out & \makecell*[c]{$19.01$ \\ $(\pm 1.04) $}  & \makecell*[c]{$5.45$~$\bullet$ \\ $(\pm 2.75)$}   & \makecell*[c]{$9.69$~$\bullet$ \\ $(\pm 6.27)$} & \makecell*[c]{$13.80$~$\bullet$ \\ $(\pm 3.15)$} & \makecell*[c]{$12.62$~$\bullet$ \\ $(\pm 2.41)$} & \makecell*[c]{$15.31$ \\ $(\pm 2.13)$} & \makecell*[c]{$\color{red}{15.88}$ \\ $(\pm 3.72)$} \\ \specialrule{0.02em}{-1.5pt}{-1.5pt}
N\_in & \makecell*[c]{$18.48$ \\ $(\pm 1.25)$} & \makecell*[c]{$4.29$~$\bullet$ \\ $(\pm 2.22)$}  & \makecell*[c]{$13.85$~$\bullet$ \\ $(2.83)$} & \makecell*[c]{$13.69$~$\bullet$ \\ $(\pm 2.21)$} & \makecell*[c]{$10.96$~$\bullet$ \\ $(\pm 3.27)$} & \makecell*[c]{$13.51$ ~$\bullet$\\ $(\pm 3.07)$}  & \makecell*[c]{$\color{red}{\textbf{15.57}}$ \\ $(\pm 2.95)$} \\ \specialrule{0.05em}{-1.5pt}{-1.5pt}
N\_out &  \makecell*[c]{$18.26$ \\ $(\pm 1.11)$} & \makecell*[c]{$5.19$~$\bullet$ \\ $(\pm 2.47)$}  & \makecell*[c]{$11.83$~$\bullet$ \\ $(\pm 3.82)$} & \makecell*[c]{$14.07$~$\bullet$ \\ $(\pm 2.56)$} & \makecell*[c]{$12.75$~$\bullet$ \\ $(\pm 3.18)$}  & \makecell*[c]{$14.29$~$\bullet$ \\ $(\pm 3.10)$}  & \makecell*[c]{$\color{red}{\textbf{16.38}}$ \\ $(\pm 2.72)$}\\ \specialrule{0.1em}{-1pt}{-1.5pt}
\end{tabular}
\caption{Average final scores on modified Pong (POMDP) with $N_{targets}=20$. The best non-oracle results are marked in \textcolor{red}{red}, while \textbf{bold} indicates a statistically significant result w.r.t. all the baselines. O, C, S, and N denote the orientation, color, size, and noise factors, respectively.}
\label{tab: Pong_POMDP_20}
\end{table}

\begin{table}
\centering
\tiny
\begin{tabular}{c|ccccccc}
\toprule
~ & \makecell*[c]{Oracle \\ \color{blue}{Upper bound}} & \makecell*[c]{Non-t \\ \color{blue}{lower bound}}   & \makecell*[c]{PNN \\ \citep{rusu2016progressive}} & \makecell*[c]{PSM \\ \citep{PSM}} & \makecell*[c]{MTQ \\ \citep{MetaQ_20} } & \makecell*[c]{AdaRL* \\ \color{blue}{Ours w/o masks}}  & \makecell*[c]{AdaRL \\ \color{blue}{Ours}} \\  \specialrule{0.05em}{-1.5pt}{-1.5pt}
O\_in & \makecell*[c]{$18.65$ \\ $(\pm 2.43)$} &  \makecell*[c]{$8.04$~$\bullet$ \\ $(\pm 1.78)$}  & \makecell*[c]{$12.19$~$\bullet$ \\ $(\pm 3.07)$}  & \makecell*[c]{$12.37$~$\bullet$ \\ $(\pm 2.92)$} & \makecell*[c]{$14.64$~$\bullet$ \\ $(\pm 3.01)$}  & \makecell*[c]{$17.42$~$\bullet$ \\ $(\pm 2.20)$} & \makecell*[c]{$\color{red}{\textbf{18.85}}$ \\ $(\pm 1.63)$} \\ \specialrule{0.05em}{-1.5pt}{-1.5pt}
O\_out & \makecell*[c]{$19.86$ \\ $(\pm 1.09)$}  & \makecell*[c]{$6.97$~$\bullet$ \\ $(\pm 1.88)$}  & \makecell*[c]{$16.48$ \\ $(\pm 3.10)$} & \makecell*[c]{$15.79$~$\bullet$ \\ $(\pm 2.29)$} & \makecell*[c]{$12.75$~$\bullet$ \\ $(\pm 4.93)$} & \makecell*[c]{$17.25$ \\ $(\pm 1.85)$} & \makecell*[c]{$\color{red}{17.93}$ \\ $(\pm 2.41)$}  \\ \specialrule{0.05em}{-1.5pt}{-1.5pt}
C\_in &  \makecell*[c]{$19.35$ \\ $(\pm 0.45)$} & \makecell*[c]{$8.09$~$\bullet$ \\ $(\pm 3.11)$}  & \makecell*[c]{$15.89$~$\bullet$ \\ $(\pm 3.49)$} & \makecell*[c]{$16.70$~$\bullet$ \\ $(\pm 2.38)$} & \makecell*[c]{$17.85$ \\ $(\pm 2.16)$}  & \makecell*[c]{$17.73$~$\bullet$ \\ $(\pm 2.01)$}  & \makecell*[c]{$\color{red}{18.93}$ \\ $(\pm 1.37)$} \\ \specialrule{0.05em}{-1.5pt}{-1.5pt}
C\_out & \makecell*[c]{$19.78$ \\ $(\pm 0.25)$}  & \makecell*[c]{$7.48$~$\bullet$ \\ $(\pm 2.09)$}   & \makecell*[c]{$16.85$~$\bullet$ \\ $(\pm 3.17)$} & \makecell*[c]{$16.29$~$\bullet$ \\ $(\pm 2.64)$} & \makecell*[c]{$17.93$~$\bullet$ \\ $(\pm 2.35)$} & \makecell*[c]{$18.49$ \\ $(\pm 2.04)$} & \makecell*[c]{$\color{red}{19.28}$ \\ $(\pm 1.36)$} \\ \specialrule{0.05em}{-1.5pt}{-1.5pt}
S\_in & \makecell*[c]{$18.32$ \\ $(\pm 1.18)$} & \makecell*[c]{$7.45$~$\bullet$ \\ $(\pm 3.15)$}   & \makecell*[c]{$12.89$~$\bullet$ \\ $(\pm 2.04)$} & \makecell*[c]{$13.84$~$\bullet$ \\ $(\pm 3.27)$} & \makecell*[c]{$15.33$~$\bullet$ \\ $(\pm 2.03)$}  & \makecell*[c]{$15.79$~$\bullet$ \\ $(\pm 2.62)$} & \makecell*[c]{$\color{red}{\textbf{17.49}}$ \\ $(\pm 2.18)$} \\ \specialrule{0.05em}{-1.5pt}{-1.5pt}
S\_out & \makecell*[c]{$19.01$ \\ $(\pm 1.04) $} & \makecell*[c]{$7.04$~$\bullet$ \\ $(\pm 2.36)$}  & \makecell*[c]{$14.69$~$\bullet$ \\ $(\pm 2.03)$} & \makecell*[c]{$17.25$~$\bullet$ \\ $(\pm 2.30)$} & \makecell*[c]{$18.48$ \\ $(\pm 1.36)$}  & \makecell*[c]{$17.82$~$\bullet$ \\ $(\pm 1.98)$} & \makecell*[c]{$\color{red}{19.21}$ \\ $(\pm 0.63)$}  \\ \specialrule{0.05em}{-1.5pt}{-1.5pt}
N\_in & \makecell*[c]{$18.48$ \\ $(\pm 1.25)$}  & \makecell*[c]{$6.82$~$\bullet$ \\ $(\pm 2.09)$}  & \makecell*[c]{$13.84$~$\bullet$ \\ $(\pm 2.82)$} & \makecell*[c]{$16.80$~$\bullet$ \\ $(\pm 1.73)$} & \makecell*[c]{$17.58$ \\ $(\pm 2.19)$} & \makecell*[c]{$15.93$~$\bullet$ \\ $(\pm 3.68)$} & \makecell*[c]{$\color{red}{18.25}$ \\ $(\pm 1.81)$} \\ \specialrule{0.05em}{-1.5pt}{-1.5pt}
N\_out &  \makecell*[c]{$18.26$ \\ $(\pm 1.11)$} & \makecell*[c]{$7.82$~$\bullet$ \\ $(\pm 2.46)$}  & \makecell*[c]{$14.89$~$\bullet$ \\ $(\pm 2.98)$} & \makecell*[c]{$16.85$ \\ $(\pm 3.94)$} & \makecell*[c]{$17.03$ \\ $(\pm 2.36)$}  & \makecell*[c]{$16.49$~$\bullet$ \\ $(\pm 3.25)$} & \makecell*[c]{$\color{red}{17.85}$ \\ $(\pm 2.16)$}  \\ \specialrule{0.1em}{-1.pt}{-1.5pt}
\end{tabular}
\caption{Average final scores on modified Pong (POMDP) with $N_{targets}=10000$. The best non-oracle results are marked in \textcolor{red}{red}, while \textbf{bold} indicates a statistically significant result w.r.t. all the baselines. O, C, S, and N denote the orientation, color, size, and noise factors, respectively.}
\label{tab: Pong_POMDP_10k}
\end{table}

\begin{table}
\centering
\tiny
\begin{tabular}{c|ccccccc}
\toprule
~ & \makecell*[c]{Oracle \\ \color{blue}{Upper bound}} & \makecell*[c]{Non-t \\ \color{blue}{lower bound}}   & \makecell*[c]{PNN \\ \citep{rusu2016progressive}} & \makecell*[c]{PSM \\ \citep{PSM}} & \makecell*[c]{MTQ \\ \citep{MetaQ_20} } & \makecell*[c]{AdaRL* \\ \color{blue}{Ours w/o masks}}  & \makecell*[c]{AdaRL \\ \color{blue}{Ours}} \\  \specialrule{0.05em}{-1.5pt}{-1.5pt}
Rl\_in & \makecell*[c]{$7.98$ \\ $(\pm 3.81)$}  & \makecell*[c]{$3.19$~$\bullet$ \\ $(\pm 2.27)$}  & \makecell*[c]{$4.95$ \\ $(\pm 1.08)$} & \makecell*[c]{$5.04$ \\ $(\pm 2.11)$} & \makecell*[c]{$4.78$ \\ $(\pm 2.10)$} & \makecell*[c]{$3.49$~$\bullet$ \\ $(\pm 1.97)$}  & \makecell*[c]{$\color{red}{5.81}$ \\ $(\pm 2.06)$} \\ \specialrule{0.05em}{-1.5pt}{-1.5pt}
Rl\_out & \makecell*[c]{$9.61$ \\ $(\pm 4.78)$} & \makecell*[c]{$5.19$~$\bullet$ \\ $(\pm 2.80)$}    & \makecell*[c]{$5.89$ \\ $(\pm 1.93)$} & \makecell*[c]{$6.03$ \\ $(\pm 2.71)$} &\makecell*[c]{$\color{red}{6.21}$ \\ $(\pm 3.14)$} & \makecell*[c]{$5.64$ \\ $(\pm 2.59)$}  & \makecell*[c]{$6.12$ \\ $(\pm 3.45)$} \\ \specialrule{0.05em}{-1.5pt}{-1.5pt}
Rn\_in & \makecell*[c]{$7.62$ \\ $(
\pm 2.16)$} & \makecell*[c]{$2.85$~$\bullet$ \\ $(\pm 1.71)$}  & \makecell*[c]{$5.31$ \\ $(\pm 2.78)$} & \makecell*[c]{$5.06$ \\ $(\pm 3.89)$} & \makecell*[c]{$5.52$ \\ $(\pm 3.47)$} & \makecell*[c]{$5.79$ \\ $(\pm 3.03)$}  & \makecell*[c]{$\color{red}{5.84}$ \\ $(\pm 3.17)$} \\ \specialrule{0.05em}{-1.5pt}{-1.5pt}
Rn\_out  & \makecell*[c]{$41.36$ \\ $(\pm 5.70)$} & \makecell*[c]{$21.73$~$\bullet$ \\ $(\pm 8.54)$}  & \makecell*[c]{$27.19$ \\ $(\pm 5.82)$}  & \makecell*[c]{$23.27$~$\bullet$ \\ $(\pm 8.01)$} & \makecell*[c]{$25.49$~$\bullet$ \\ $(\pm 6.18)$} &\makecell*[c]{$26.33$~$\bullet$ \\ $(\pm 7.94)$} & \makecell*[c]{$\color{red}{29.92}$ \\ $(\pm 6.39)$}\\ \specialrule{0.1em}{-1pt}{-1.5pt}
\end{tabular}
\caption{Results on modified Pong game with $N_{targets}=50$. The best non-oracle results are marked in \textcolor{red}{red}, while \textbf{bold} indicates a statistically significant result w.r.t. all the baselines. Rl and Ro denote the linear reward and nonlinear reward-changing cases, respectively.}
\label{tab: Pong_POMDP_reward_50}
\end{table}

\begin{table}
\centering
\tiny
\begin{tabular}{c|ccccccc}
\toprule
~ & \makecell*[c]{Oracle \\ \color{blue}{Upper bound}} & \makecell*[c]{Non-t \\ \color{blue}{lower bound}}   & \makecell*[c]{PNN \\ \citep{rusu2016progressive}} & \makecell*[c]{PSM \\ \citep{PSM}} & \makecell*[c]{MTQ \\ \citep{MetaQ_20} } & \makecell*[c]{AdaRL* \\ \color{blue}{Ours w/o masks}}  & \makecell*[c]{AdaRL \\ \color{blue}{Ours}} \\  \specialrule{0.05em}{-1.5pt}{-1.5pt}
Rl\_in & \makecell*[c]{$7.98$ \\ $(\pm 3.81)$} & \makecell*[c]{$4.65$~$\bullet$ \\ $(\pm 1.70)$}  & \makecell*[c]{$5.17$~$\bullet$ \\ $(\pm 1.98)$}& \makecell*[c]{$6.45$~$\bullet$ \\ $(\pm 1.82)$} & \makecell*[c]{$6.62$~$\bullet$ \\ $(\pm 2.45)$}  & \makecell*[c]{$6.88$~$\bullet$ \\$(\pm 3.19)$} & \makecell*[c]{$\color{red}{\textbf{7.69}}$ \\ $(\pm 2.04)$} \\ \specialrule{0.05em}{-1.5pt}{-1.5pt}
Rl\_out & \makecell*[c]{$9.61$ \\ $(\pm 4.78)$} & \makecell*[c]{$5.82$~$\bullet$ \\ $(\pm 2.01)$}  & \makecell*[c]{$6.15$~$\bullet$ \\ $(\pm 2.79)$} & \makecell*[c]{$7.30$~$\bullet$ \\ $(\pm 1.98)$} & \makecell*[c]{$\color{red}{8.42}$ \\ $(\pm 2.14)$} & \makecell*[c]{$7.04$~$\bullet$ \\ $(\pm 2.52)$} & \makecell*[c]{$8.41$ \\ $(\pm 2.36)$} \\ \specialrule{0.05em}{-1.5pt}{-1.5pt}
Rn\_in & \makecell*[c]{$7.62$ \\ $(
\pm 2.16)$} & \makecell*[c]{$3.13$~$\bullet$ \\ $(\pm 2.47)$}  & \makecell*[c]{$5.68$~$\bullet$ \\ $(\pm 1.42)$} & \makecell*[c]{$6.42$ \\ $(\pm 3.31)$} & \makecell*[c]{$6.30$ \\ $(\pm 3.19)$} & \makecell*[c]{$5.52$~$\bullet$ \\ $(\pm 1.09)$} & \makecell*[c]{$\color{red}{6.57}$ \\ $(\pm 1.24)$} \\ \specialrule{0.05em}{-1.5pt}{-1.5pt}
Rn\_out & \makecell*[c]{$41.36$ \\ $(\pm 5.70)$} & \makecell*[c]{$27.70$~$\bullet$ \\ $(\pm 3.45)$} & \makecell*[c]{$31.28$~$\bullet$ \\ $(\pm 4.09)$} & \makecell*[c]{$33.60$~$\bullet$
\\ $(\pm 5.52)$} & \makecell*[c]{$29.77$~$\bullet$ \\ $(\pm 3.85)$}  & \makecell*[c]{$33.83$~$\bullet$ \\ $(\pm 5.02)$}   & \makecell*[c]{$\color{red}{\textbf{36.52}}$ \\ $(\pm 4.18)$} \\ \specialrule{0.1em}{-1pt}{-1.5pt}
\end{tabular}
\caption{Average final scores on modified Pong (POMDP) with $N_{target}=10000$. The best non-oracle results are marked in \textcolor{red}{red}, while \textbf{bold} indicates a statistically significant result w.r.t. all the baselines. Rl and Ro denote the linear and nonlinear reward changes, respectively.}
\label{tab: Pong_POMDP_reward_10k}
\end{table}

\begin{table}
\centering
\tiny
\begin{tabular}{c|ccccccc}
\toprule
~ & \makecell*[c]{Oracle \\ \color{blue}{Upper bound}} & \makecell*[c]{Non-t \\ \color{blue}{lower bound}}   & \makecell*[c]{PNN \\ \citep{rusu2016progressive}} & \makecell*[c]{PSM \\ \citep{PSM}} & \makecell*[c]{MTQ \\ \citep{MetaQ_20} } & \makecell*[c]{AdaRL* \\ \color{blue}{Ours w/o masks}}  & \makecell*[c]{AdaRL \\ \color{blue}{Ours}} \\  \specialrule{0.05em}{-1.5pt}{-1.5pt}
Rl\_in & \makecell*[c]{$7.98$ \\ $(\pm 3.81)$} & \makecell*[c]{$4.81$~$\bullet$ \\ $(\pm 2.03)$}   & \makecell*[c]{$5.94$~$\bullet$ \\ $(\pm 4.87)$} & \makecell*[c]{$6.90$ \\ $(\pm 3.47)$} & \makecell*[c]{$7.34$ \\ $(\pm 3.18)$}  & \makecell*[c]{$6.46$~$\bullet$ \\ $(\pm 3.12)$} & \makecell*[c]{$\color{red}{7.93}$ \\ $(\pm 2.09)$} \\ \specialrule{0.05em}{-1.5pt}{-1.5pt}
Rl\_out & \makecell*[c]{$9.61$ \\ $(\pm 4.78)$} & \makecell*[c]{$3.89$~$\bullet$ \\ $(\pm 2.16)$}  & \makecell*[c]{$7.85$ \\ $(\pm 2.88)$} & \makecell*[c]{$7.37$ \\ $(\pm 3.75)$} &\makecell*[c]{$8.77$ \\ $(\pm 2.61)$}  & \makecell*[c]{$7.78$ \\ $(\pm 3.10)$}  & \makecell*[c]{$\color{red}{8.94}$ \\ $(\pm 2.02)$}  \\ \specialrule{0.05em}{-1.5pt}{-1.5pt}
Rn\_in & \makecell*[c]{$7.62$ \\ $(
\pm 2.16)$} & \makecell*[c]{$3.58$~$\bullet$ \\ $(\pm 1.09)$}  & \makecell*[c]{$6.91$ \\ $(\pm 2.85)$} & \makecell*[c]{$7.01$ \\ $(\pm 2.46)$} & \makecell*[c]{$6.28$~$\bullet$ \\ $(\pm 3.14)$} & \makecell*[c]{$6.30$~$\bullet$ \\ $(\pm 2.63)$} & \makecell*[c]{$\color{red}{7.57}$ \\ $(\pm 1.94)$} \\ \specialrule{0.05em}{-1.5pt}{-1.5pt}
Rn\_out & \makecell*[c]{$41.36$ \\ $(\pm 5.70)$} & \makecell*[c]{$29.98$~$\bullet$ \\ $(\pm 3.02)$}   & \makecell*[c]{$36.08$~$\bullet$ \\$(\pm 10.35)$} & \makecell*[c]{$37.26$ \\ $(\pm 11.25)$} & \makecell*[c]{$38.48$ \\ $(\pm 12.59)$} & \makecell*[c]{$34.19$~$\bullet$ \\ $(\pm 9.36)$}  & \makecell*[c]{$\color{red}{41.25}$ \\ $(\pm 6.92)$} \\ \specialrule{0.1em}{-1pt}{-1.5pt}
\end{tabular}
\caption{Results on modified Pong game with $N_{targets}=50000$. The best non-oracle results are marked in \textcolor{red}{red}, while \textbf{bold} indicates a statistically significant result w.r.t. all the baselines. Rl and Ro denote the linear reward and nonlinear reward-changing cases, respectively.}
\label{tab: Pong_POMDP_reward_50k}
\end{table}

\subsection{Results on MuJoCo benchmarks}
We also apply AdaRL on MuJoCo benchmarks~\citep{MUJOCO_12}, including Cheetah and Ant with a set of target velocities. We follow the setup in MAML~\citep{MAML_17} and CAVIA~\citep{CAVIA_19}. In model estimation stage, we choose $20$ tasks for each of the game. The goal velocity of each task is sampled between $0.0$ and $2.0$ for the cheetah and between $0.0$ and $3.0$ for ant. The reward is the negative absolute value between
agent's current and the goal velocity. In policy optimization stage, we utilize Trust Region Policy Optimization (TRPO~\cite{schulman2015trust}). The results on the target domains are specified in Table~\ref{tab: Mujoco_res}, suggesting that AdaRL can achieve better performance than the meta-learning approaches (i.e., MAML and CAVIA).
\begin{table}[htp!]
\centering
\tiny
\begin{tabular}{c|ccc}
\toprule
~ & \makecell*[c]{MAML \\ \citep{MAML_17}} & \makecell*[c]{CAVIA \\ \citep{CAVIA_19}} & \makecell*[c]{AdaRL \\ \color{blue}{Ours}} \\  \specialrule{0.05em}{-1.5pt}{-1.5pt}
Cheetah (vel) & \makecell*[c]{$-89.$ 8\\ $(\pm 4.1)$} & \makecell*[c]{$-86.5$ \\ $(\pm 2.0)$}   & \makecell*[c]{$\color{red}{\textbf{-81.7}}$ \\ $(\pm 3.2)$} \\ \specialrule{0.05em}{-1.5pt}{-1.5pt}
Ant (vel) & \makecell*[c]{$100.4$ \\ $(\pm 10.9)$} & \makecell*[c]{$95.7$ \\ $(\pm 6.92)$}  & \makecell*[c]{$\color{red}{\textbf{106.8}}$ \\ $(\pm 8.4)$} \\ 
\specialrule{0.1em}{-1pt}{-1.5pt}
\end{tabular}
\caption{Results on MuJoCo benchmarks (Cheetah and Ant experiments with different target velocities, with $30$ trials each) with $N_{targets}=50,000$. The best results are marked in \textcolor{red}{\textbf{red}}.}
\label{tab: Mujoco_res}
\end{table}

\subsection{Effect of the policy used for data collection during model estimation}
In our framework, we use the random policy to generate trajectories for each domain. The generated trajectories are further used to conduct the model estimation. To study whether the random policy will affect the effectiveness of model estimation, we compare the learnt parameters of latent space in model estimation with the trajectories generated via (1) the random policy in our framework, and (2) the optimal policies learnt on the source domains. Here we show the case with changing gravity in the modified Cartpole game. In Fig.~\ref{Figure: state_vis}, we give the learnt parameters ($\mu$ and $\log \sigma$) of the first and second components in the Gaussian mixtures of the $10$-th latent state at $20$-th epoch. 
The results demonstrate that the difference between the two sets of learnt parameters is limited. 
\begin{figure}[h]
 \centering
 \includegraphics[width=0.7\textwidth]{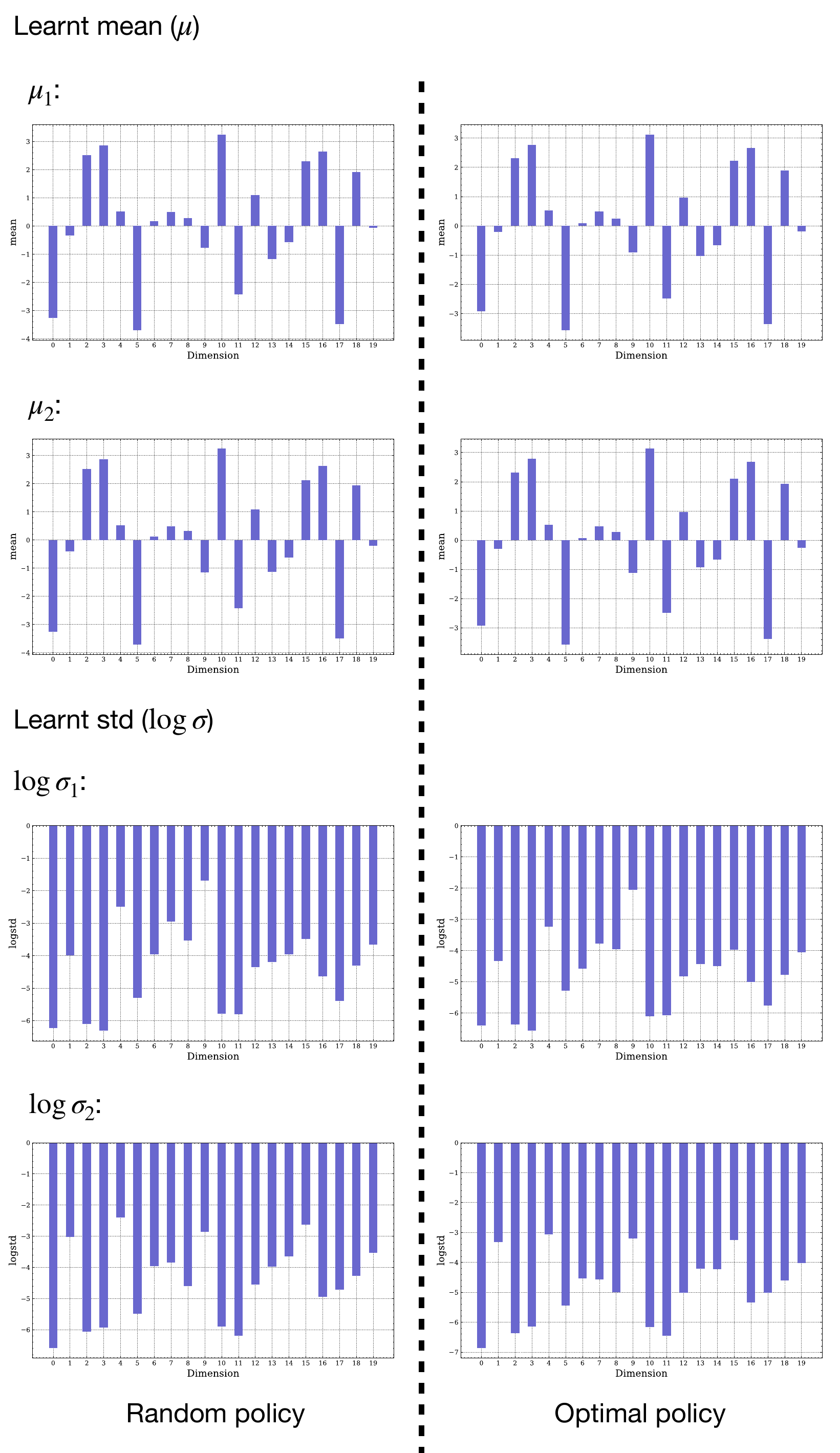}
 \caption{Learnt $\mu$ and $\log \sigma$ in model estimation with both random policy (left), and optimal policy (right). }
 \label{Figure: state_vis}
\end{figure} 

\subsection{More statistical evaluation protocols on the performance}
In this section, we provide a more detailed and comprehensive comparison on the performances of AdaRL and other baseline methods. Following the recent published work on reliable evaluation for RL~\citep{agarwal2021deep}, we utilize $4$ evaluation metrics: median, interquartile mean (IQM), mean, and optimality gap. 
Median and mean are the sample median and mean, respectively. IQM discards the top and bottom $25\%$ samples and then computes the mean value of the remaining ones. Thus, this factor is insensitive to outliers. Optimality gap is quantified via the gap between the performance of each method and the mean score obtained by the oracle agent.
Therefore, higher mean, median and IQM and lower optimality are the indications of better methods. 
All metrics are with $95\%$ bootstrap confidence intervals (CIs)~\citep{bootstrap1992}.

Fig.~\ref{Figure: eval_mdp_cartpole}, \ref{Figure: eval_pomdp_cartpole}, \ref{Figure: eval_pomdp_pong_1}, and~\ref{Figure: eval_pomdp_pong_2} give the comparison on AdaRL and
baselines using the set of evaluation protocols in different games and settings. The results suggest that in most cases, AdaRL performs consistently better than the baselines across all evaluation metrics.

\begin{figure}[htp!]
 \centering
 \includegraphics[width=0.65\textwidth]{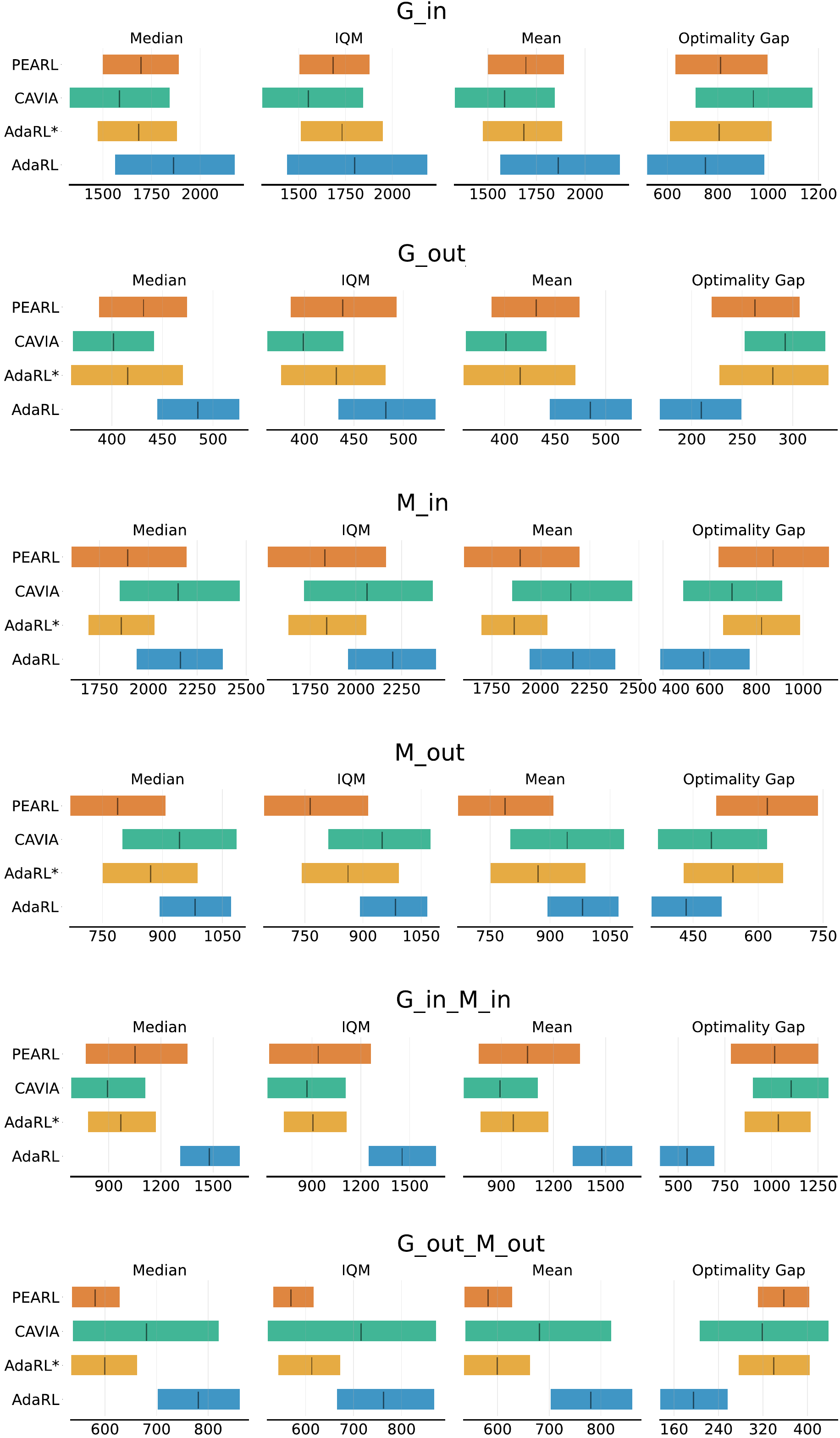}
 \caption{Evaluation on the results of modified CartPole game under MDP settings for $N_{target}=50$.}
 \label{Figure: eval_mdp_cartpole}
\end{figure}

\begin{figure}[htp!]
 \centering
 \includegraphics[width=0.8\textwidth]{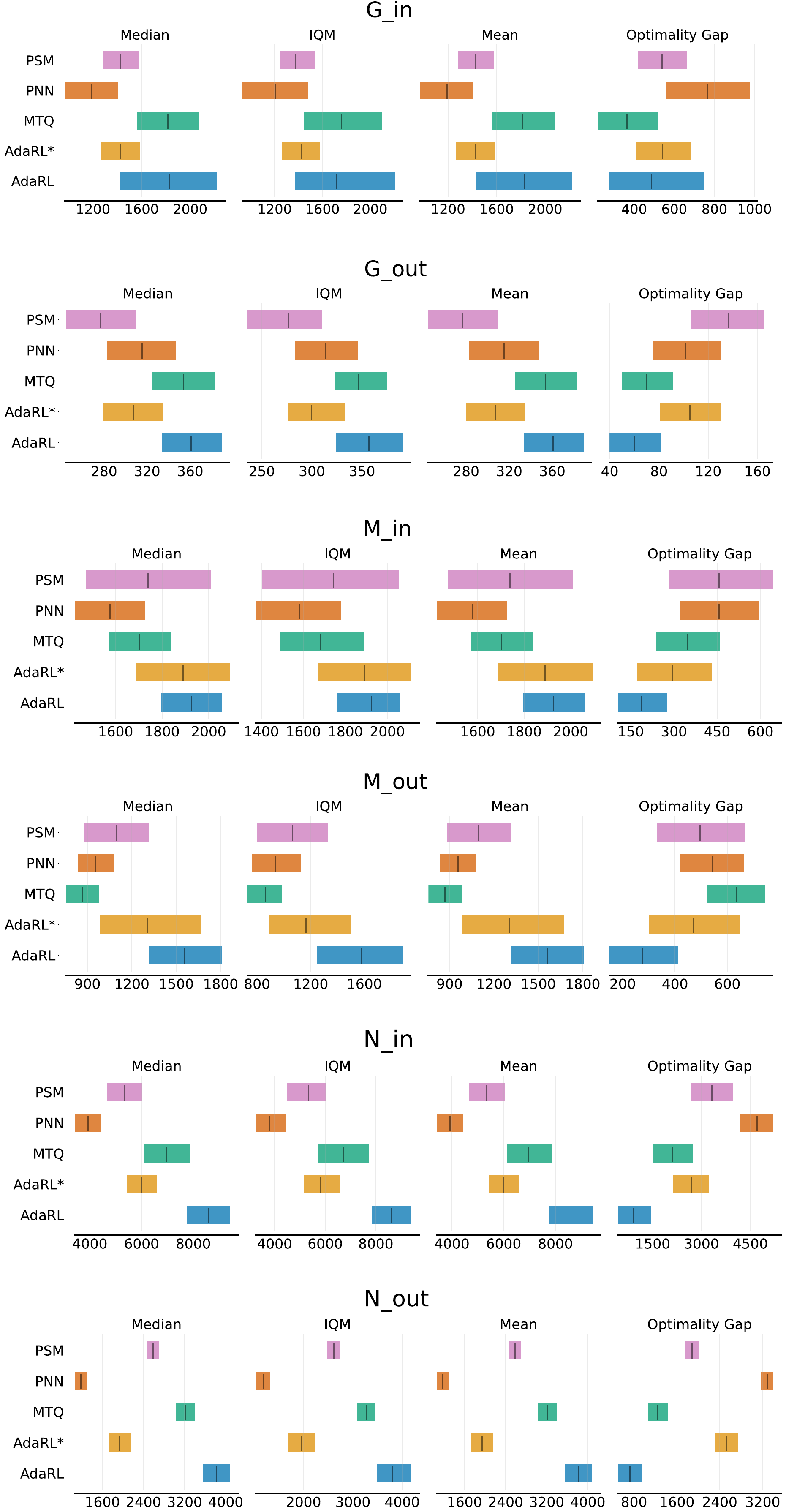}
 \caption{Evaluation on the results of modified CartPole game under POMDP settings for  $N_{target}=50$.}
 \label{Figure: eval_pomdp_cartpole}
\end{figure}

\begin{figure}[htp!]
 \centering
 \includegraphics[width=0.65\textwidth]{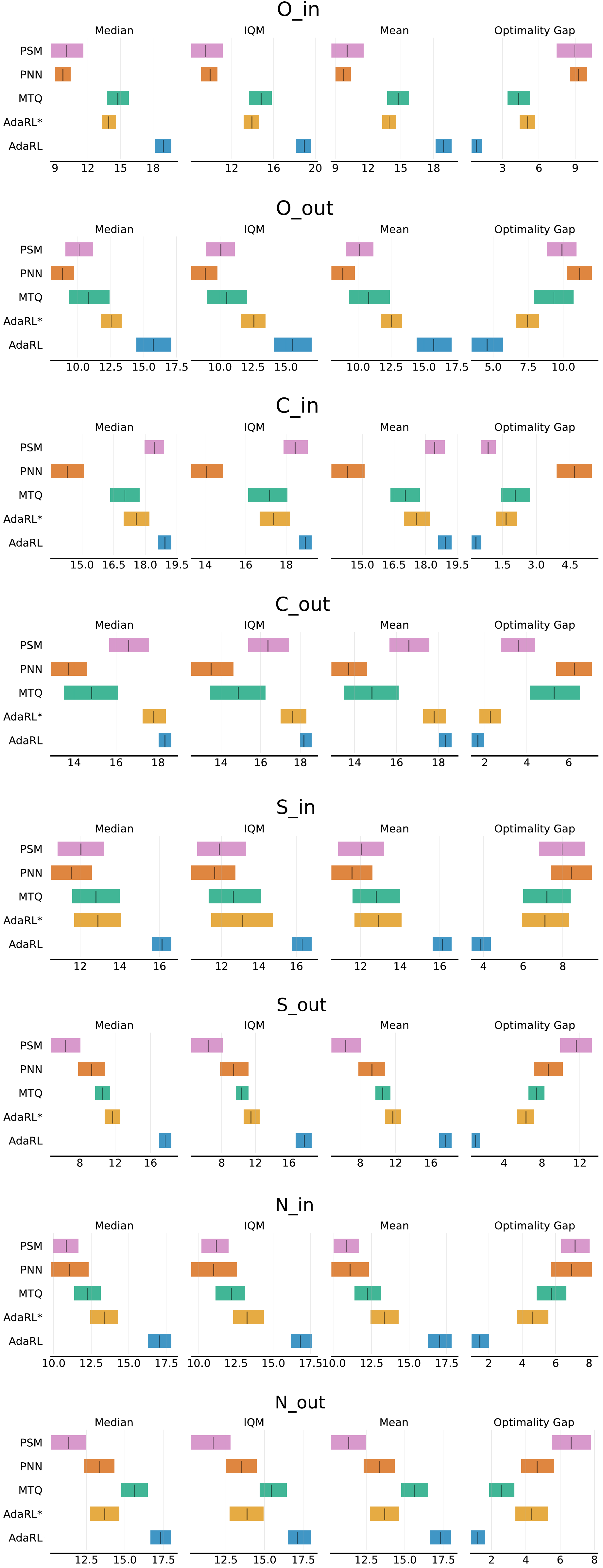}
 \caption{Evaluation on the results of modified Atari Pong game under POMDP settings with changing orientation, color, size, and noise levels for $N_{target}=50$.}
 \label{Figure: eval_pomdp_pong_1}
\end{figure}

\begin{figure}[htp!]
 \centering
 \includegraphics[width=0.75\textwidth]{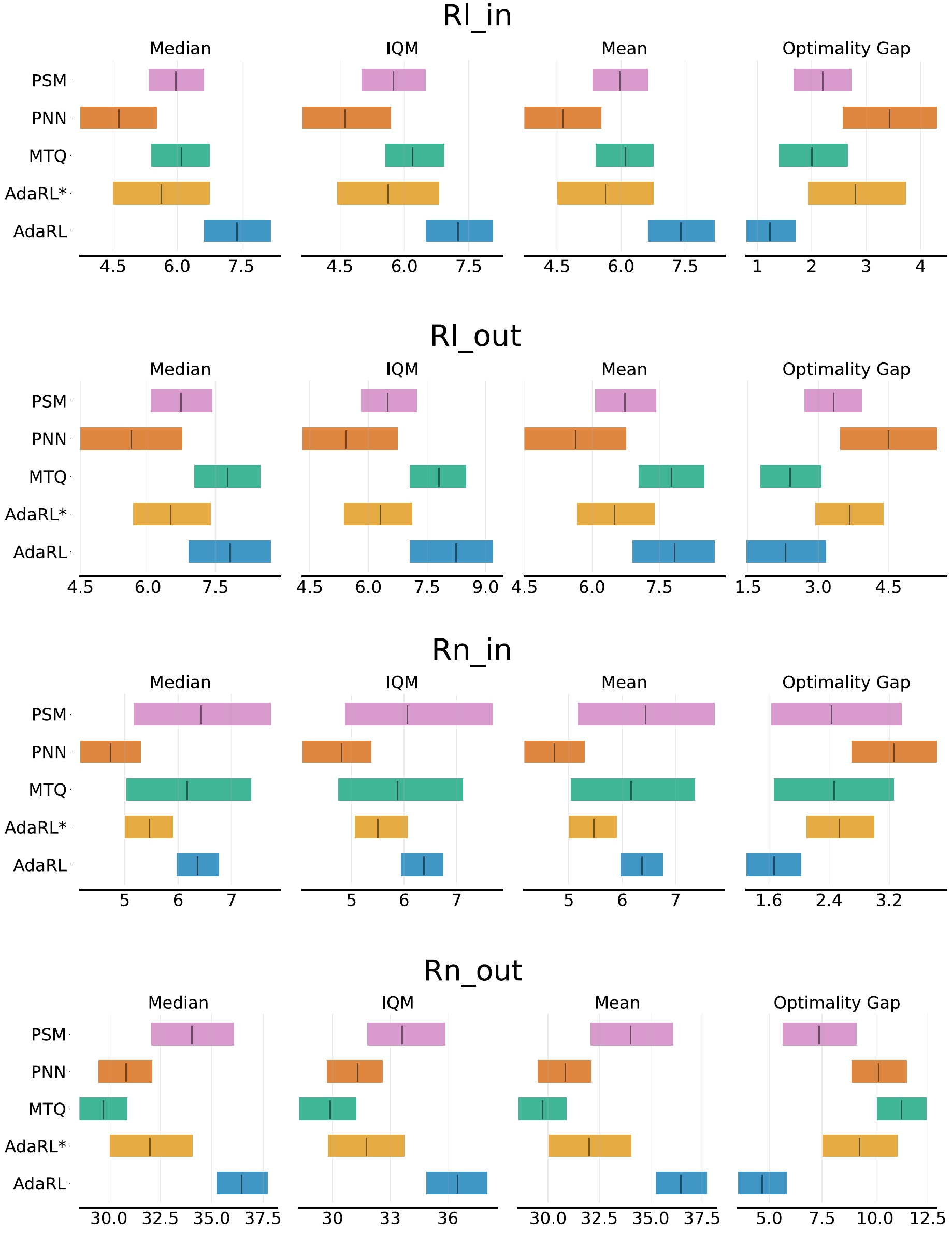}
 \caption{Evaluation on the results of modified Atari Pong game under POMDP settings with changing reward functions for $N_{target}=10000$.}
 \label{Figure: eval_pomdp_pong_2}
\end{figure}



\subsection{Pseudo code of Miss-VAE}
\UseRawInputEncoding
\begin{algorithm}[H]
\caption{\small{Pseudo code of Miss-VAE.}}
\label{alg:miss-vae}
\definecolor{codeblue}{rgb}{0.25,0.5,0.5}
\lstset{
	backgroundcolor=\color{white},
	basicstyle=\fontsize{7.2pt}{7.2pt}\ttfamily\selectfont,
	columns=fullflexible,
	breaklines=true,
	captionpos=b,
	commentstyle=\fontsize{7.2pt}{7.2pt}\color{codeblue},
	keywordstyle=\fontsize{7.2pt}{7.2pt},
}
\begin{lstlisting}[language=python]
# s: latent state; theta_o, theta_s, theta_r: changing factors;
# C_so, C_sr, C_ar, C_ss, C_as, C_theta_oo, C_theta_rr, C_theta_ss: structure masks;
# o: observation; a: action; r: reward;
# num_mix: the number of components in mixture Gaussian model

####################################### Encoder ########################################
for X in loader: # load a batch of data from K domains
    ho = conv(o)
    ha = conv(a)
    hr = conv(r)
    hd = fc(theta_o, theta_s, theta_r)
    h = concat[ho, ha, hr, hd]
    output, last_state = LSTM(h) # output the logmix, mean, and logstd of state
    s = sample(output) # sample the state 
    
####################################### Decoder ########################################
    # Reconstruct o_{t}
    s_o = multiply(s, C_so) 
    theta_o_o = multiply(theta_o, C_theta_oo) 
    h_s_theta = concat(fc(s_o), fc(theta_o_o)) 
    o_t = conv_transpose(h_s_theta) # observation of current step
    
    # Reconstruct r_{t+1}
    s_r = multiply(s, C_sr)
    a_r = multiply(a, C_ar)
    theta_r_r = multiply(theta_r, C_theta_rr)
    h_as = fc(concat[s_r, a_r])
    h_theta_r = fc(theta_r_r)
    r_t+1 = fc(concat[h_as, h_theta_r]) # reward of current step
    
    # Predict O_{t+1}
    o_t+1 = conv_transpose(s, theta_o, theta_s) # reward of next step
    
    # Predict r_{t+2}
    r_t+2 = fc(concat[fc(s), fc(a), fc(theta_o), fc(theta_s)]) # reward in the next step
    
    # Markovian transition
    s_s = multiply(s, C_ss)
    theta_s_s = multiply(theta_s, C_theta_ss)
    a_s = multiply(a, C_as)
    h_dyn = fc(concat[s_s, theta_s_s, a_s])
    s_output = linear(h_dyn)
    s_logmix, s_mean, log_logstd = s_output.split # state in next step 
    
####################################### Loss ###########################################
    # Reconstruction loss
    Rec_loss = mean(MSE(o - o_t) + MSE(r - r_t+1))
    
    # Prediction loss
    Pred_loss = mean(MSE(o[:,:-1,:] - o_t+1) + MSE(r[:,:-1,:] - r_t+2))
    
    # KL loss
    for idx in range(num_mixture):
        g_logmix, g_mean, g_logstd = s_logmix[:,idx], s_mean[:,idx],  log_logstd[:,idx]
        KL_Loss += KL(g_logmix, g_mean, g_logstd)
    KL_loss = mean(log(1 / (KL_Loss + 1e-10) + 1e-10))
    
    #  Sparsity constraints
    Reg_loss = mean(C_so) + mean(C_sr) + mean(C_ar) + mean(C_ss) + mean(C_theta_oo) + mean(C_theta_rr) + mean(C_theta_ss) + mean(theta_o) + mean(theta_s) + mean(theta_r)
        
    # Optimization step
    loss = Rec_loss + Pred_loss + KL_loss + Reg_loss
    loss.backward()
    optimizer.step()

\end{lstlisting}

\end{algorithm}

\section{Experimental details}

\subsection{Hyperparameters selections}

\paragraph{Model estimation}
We use a random policy to collect sequence data from source domains. For both modified Cartpole and Pong experiments, the sequence length is $40$ and the number of sequence is $10000$ for each domain. The sampling resolution is set to be $0.02$. Other details are summarized in Table~\ref{Table: exp_model}. 

\begin{table}[H]
    \centering
    \scriptsize
    \begin{tabular}{c|c|c}
    \toprule
     Settings  &  Cartpole & Pong \\ \hline
    \# Dimensions of latent space   & $20$ & $25$ \\
    \# Dimensions of $\theta$ & $1$& Size \& noise: $1$, orientation: $2$, color: $3$, Reward: $1$ (linear), $2$ (non-linear)\\
    \# Epochs & $1000$& reward-varying: $4000$, others: $1500$\\
     Batch size & $20$ & $80$ \\
    \# RNN cells & $256$ & $256$ \\
    Initial learning rate & $0.01$ & $0.01$\\
    Learning rate decay rate & $0.999$ & $0.999$\\
    Dropout & $0.90$& $0.90$\\
    KL-tolerance & $0.50$ & $0.50$\\
    \hline
    \bottomrule
    \end{tabular}
    \caption{Experimental details on the model estimation part.}
    \label{Table: exp_model}
\end{table}

\paragraph{Policy learning}
We adopt Double DQN~\citep{van2016deep} during policy learning stage. The detailed hyper-parameters are summarized in Table~\ref{Table: policy_learning}. For a fair comparison, we use the same set of hyperparameters for training other baseline methods. 
\begin{table}[H]
    \centering
    \footnotesize
    \begin{tabular}{c|c|c}
    \toprule
     Settings  &  Cartpole & Pong \\ \hline
    Discount factor  & $0.99$ & $0.99$ \\
    Exploration rate & $1.0$ & $1.0$ \\
    Initial learning rate & $0.01$ & $0.01$\\
    Learning rate decay rate & $0.999$ & $0.999$\\
    Dropout & $0.10$& $0.10$\\
    \hline
    \bottomrule
    \end{tabular}
    \caption{Experimental details on the policy learning part.}
    \label{Table: policy_learning}
\end{table}
\subsection{Experimental platforms}
For the model estimation, Cartpole and Pong experiments are implemented on $1$ NVIDIA P100 GPUs and $4$ NVidia V100 GPUs, respectively.  
The policy learning stages in both experiments are implemented on $8$ Nvidia RTX 1080Ti GPUs. 
\subsection{Licenses}
In our code, we have used the following libraries which are covered by the corresponding licenses: 

\begin{itemize}
    \item Tensorflow (Apache License 2.0), 
\item Pytorch (BSD 3-Clause "New" or "Revised" License), 
\item OpenAI Gym (MIT License), 
\item OpenCV (Apache 2 License), 
\item Numpy (BSD 3-Clause "New" or "Revised" License) 
\item Keras (Apache License).
\end{itemize}

We released our code under the MIT License.

\bibliographystyle{plainnat}
\bibliography{reference}




\end{document}